\DeclareAcronym{cli} {
    short = CLI,
    long = Command Line Interface,
    class = abbrev
}
\definecolor{my-full-blue}{HTML}{1F77B4}
\definecolor{my-full-orange}{HTML}{FF7F0E}
\definecolor{my-full-green}{HTML}{2CA02C}
\definecolor{my-full-red}{HTML}{d62728}
\definecolor{my-full-purple}{HTML}{9467bd}
\colorlet{my-blue}{my-full-blue!30}
\colorlet{my-orange}{my-full-orange!30}
\colorlet{my-green}{my-full-green!30}
\colorlet{my-red}{my-full-red!30}
\colorlet{my-purple}{my-full-purple!30}
\definecolor{ckeyword}{HTML}{7F0055}
\definecolor{ccomment}{HTML}{3F7F5F}
\definecolor{cstring}{HTML}{2A0099}
\lstdefinestyle{numbers}{
	numbers=left,
	framexleftmargin=20pt,
	numberstyle=\tiny,
	firstnumber=auto,
	numbersep=1em,
	xleftmargin=2em
}
\lstdefinestyle{layout}{
	frame=none,
	captionpos=b,
}
\lstdefinestyle{comment-style}{
	morecomment=[l]//,
	morecomment=[s]{/*}{*/},
	commentstyle={\color{ccomment}\itshape},
}
\lstdefinestyle{string-style}{
	morestring=[b]",%
	morestring=[b]',%
	stringstyle={\color{cstring}},
	showstringspaces=false,%
}
\lstdefinestyle{keyword-style}{
	keywordstyle={\ttfamily\bfseries},
	morekeywords={
		function,
		constructor,
		int,
		bool,
		return,
		returns,
		uint
	},
	morekeywords = [2]{},
	keywordstyle = [2]{\text},
	sensitive=true,
}
\lstdefinestyle{input-encoding}{
	inputencoding=utf8,
	extendedchars=true,
	literate=
	{ℝ}{$\reals$}1%
	{→}{$\rightarrow$}1%
	{α}{$\alpha$}1%
	{β}{$\beta$}1%
	{λ}{$\lambda$}1%
	{θ}{$\theta$}1%
	{ϕ}{$\phi$}1%
}
\lstdefinestyle{escaping}{
	moredelim={**[is][\color{blue}]{\%}{\%}},
	escapechar=|,
	mathescape=true
}
\lstdefinestyle{default-style}{
	basicstyle=\fontencoding{T1}\ttfamily\footnotesize,
	style=numbers,
	style=layout,
	style=comment-style,
	style=string-style,
	style=keyword-style,
	style=input-encoding,
	style=escaping,
	tabsize=2,
	upquote=true
}
\lstdefinelanguage{BASIC}{
	language=C++,
	style=default-style
}[keywords,comments,strings]%
\newcommand{\crefrangeconjunction}{--}
\crefname{listing}{Lst.}{listings}
\crefname{line}{Lin.}{Lin.}
\crefname{appendix}{App.}{App.}
\newcommand{\app}[1]{%
	\ifbool{includeappendix}{\cref{#1}}{the appendix}%
}
\newcommand{\App}[1]{%
	\ifbool{includeappendix}{\cref{#1}}{The appendix}%
}
\def\eqref#1{equation~\ref{#1}}
\def\ceil#1{\lceil #1 \rceil}
\def\1{\bm{1}}
\def\eps{{\epsilon}}
\def\vc{{\bm{c}}}
\def\vf{{\bm{f}}}
\def\vx{{\bm{x}}}
\def\vy{{\bm{y}}}
\def\vz{{\bm{z}}}
\def\mD{{\bm{D}}}
\def\mI{{\bm{I}}}
\DeclareMathAlphabet{\mathsfit}{\encodingdefault}{\sfdefault}{m}{sl}
\SetMathAlphabet{\mathsfit}{bold}{\encodingdefault}{\sfdefault}{bx}{n}
\newcommand{\E}{\mathbb{E}}
\newcommand{\R}{\mathbb{R}}
\newcommand{\Var}{\mathrm{Var}}
\newcommand{\Cov}{\mathrm{Cov}}
\DeclareMathOperator*{\argmax}{arg\,max}
\theoremstyle{plain} %
\newtheorem{theorem}{Theorem}[section]
\newtheorem*{theorem*}{Theorem}
\newcommand{\nens}{k}
\newcommand{\nclass}{m}
\newcommand{\idxclass}{q}
\newcommand{\bc}[1]{\mathcal{#1}}
\newcommand{\mbf}[1]{\mathbf{#1}}
\newcommand{\prob}{\mathcal{P}}
\newcommand{\N}{\mathbb{N}}
\newcommand{\RN}{\texttt{ResNet}\xspace}
\newcommand{\RNS}{\texttt{ResNet20}\xspace}
\newcommand{\RNM}{\texttt{ResNet50}\xspace}
\newcommand{\RNB}{\texttt{ResNet110}\xspace}
\newcommand{\cifar}{CIFAR10\xspace}
\newcommand{\IN}{ImageNet\xspace}
\renewcommand{\N}{\mathcal{N}}
\newcommand{\smoothadv}{\textsc{SmoothAdv}\xspace}
\newcommand{\macer}{\textsc{Macer}\xspace}
\newcommand{\consistency}{\textsc{Consistency}\xspace}
\newcommand{\cohen}{\textsc{Gaussian}\xspace}
\newcommand{\abstain}{{\ensuremath{\oslash}}\xspace}
\def\certadpraw{CertifyAdp}
\newcommand{\certadp}{\textsc{\certadpraw}\xspace}
\newcommand{\kcons}{$K$-Consensus aggregation\xspace}
\let\mod\relax
\DeclareMathOperator{\mod}{mod}
\newcommand\pig[1]{\scalerel*[5pt]{\big#1}{%
		\ensurestackMath{\addstackgap[1.5pt]{\big#1}}}}
\title{Boosting Randomized Smoothing with \\ Variance Reduced Classifiers}
\author{%
	Miklós Z. Horváth, Mark Niklas Müller, Marc Fischer, Martin Vechev\\
	Department of Computer Science\\
	ETH Zurich, Switzerland\\
	\texttt{mihorvat@ethz.ch}, \texttt{\{mark.mueller,marc.fischer,martin.vechev\}@inf.ethz.ch}
}
\begin{document}

\maketitle

\begin{abstract}
Randomized Smoothing (RS) is a promising method for obtaining robustness certificates by evaluating a base model under noise. In this work, we: (i) theoretically motivate why ensembles are a particularly suitable choice as base models for RS, and (ii) empirically confirm this choice, obtaining state-of-the-art results in multiple settings. 
The key insight of our work is that the reduced variance of ensembles over the perturbations introduced in RS leads to significantly more consistent classifications for a given input. This, in turn, leads to substantially increased certifiable radii for samples close to the decision boundary. Additionally, we introduce key optimizations which enable an up to 55-fold decrease in sample complexity of RS for predetermined radii, thus drastically reducing its computational overhead. Experimentally, we show that ensembles of only 3 to 10 classifiers consistently improve on their strongest constituting model with respect to their average certified radius (ACR) by 5\% to 21\% on both \cifar and \IN, achieving a new  state-of-the-art ACR of $0.86$ and $1.11$, respectively.
We release all code and models required to reproduce our results at \url{https://github.com/eth-sri/smoothing-ensembles}.

\end{abstract}

\section{Introduction}
\label{sec:introduction}

Modern deep neural networks are successfully applied to an ever-increasing range of applications. However, while they often achieve excellent accuracy on the data distribution they were trained on, they have been shown to be very sensitive to slightly perturbed inputs, called adversarial examples \citep{BiggioCMNSLGR13, szegedy2013intriguing}. This limits their applicability to safety-critical domains. Heuristic defenses against this vulnerability have been shown to be breakable \citep{carlini2017towards,tramer2020adaptive}, highlighting the need for provable robustness guarantees.

A promising method providing such guarantees for large networks is Randomized Smoothing (RS) \citep{CohenRK19}. The core idea is to provide probabilistic robustness guarantees with arbitrarily high confidence by adding noise to the input of a base classifier and computing the expected classification over the perturbed inputs using Monte Carlo sampling.
The key to obtaining high robust accuracies is a base classifier that remains consistently accurate even under high levels of noise, i.e., has low variance with respect to these perturbations.
Existing works use different regularization and loss terms to encourage such behavior \citep{salman2019provably, zhai2020macer,jeong2020consistency} but are ultimately all limited by the bias-variance trade-off of individual models.
We show first theoretically and then empirically how ensembles can be constructed to significantly reduce this variance component and thereby increase certified radii for individual samples, and consequently, certified accuracy across the whole dataset. We illustrate this in \cref{main_illustration}.

Ensembles are a well-known tool for reducing classifier variance at the cost of increased computational cost \citep{hansen1990neural}.
However, in the face of modern architectures \citep{He_2016_CVPR,huang2017densely} allowing the stable training of large models, they have been considered computationally inefficient.
Yet, recent work \citep{wasay2021more} shows ensembles to be more efficient than single monolithic networks in many regimes.
In light of this, we develop a theoretical framework analyzing this variance reducing property of ensembles under the perturbations introduced by RS. Further, we show how this reduced variance can significantly increase the majority class's prediction probability, leading to much larger certified radii than evaluating more perturbations with an individual model.

Certification with RS is computationally costly as the base classifier has to be evaluated many thousands of times. To avoid exacerbating these costs by using ensembles as base models, we develop two techniques: (i) an adaptive sampling scheme for RS, which certifies samples for predetermined certification radii in stages, reducing the mean certification time up to 55-fold, and (ii) a special aggregation mechanism for ensembles which only evaluates the full ensemble on challenging samples, for which there is no consensus between a predefined subset of the constituting models.

\textbf{Main Contributions} Our key contributions are:
\begin{itemize}
	\item A novel, theoretically motivated, and statistically sound soft-ensemble scheme for Randomized Smoothing, reducing perturbation variance and increasing certified radii (\cref{sec:ensemble,sec:theory_var}).
	\item A data-dependent adaptive sampling scheme for RS that reduces the sample complexity for predetermined certification radii in a statistically sound manner (\cref{sec:adaptive}).
	\item An extensive evaluation, examining the effects and interactions of ensemble size, training method, and perturbation size. We obtain state-of-the-art results on \IN and \cifar for a wide range of settings, including denoised smoothing (\cref{sec:eval}).
\end{itemize}

\definecolor{blue}{RGB}{123, 165, 255}
\definecolor{green}{RGB}{126, 255, 118}
\definecolor{red}{RGB}{255, 118, 118}

\begin{figure}[t]
    \begin{center}
    
    \scalebox{0.95}{
    \begin{subfigure}{0.33\textwidth}
    \scalebox{0.66}{
    \begin{tikzpicture}
        \node[inner sep=0pt] {\includegraphics[width=1.0\linewidth]{./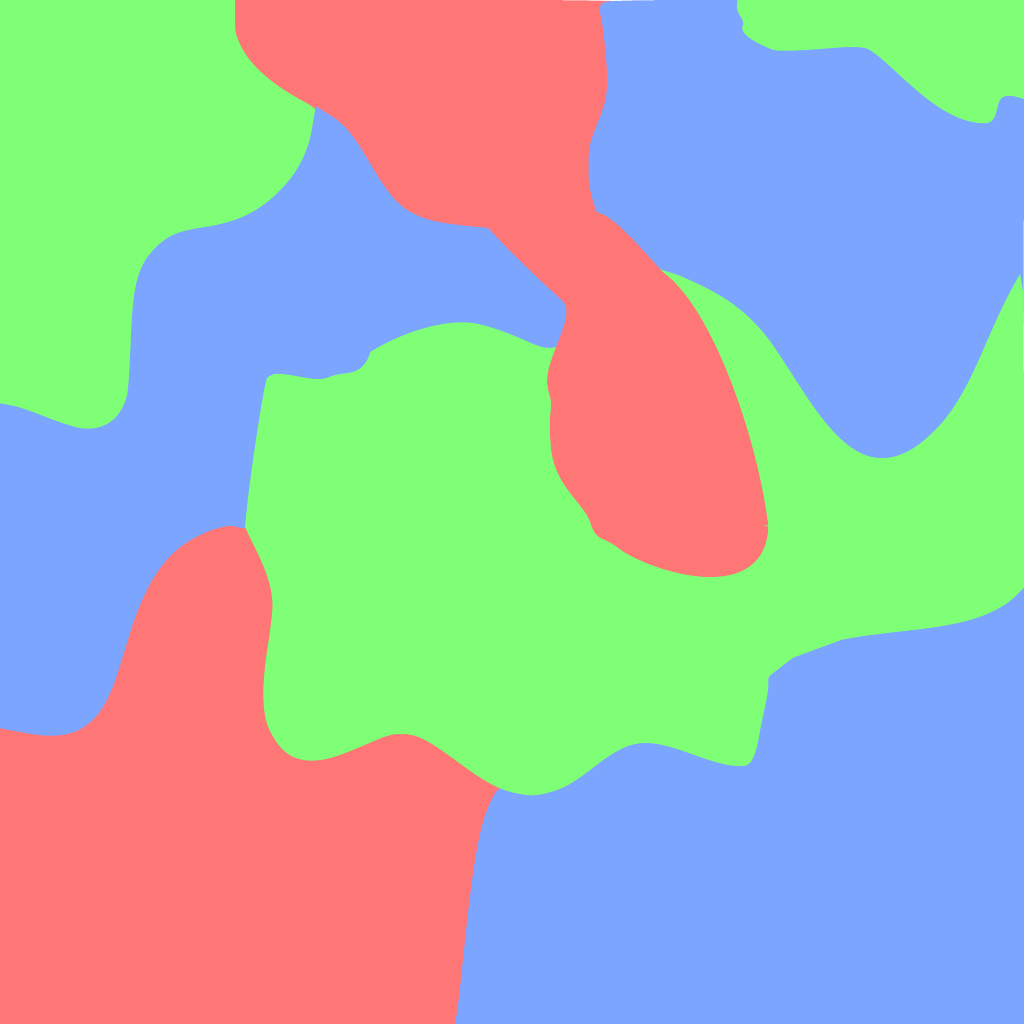}};
        \draw[line width=0.2mm, dashed] (-0.0, 0) circle (0.58);
        \draw[line width=0.2mm, dashed] (-0.0, 0) circle (1.16);
        \draw[line width=0.2mm, dashed] (-0.0, 0) circle (1.74);
        \fill[black] (-0.0, 0) circle (.05);
        \node at (0, -0.2) {$\vx$};
    \end{tikzpicture}
    }
    \scalebox{0.33}{
    \begin{tikzpicture}
        \filldraw[fill={green}, fill opacity = 1.0, draw=none] (0,0) rectangle (1.0,4.8);
        \filldraw[fill={red}, fill opacity = 1.0, draw=none] (1.0,0) rectangle (2.0, 2.8);
        \filldraw[fill={blue}, fill opacity = 1.0, draw=none] (2.0,0) rectangle (3.0, 2.4);
        \draw [line width=0.4mm, dashed] (0, 4.3) -- (3.0, 4.3);
        \draw [line width=0.4mm, dashed] (0, 3.05) -- (3.0, 3.05);
        \node at (2.5, 4.8) {\huge $\underline{p_A}$};
        \node at (2.5, 3.55) {\huge $\overline{p_B}$};
    \end{tikzpicture}
    }
    \caption{Individual Model $1\qquad\qquad\;\;$}
    \label{main_illustration_1}
    \end{subfigure}
    }
    \scalebox{0.95}{
    \begin{subfigure}{0.33\textwidth}
    \scalebox{0.66}{
    \begin{tikzpicture}
        \node[inner sep=0pt] {\includegraphics[width=1.0\linewidth]{./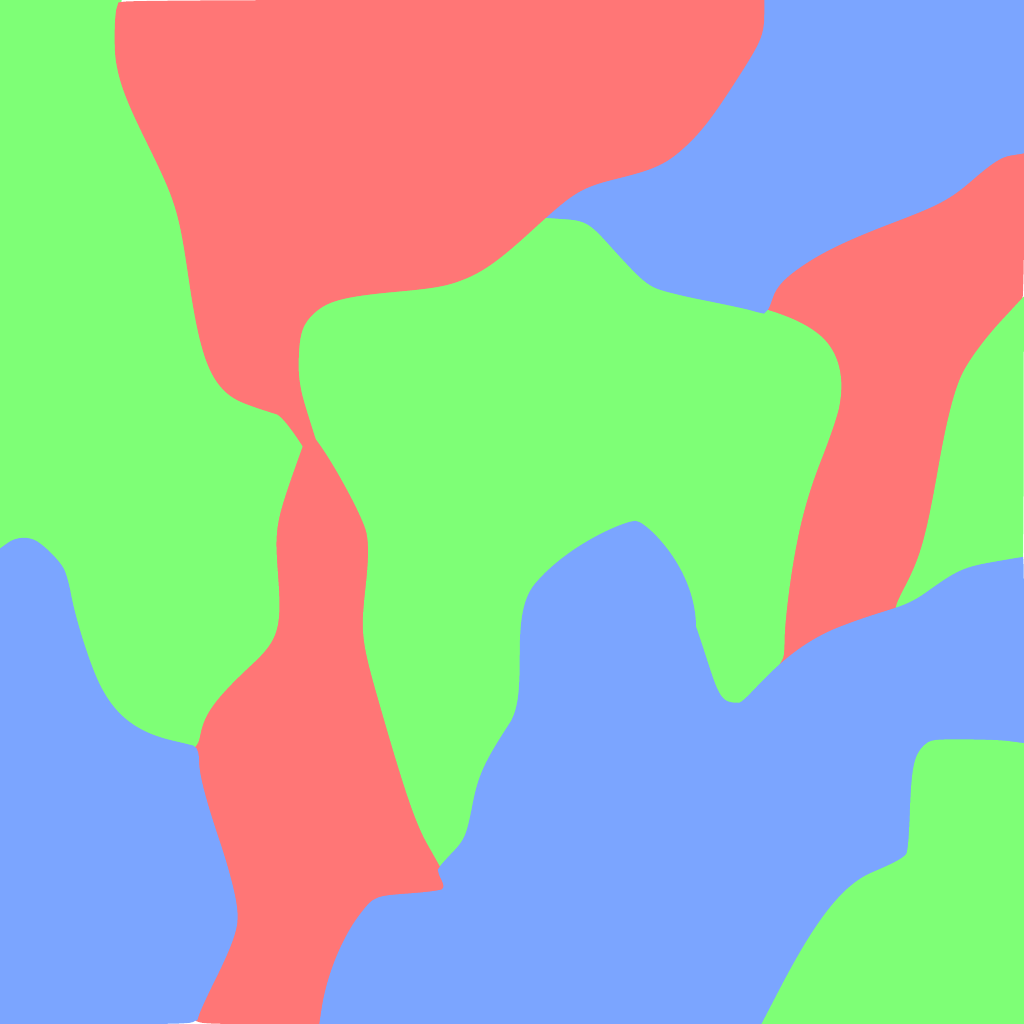}};
        \draw[line width=0.2mm, dashed] (-0.0, 0) circle (0.58);
        \draw[line width=0.2mm, dashed] (-0.0, 0) circle (1.16);
        \draw[line width=0.2mm, dashed] (-0.0, 0) circle (1.74);
        \fill[black] (-0.0, 0) circle (.05);
        \node at (0, -0.2) {$\vx$};
    \end{tikzpicture}
    }
    \scalebox{0.33}{
    \begin{tikzpicture}
        \filldraw[fill={green}, fill opacity = 1.0, draw=none] (0,0) rectangle (1.0,4.7);
        \filldraw[fill={blue}, fill opacity = 1.0, draw=none] (1.0,0) rectangle (2.0, 2.9);
        \filldraw[fill={red}, fill opacity = 1.0, draw=none] (2.0,0) rectangle (3.0, 2.4);
        \draw [line width=0.4mm, dashed] (0, 4.25) -- (3.0, 4.25);
        \draw [line width=0.4mm, dashed] (0, 3.2) -- (3.0, 3.2);
        \node at (2.5, 4.75) {\huge $\underline{p_A}$};
        \node at (2.5, 3.7) {\huge $\overline{p_B}$};
    \end{tikzpicture}
    }
    \caption{Individual Model $2\qquad\qquad\;\;$}
    \label{main_illustration_2}
    \end{subfigure}
    }
    \scalebox{0.95}{
    \begin{subfigure}{0.33\textwidth}
    \scalebox{0.66}{
    \begin{tikzpicture}
        \node[inner sep=0pt] {\includegraphics[width=1.0\linewidth]{./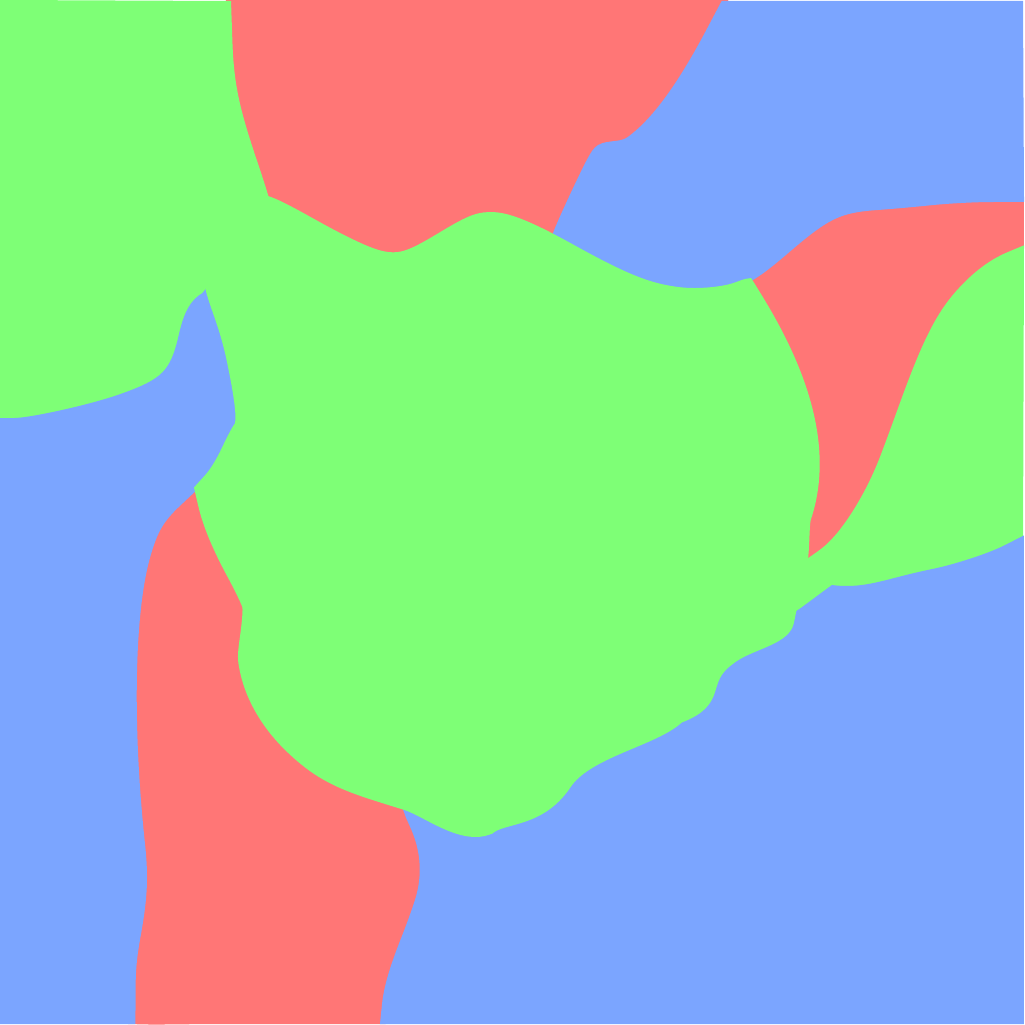}};
        \draw[line width=0.2mm, dashed] (-0.0, 0) circle (0.58);
        \draw[line width=0.2mm, dashed] (-0.0, 0) circle (1.16);
        \draw[line width=0.2mm, dashed] (-0.0, 0) circle (1.74);
        \fill[black] (-0.0, 0) circle (.05);
        \node at (0, -0.2) {$\vx$};
    \end{tikzpicture}
    }
    \scalebox{0.33}{
    \begin{tikzpicture}
        \filldraw[fill={green}, fill opacity = 1.0, draw=none] (0,0) rectangle (1.0,8);
        \filldraw[fill={blue}, fill opacity = 1.0, draw=none] (1.0,0) rectangle (2.0, 1.1);
        \filldraw[fill={red}, fill opacity = 1.0, draw=none] (2.0,0) rectangle (3.0, 0.9);
        \draw [line width=0.4mm, dashed] (0.0, 7.3) -- (3.0, 7.3);
        \draw [line width=0.4mm, dashed] (0.0, 1.2) -- (3.0, 1.25);
        \node at (2.5, 7.8) {\huge $\underline{p_A}$};
        \node at (2.5, 1.75) {\huge $\overline{p_B}$};
    \end{tikzpicture}
    }
    \caption{Ensemble $\qquad\qquad\;\;$}
    \label{main_illustration_ensemble}
    \end{subfigure}
    }

    \vspace{-1.5mm}
    \caption{Illustration of the prediction landscape of base models $F$ where colors represent classes. The bars show the class probabilities of the corresponding smoothed classifiers $\E_{\eps\sim\N(0,\sigma_{\epsilon}\mI)}F(\vx+\epsilon)$, i.e., the Gaussian weighted average around $\vx$ (dashed circles show the level sets of $\N(\vx,\sigma_{\epsilon}\mI)$).
	The individual models in (a) and (b) predict the same class for $\vx$ as their ensemble in (c). However, the ensemble's lower bound on the majority class' probability $\underline{p_A}$ is increased while its upper bound on the runner-up class' probability $\overline{p_B}$  is decreased, leading to improved robustness through RS.
    }
    \label{main_illustration}
    \end{center}
    \vspace{-5mm}
    \end{figure}

\section{Related Work}
\label{sec:related}

\vspace{-1.5mm}
\paragraph{Adversarial Robustness}
Following the discovery of adversarial examples \citep{BiggioCMNSLGR13, szegedy2013intriguing}, adversarial defenses aiming to robustify networks were proposed \citep{madry2017towards}. %
Particularly relevant to this work are approaches that certify or enforce robustness properties. We distinguish deterministic and probabilistic methods.
Deterministic certification methods compute the reachable set for given input specifications using convex relaxations \citep{singh2019abstract,xu2020automatic}, %
mixed-integer linear programming \citep{tjeng2017evaluating}, semidefinite programming \citep{ dathathri2020enabling}, or SMT \citep{Ehlers17}, to reason about properties of the output.
To obtain networks amenable to such approaches, specialized training methods have been proposed
\citep{MirmanGV18, BalunovicV20, xu2020automatic}.
Probabilistic certification \citep{LiCWC19, LecuyerAG0J19} introduces noise to the classification process to obtain probabilistic robustness guarantees, allowing the certification of larger models than deterministic methods.
We review Randomized Smoothing (RS) \citep{CohenRK19} in \cref{sec:background} and associated training methods \citep{jeong2020consistency,zhai2020macer,salman2019provably} in \cref{sec:train-rand-smooth}. Orthogonally to training, RS has been extended in numerous ways \citep{LeeYCJ19,DvijothamHBKQGX20}, which we review in 
\cref{sec:additional_related_work}.

\vspace{-1mm}
\paragraph{Ensembles} Ensembles have been extensively analyzed with respect to different aggregation methods \citep{kittler1998combining,Inoue19}, diversification \citep{dietterich2000ensemble}, and the reduction of generalization errors \citep{TumerG96,tumer1996analysis}.
Randomized Smoothing and ensembles were first combined in \citet{liu2020enhancing} as ensembles of smoothed classifiers. However, the method does not retain strong certificates for individual inputs; thus, we consider the work to be in a different setting from ours. We discuss this in \cref{sec:comparison_sween}. While similar at first glance, \citet{qin2021dynamic} randomly sample models to an ensemble, evaluating them under noise to obtain an empirical defense against adversarial attacks. They, however, do not provide robustness guarantees.

\section{Randomized Smoothing} \label{sec:background}

Here, we review the relevant background on Randomized Smoothing (RS) as introduced in \citet{CohenRK19}.
We let $f \colon \R^d \mapsto \R^{\nclass}$ denote a base classifier that takes a $d$-dimensional input and produces $\nclass$ numerical scores (pre-softmax logits), one for each class. Further, we let $F(\vx) := \argmax_{\idxclass} f_{\idxclass}(\vx)$ denote a function $\R^d \mapsto [1, \dots, \nclass]$ that directly outputs the class with the highest score.

For a random variable $\epsilon \sim \bc{N}(0, \sigma_{\epsilon}^2 \mI)$, we define a smoothed classifier $G \colon \R^d \mapsto [1, \dots, \nclass]$ as
\begin{equation}
  \label{eq:g}
    G(\vx) := \argmax_c \prob_{\epsilon \sim \bc{N}
        (0, \sigma_{\epsilon}^2 \mI)}(F(\vx + \epsilon) = c).
\end{equation}
This classifier $G$ is then robust to adversarial perturbations as follows:
\begin{theorem}[From \citet{CohenRK19}] \label{thm:original}
    Let $c_A \in [1, \dots, \nclass]$, $\underline{p_A}, \overline{p_B} \in [0,1]$. If
    \begin{equation*}
        \prob_{\epsilon}(F(\vx + \epsilon) = c_A)
        \geq
        \underline{p_A}
        \geq
        \overline{p_B}
        \geq
        \max_{c \neq c_A}\prob_{\epsilon}(F(\vx + \epsilon) =c),
    \end{equation*}
    then $G(\vx + \delta) = c_A$ for all $\delta$ satisfying $\|\delta\|_2 < R$
    with $R := \tfrac{\sigma_{\epsilon}}{2}(\Phi^{-1}(\underline{p_A}) - \Phi^{-1}(\overline{p_B}))$.
\end{theorem}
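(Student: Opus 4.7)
The plan is to reproduce Cohen et al.'s proof, whose engine is the Neyman--Pearson lemma applied to the pair of isotropic Gaussians $\mathcal{N}(\vx,\sigma_\epsilon^2\mI)$ and $\mathcal{N}(\vx+\delta,\sigma_\epsilon^2\mI)$. Fix any $\delta$ with $\|\delta\|_2 < R$. To show $G(\vx+\delta)=c_A$, it is enough to certify that for every runner-up class $c\neq c_A$,
\[
\prob_{\epsilon}(F(\vx+\delta+\epsilon)=c_A) \;>\; \prob_{\epsilon}(F(\vx+\delta+\epsilon)=c).
\]
Thus I would lower-bound the left-hand side and upper-bound the right-hand side under the \emph{worst-case} base classifier consistent with the hypotheses $\prob_\epsilon(F(\vx+\epsilon)=c_A)\ge \underline{p_A}$ and $\prob_\epsilon(F(\vx+\epsilon)=c)\le \overline{p_B}$.

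The core step is the following Neyman--Pearson-type lemma specialized to shifted isotropic Gaussians: among all measurable sets $A\subseteq\R^d$ with $\prob_\epsilon(\vx+\epsilon\in A)=p$, the quantity $\prob_\epsilon(\vx+\delta+\epsilon\in A)$ is minimized by a halfspace of the form $\{\vz:\delta^\top(\vz-\vx)\le \sigma_\epsilon\|\delta\|_2\,\Phi^{-1}(p)\}$, because the likelihood ratio $\tfrac{d\mathcal{N}(\vx+\delta,\sigma_\epsilon^2\mI)}{d\mathcal{N}(\vx,\sigma_\epsilon^2\mI)}(\vz)$ is monotone in $\delta^\top(\vz-\vx)$. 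A direct computation on this halfspace then shows the minimum equals $\Phi(\Phi^{-1}(p)-\|\delta\|_2/\sigma_\epsilon)$. A symmetric argument (maximizing instead of minimizing) gives $\Phi(\Phi^{-1}(p)+\|\delta\|_2/\sigma_\epsilon)$.

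Applying these two bounds to the set $\{\vz:F(\vz)=c_A\}$ with $p=\underline{p_A}$, and to each $\{\vz:F(\vz)=c\}$ with $p=\overline{p_B}$, I obtain
\[
\prob_\epsilon(F(\vx+\delta+\epsilon)=c_A)\ge \Phi\!\left(\Phi^{-1}(\underline{p_A})-\tfrac{\|\delta\|_2}{\sigma_\epsilon}\right), \quad
\prob_\epsilon(F(\vx+\delta+\epsilon)=c)\le \Phi\!\left(\Phi^{-1}(\overline{p_B})+\tfrac{\|\delta\|_2}{\sigma_\epsilon}\right).
\]
Since $\Phi$ is strictly increasing, the desired strict inequality becomes $\Phi^{-1}(\underline{p_A})-\|\delta\|_2/\sigma_\epsilon > \Phi^{-1}(\overline{p_B})+\|\delta\|_2/\sigma_\epsilon$, i.e., $\|\delta\|_2 < \tfrac{\sigma_\epsilon}{2}\bigl(\Phi^{-1}(\underline{p_A})-\Phi^{-1}(\overline{p_B})\bigr)=R$, which holds by assumption.

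The main obstacle is the Neyman--Pearson step: one has to argue cleanly that halfspaces orthogonal to $\delta$ are the extremal sets for the shifted-Gaussian pair, and then evaluate the Gaussian mass of a halfspace at the right offset to get the clean $\Phi(\Phi^{-1}(p)\pm\|\delta\|_2/\sigma_\epsilon)$ form. Edge cases (ties, measure-zero boundaries, the multi-class bookkeeping that the runner-up bound $\overline{p_B}$ is applied separately to each $c\neq c_A$ rather than to their union) are routine once the halfspace characterization is in hand.
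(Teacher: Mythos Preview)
Your proposal is correct and faithfully reproduces the Neyman--Pearson argument of \citet{CohenRK19}. Note, however, that the present paper does not itself prove \cref{thm:original}: it is quoted verbatim as background from \citet{CohenRK19} and no proof appears anywhere in the text or appendix, so there is nothing in the paper to compare your argument against beyond the original reference you are already following.
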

\begin{wrapfigure}[10]{r}{0.48\textwidth}
  \vspace{-7.5mm}
\scalebox{0.9}{
		\begin{minipage}{1.08\linewidth}
  \algrenewcommand\algorithmicindent{0.5em}
			\begin{algorithm}[H]
				\caption{Certify from \citep{CohenRK19}}
				\label{alg:certify}
				\begin{algorithmic}[1]
					\Function{Certify}{$F, \sigma_{\epsilon}, x, n_0, n, \alpha$}
					\State $\texttt{cnts0} \leftarrow \textsc{SampleUnderNoise}(F, x, n_0, \sigma_{\epsilon})$
					\State $\hat{c}_A \leftarrow$ top index in \texttt{cnts0}
					\State $\texttt{cnts} \leftarrow \textsc{SampleUnderNoise}(F, x, n, \sigma_{\epsilon})$
					\State $\underline{p_A} \leftarrow \textsc{LowerConfBnd}$($\texttt{cnts}[\hat{c}_A],n,1 - \alpha$)
					\If{$\underline{p_A} > \frac{1}{2}$}
					\State \textbf{return} prediction $\hat{c}_A$ and radius $\sigma_{\epsilon} \Phi^{-1}(\underline{p_A}$)
					\EndIf
					\State \textbf{return} \abstain
					\EndFunction
			\end{algorithmic}
		\end{algorithm}
		\end{minipage}
	}
\end{wrapfigure}

Here, $\Phi^{-1}$ denotes the inverse Gaussian CDF.
Computing the exact probabilities ${\prob_\epsilon(F(\vx + \epsilon)=c)}$ is generally intractable.
Thus, to allow practical application, \textsc{Certify} \citep{CohenRK19}  (see \cref{alg:certify}) utilizes sampling:
First, $n_0$ samples to determine the majority class, then $n$ samples to compute a lower bound $\underline{p_A}$ to the success probability with confidence $1-\alpha$ via the Clopper-Pearson lemma \citep{clopper34confidence}.
If $\underline{p_A} > 0.5$, we set $\overline{p_{B}} = 1 - \underline{p_A}$ and obtain radius $R = \sigma_{\epsilon} \Phi^{-1}(\underline{p_A})$ via \cref{thm:original}, else we abstain (return \abstain). See \cref{sec:background_appendix} for exact definitions of the sampling and lower bounding procedures.

To obtain high certified radii, the base model $F$ has to be trained to cope with the added Gaussian noise $\epsilon$. To achieve this, several training methods%
, discussed in \cref{sec:train-rand-smooth}, have been introduced.

We also see this in \cref{main_illustration}, where various models obtain different $\underline{p_A}$, and thereby different radii $R$.

\section{Randomized Smoothing for Ensemble Classifiers} \label{sec:ensemble}
In this section, we extend the methods discussed in \cref{sec:background} from single models to ensembles.%

For a set of $k$ classifiers $\{f^l \colon \R^d \mapsto \R^{\nclass}\}_{l=1}^{k}$, we construct an ensemble $\bar{f}$ via weighted aggregation,
$\bar{f}(\vx) = \sum_{l=1}^{k} w^l \, \gamma(f^l(\vx))$,
where $w^l$ are the weights, $\gamma \colon \R^{\nclass} \mapsto \R^{\nclass}$ is a post-processing function, and $f^l(\vx)$ are the pre-softmax outputs of an individual model.
Soft-voting (where $\gamma$ denotes identity) and equal weights $w^{l} = \tfrac{1}{k}$ perform experimentally well (see \cref{sec:exp_aggregation}) while being mathematically simple.
Thus, we consider soft-ensembling via the averaging of the logits:
\begin{equation} \label{eq:ensemble}
\bar{f}(\vx) = \frac{1}{k} \sum_{l=1}^{k} f^l(\vx)
\end{equation}
The ensemble $\bar{f}$ and its corresponding hard-classifier $\bar{F}(\vx) := \argmax_{\idxclass} \bar{f}_{\idxclass}(\vx)$, can be used without further modification as base classifiers for RS.
We find that classifiers of identical architecture and trained with the same method but different random seeds are sufficiently diverse to, when ensembled with $k \in [3,50]$, exhibit a notably reduced variance with respect to the perturbations $\epsilon$. As we will show in the next section, this increases both the true majority class probability $p_A$ and its lower confidence bound $\underline{p_A}$, raising the certified radius as per \cref{thm:original}.

\section{Variance Reduction via Ensembles for Randomized Smoothing}\label{sec:theory_var}
We now show how ensembling even similar classifiers $f^l$ (cf. \cref{eq:ensemble}) reduces the variance over the perturbations introduced in RS significantly, thereby increasing the majority class probability $p_A$ of resulting ensemble $\bar{f}$ and making it particularly well-suited as a base classifier for RS. To this end, we first model network outputs with \emph{general distributions} before investigating our theory empirically for Gaussian distributions.
We defer algebra and empirical justifications to \cref{sec:math-deriv,app:theory_val}, respectively.

\paragraph{Individual Classifier}
We consider individual classifiers $f^{l} \colon \R^d \mapsto \R^{\nclass}$ and perturbed inputs $\vx + \epsilon$ for a \emph{single} arbitrary but fixed $\vx$ and Gaussian perturbations $\epsilon \sim \bc{N}(0, \sigma_{\epsilon}^2 \mI)$.
We model the pre-softmax logits $f^{l}(\vx) =: \vy^{l} \in \R^{\nclass}$ as the sum of two random variables
$\vy^l = \vy^l_p + \vy^l_c$. Here, $\vy^l_c$ corresponds to the classifier's behavior on the unperturbed sample and models the stochasticity in weight initialization and training with random noise augmentation. $\vy^l_p$ describes the effect of the random perturbations $\epsilon$ applied during RS. 
Note that this split will become essential when analyzing the ensembles. We drop the superscript $l$ when discussing an individual classifier to avoid clutter.

We model the distribution of the clean component $\vy_c$ over classifiers with mean \mbox{$\vc = \E_l[\vf^l(\vx)]$}, the expectation for a fixed sample $\vx$ over the randomness in the training process, and covariance $\Sigma_c \in \R^{\nclass \times \nclass}$, characterizing this randomness.
We assume the distribution of the perturbation effect $\vy_p$ to be zero-mean (following from local linearization and zero mean perturbations) and to have covariance $\Sigma_{p}  \in \R^{\nclass \times \nclass}$. While $\Sigma_{p}$ might depend on the noise level $\sigma_\eps$, it is distinct from it.
We do not restrict the structure of either covariance matrix and denote $\Sigma_{ii} = \sigma_{i}^{2}$ and $\Sigma_{ij} = \sigma_{i} \sigma_{j} \rho_{ij}$, for standard deviations $\sigma_i$ and correlations $\rho_{ij}$.
As $\vy_c$ models the global training effects and  $\vy_p$ models the local behavior under small perturbations, we assume them to be independent.
We thus obtain logits $\vy$ with mean $\vc$,
and covariance matrix $\Sigma = \Sigma_{c} + \Sigma_{p}$.%

A classifier's prediction $F^{l}(\vx) = \argmax_{\idxclass} y_{\idxclass}$ is not determined by the absolute values of its logits but rather by the differences between them.
Thus, to analyze the classifier's behavior, we consider the differences between the majority class logit and others, referring to them as classification margins.  
During certification with RS, the first step is to determine the majority class.
Without loss of generality, let the class with index $1$ be the majority class, i.e., $A=1$ in \cref{thm:original}, leading to the classification margin $z_i = y_1 - y_i$.
Note that if $z_i > 0$ for all $i \neq 1$, then the majority class logit $y_1$ is larger than the logits of all other classes $y_i$.
Under the above assumptions, the classification margin's statistics for an individual classifier are:
\begin{align*}%
\E[z_i] &= c_1 - c_i\\
\Var[z_i] &= \sigma^2_{p,1} + \sigma^2_{p,i} +\sigma^2_{c,1} + \sigma^2_{c,i} - 2 \rho_{p,1i} \sigma_{p,1} \sigma_{p,i} - 2 \rho_{c,1i} \sigma_{c,1} \sigma_{c,i}.
\end{align*}

\paragraph{Ensemble}
Now, we construct an ensemble of $\nens$ such classifiers.
We use soft-voting (cf. \cref{eq:ensemble}) to compute the ensemble output $\bar{\vy} = \frac{1}{\nens} \sum_{l=1}^\nens \vy^l$ and then the corresponding classification margins $\bar{z}_i = \bar{y}_1 - \bar{y}_i$.
We consider similar classifiers, differing only in the random seed used for training. Hence, we assume that the correlation between the logits of different classifiers has a similar structure but smaller magnitude than the correlation between logits of one classifier.
Correspondingly, we parametrize the covariance between $\vy_c^i$ and $\vy_c^j$ for classifiers $i\neq j$ with $\zeta_c \Sigma_c$ and similarly between $\vy_p^i$ and $\vy_p^j$ with $\zeta_p \Sigma_p$ for $\zeta_c, \zeta_p \in [0,1]$. Note that, as we capture the randomness introduced in the training process  with $\Sigma_{c}$, we use the same $\Sigma_c, \Sigma_p$ for each individual model.
With these correlation coefficients $\zeta_c$ and $\zeta_p$, this construction captures the range from no correlation ($\zeta = 0$) to perfect correlation ($\zeta = 1$).
By the linearity of expectation and this construction, respectively, we obtain:
\begin{align*}
	\E[\bar{z}_i] &= \E[z_i] = c_1 - c_i\\
	\Var[\bar{z}_i] &= \underbrace{\frac{k + 2 \binom{k}{2}\zeta_p}{k^2} (\sigma^2_{p,1} + \sigma^2_{p,i} - 2 \rho_{p,1i} \sigma_{p,1} \sigma_{p,i})}_{\sigma^2_p(\nens)} + \underbrace{\frac{k + 2 \binom{k}{2}\zeta_c}{k^2} (\sigma^2_{c,1} + \sigma^2_{c,i} - 2 \rho_{c,1i} \sigma_{c,1} \sigma_{c,i})}_{\sigma^2_c(\nens)}.
\end{align*}

\paragraph{Variance Reduction}
We can split $\Var[\bar{z}_i] $ into the components associated with the clean prediction $\sigma^2_c(\nens)$ and the perturbation effect $\sigma^2_p(\nens)$, both as functions of the ensemble size $\nens$.
We now compare these variance terms independently to the corresponding terms of an individual classifier
dropping the subscripts $p$ and $c$ from $\sigma^2_c(\nens)/\sigma^2_c(1)$ and $\sigma^2_p(\nens)/\sigma^2_p(1)$ as they follow the same structure:
\begin{align}\label{eqn:var_reduction}
\frac{\sigma^2_{}(\nens)}{\sigma^2_{}(1)} &= \frac{(1 + \zeta_{} (k-1))(\sigma^2_{1} + \sigma^2_{i}- 2 \rho_{1i} \sigma_{1} \sigma_{i})}{k (\sigma^2_{1} + \sigma^2_{i}- 2 \rho_{1i} \sigma_{1} \sigma_{i})}  = \frac{1 + \zeta_{} (\nens-1)}{\nens} \xrightarrow[]{k \to \infty} \zeta_{}
\end{align}
We observe that both variance component ratios go towards their corresponding correlation coefficients $\zeta_c$ and $\zeta_p$ as ensemble size grows. As we now proceed to show (see \cref{fig:class_margin_distr}, explained later), this corresponds to an increase in success probability $p_1$ and thereby certified radius.

\begin{figure}[t]
	\centering
	\makebox[\textwidth][c]{
		\centering
		\begin{subfigure}[t]{0.33\textwidth}
			\centering
			\includegraphics[width=\textwidth]{./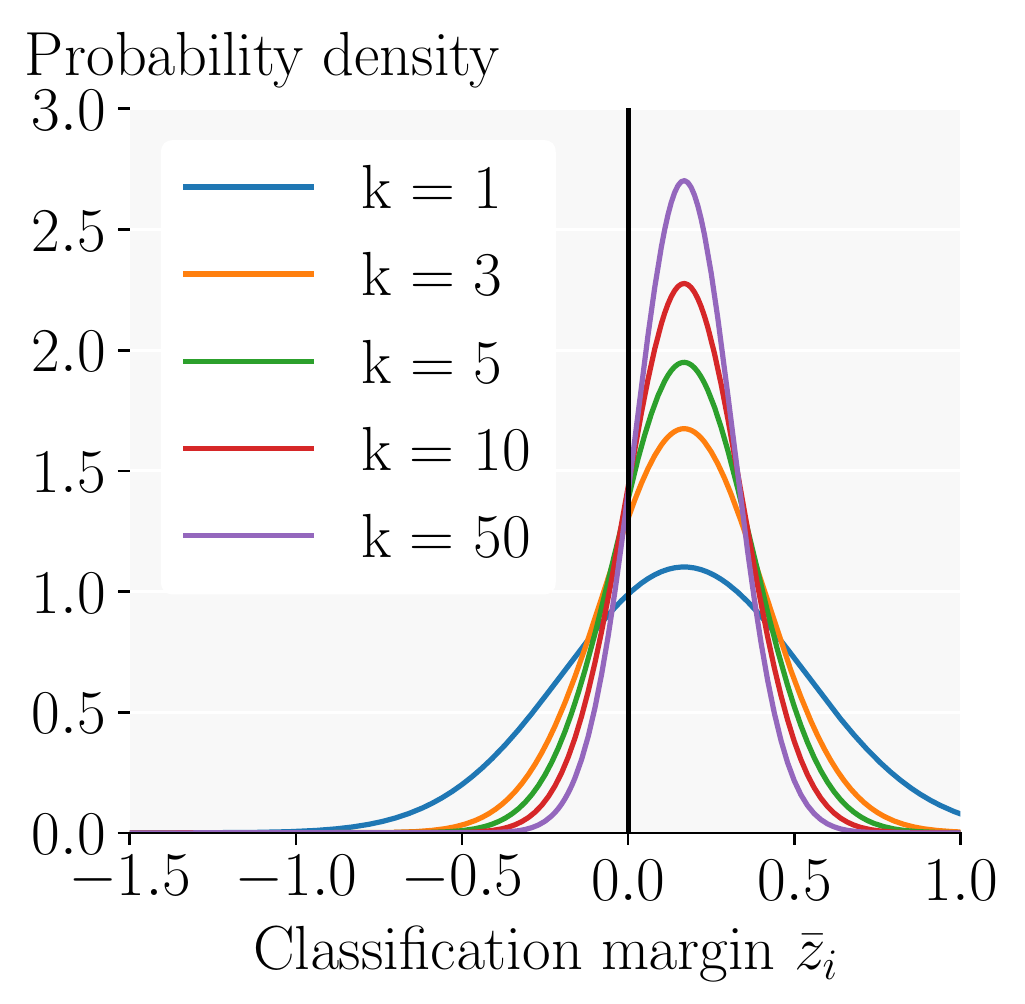}
			\vspace{-6mm}
			\caption{}
			\label{fig:class_margin_distr}
		\end{subfigure}
		\hfill
		\begin{subfigure}[t]{0.274\textwidth}
			\centering
			\includegraphics[width=\textwidth]{./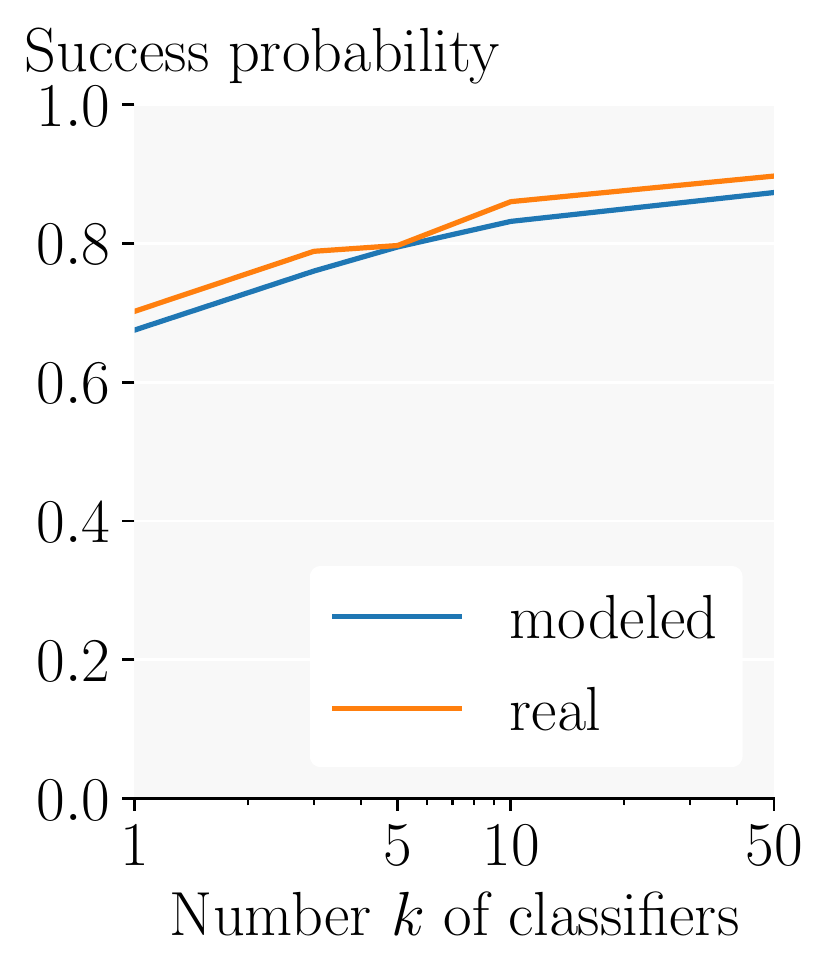}
			\vspace{-6mm}
			\caption{}
			\label{fig:p_success}
		\end{subfigure}
		\hfill
		\begin{subfigure}[t]{0.376\textwidth}
			\centering
			\includegraphics[width=\textwidth]{./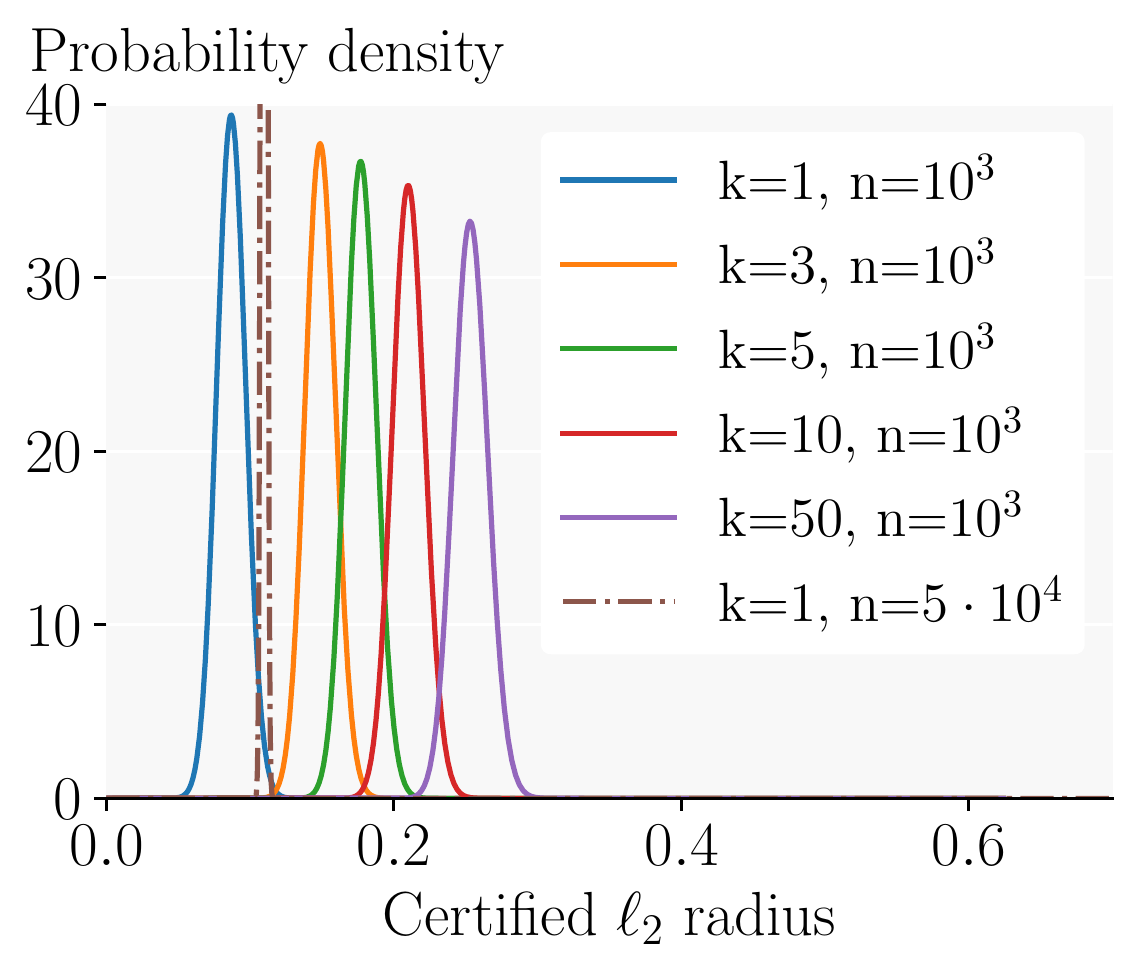}
			\vspace{-6mm}
			\caption{}
			\label{fig:cert_rad_distr}
		\end{subfigure}
	}
	\vspace{-2mm}
	\caption{Comparison of the (a) modeled distribution over the classification margin $z_i$ to the runner-up class as a Gaussian, (b) the resulting success probability $p_1$ of predicting the majority class $1$ for a given perturbation $\epsilon$ modeled via Gaussian and observed via sampling ("real"), and (c) the corresponding distribution over certified radii for ensembles of different sizes.}
	\label{fig:exp_theory}
	\vspace{-4.5mm}
\end{figure}

Our modeling approach is mathematically closely related to \citet{TumerG96}, which focuses on analyzing ensemble over the whole dataset. However, we condition on a single input to study the interplay between the stochasticity in training and random perturbations encountered in RS.

\paragraph{Effect on Success Probability}
We can compute the probability of an ensemble of $k$ classifiers predicting the majority class by integrating the probability distribution of the classification margin over the orthant\footnote{An orthant is the n-dimensional equivalent of a quadrant.} $\bar{z}_{i} > 0 \; \forall i \geq 2$ where majority class $1$ is predicted:
\begin{equation}
  \label{eq:p_corr}
p_{1} := \prob(\bar{F}(\vx + \epsilon) = 1) = \prob \left(\bar{z}_{i} > 0 : \forall\; 2 \leq i \leq m \right) =
\int\limits_{\substack{\bar{\vz} s.t. \bar{z}_i > 0,\\ \forall\, 2 \; \leq i \leq m}} \prob(\bar{\vz})\, d\bar{\vz}.
\end{equation}
Assuming Gaussian distributions for $\vz$, we observe the increase in success probability shown in \cref{fig:p_success}. Without assuming a specific distribution of $\vz$, we can still lower-bound the success probability using Chebyshev's inequality and the union bound. Given that a mean classification yields the majority class and hence $c_1 - c_i > 0$, we let $t_i =  \frac{c_1 - c_i}{\sigma_i(k)}$ and have:
\begin{equation}
	p_{1} \geq 1 - \sum_{i=2}^m \prob \big(|\bar{z}_{i} - c_1 + c_i| \geq t_i \, \sigma_i(k)\big) \geq 1  -  \sum_{i=2}^m \frac{\sigma_i(k)^2}{(c_1 - c_i)^{\,2}}%
\end{equation}
where $\sigma_i(k)^2 = \sigma_{c,i}(k)^2 + \sigma_{p,i}(k)^2$ is the variance of classification margin $\bar{z}_i$. We observe that as $\sigma_i(k)$ decreases with increasing ensemble size $k$ (see \cref{eqn:var_reduction}), the lower bound to the success probability approaches $1$ quadratically. The further we are away from the decision boundary, i.e., the larger $c_i$, the smaller the absolute increase in success probability for the same variance reduction.

Given a concrete success probability $p_{1}$, we compute the probability distribution over the certifiable radii reported by \textsc{Certify} (\cref{alg:certify}) for a given confidence $\alpha$, sample number $n$, and perturbation variance $\sigma_{\epsilon}^2$ (up to choosing an incorrect majority class $\hat{c}_{A}$) as:
\begin{equation}
  \label{eq:p_radius}
\prob\pig(R = \sigma_{\epsilon} \Phi^{-1}\big(\underline{p_1}(n_1,n,\alpha)\big)\hspace{-0.4mm}\pig) = \bc{B}(n_1,n, p_{1}), \quad \text{ for } R > 0
\end{equation}
where $\bc{B}(s,r,p)$ is the probability of drawing $s$ successes in $r$ trials from a Binomial distribution with success probability $p$, and $\underline{p}(s,r,\alpha)$ is the lower confidence bound to the success probability of a Bernoulli experiment given $s$ successes in $r$ trials with confidence $\alpha$ according to the Clopper-Pearson interval \citep{clopper34confidence}. We illustrate the resulting effect assuming Gaussian distributions in \cref{fig:cert_rad_distr}.

\paragraph{Empirical Analysis via Gaussian Assumption}
To investigate our theory, we now assume $\vy_{c}^{l}$ and $\vy_{p}^{l}$ and hence also $\bar{z}_i$ to be multivariate Gaussians.
This choice is empirically well-fitting (see \cref{app:theory_val}) and follows from the central limit theorem for $\vy_c^l$ and from Gaussian perturbations and local linearization for $\vy_p^l$. 
To estimate the free parameters $\vc$, $\Sigma_{c}$, $\Sigma_{p}$, $\zeta_{c}$, and $\zeta_{p}$, we evaluate ensembles of up to $k=50$ \cohen trained \RNS (for details, see \cref{sec:eval}) at $\sigma_{\epsilon} = 0.25$.
We obtain $\vc$ and $\Sigma_{c}$ as the mean and covariance of the output on a randomly chosen sample $\vx$. Subtracting the clean outputs from those for the perturbed samples, we estimate the covariance matrix $\Sigma_{p}$. We determine $\zeta_c \approx 0$ and $\zeta_p \approx 0.82$ as the median ratio of the inter- and intra-classifier covariance.
$\zeta_c \approx 0$ implies that our models can be treated as independent conditioned on a single fixed and unperturbed input.

\begin{wrapfigure}[16]{r}{0.35\textwidth}
	\centering
	\vspace{-5.2mm}
	\includegraphics[width=1.0\linewidth]{./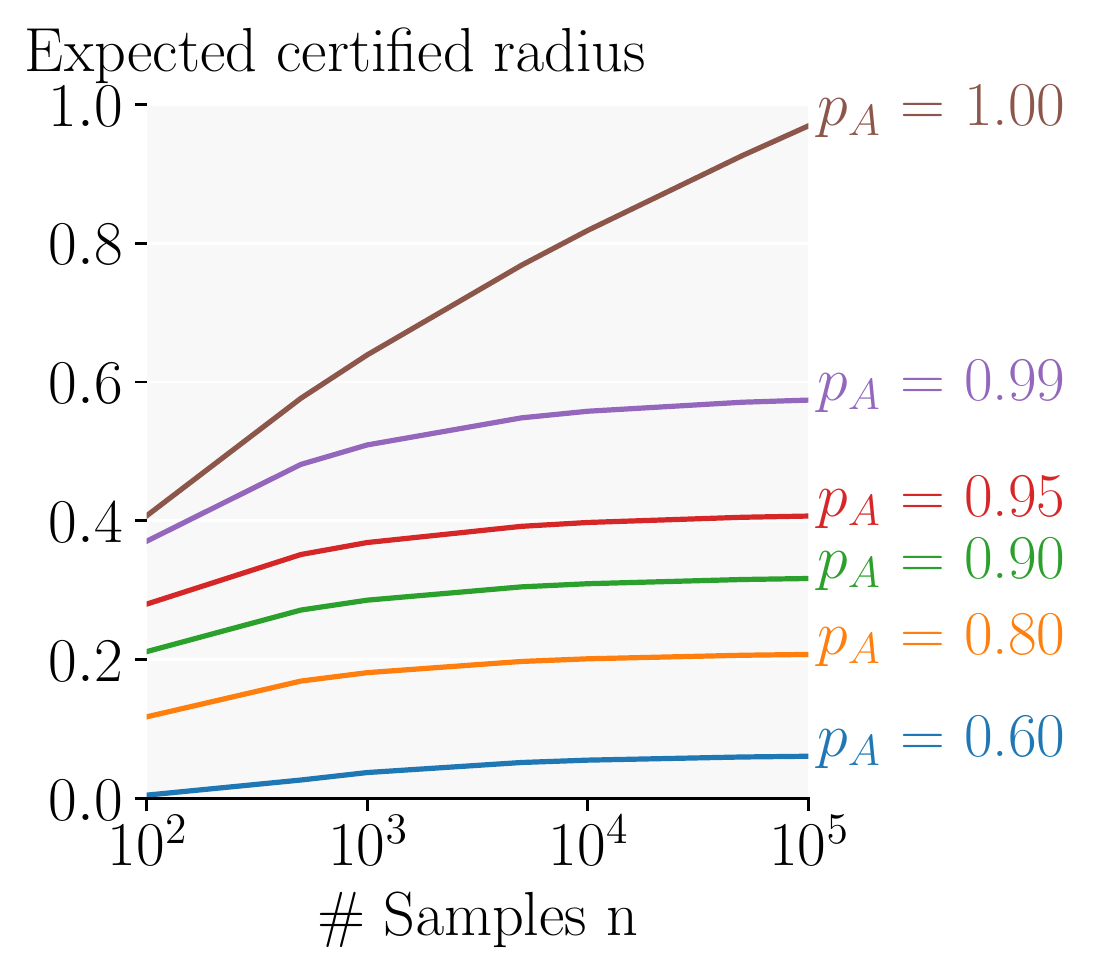}
	\vspace{-6.9mm}
	\caption{Expected certified radius over the number $n$ of samples for the true success probabilities $p_A$.}
	\label{fig:cert_rad_p}
\end{wrapfigure}
Plugging these estimates into our construction under the Gaussian assumption, we observe a significant decrease in the variance of the classification margin to the runner-up class as ensemble size $k$ increases (see \cref{fig:class_margin_distr}). This generally leads to more polarized success probabilities: the majority class' probability is increased, while the other classes' probabilities are decreased because the probability mass concentrates around the mean, and consequently on one side of the decision threshold $\bar{z} = 0$, which determines the success probability via \cref{eq:p_corr}. An increase, as in our case for a correct mean prediction (see \cref{fig:p_success}), leads to much larger expected certified radii (see \cref{fig:cert_rad_distr}) for a given number of sampled perturbations (here $n = 10^3$) via \cref{eq:p_radius}.
In contrast, sampling more perturbations will, in the limit, only recover the true success probability.
In our example, going from one classifier to an ensemble of 50 increases the expected certified radius by $191\%$, while drawing $50$ times more perturbations for a single model only yields a $28\%$ increase (see \cref{fig:cert_rad_distr}).
As illustrated in \cref{fig:cert_rad_p}, these effects are strongest close to decision boundaries ($p_{A} \ll 1$), where small increases in $p_{A}$ impact the certified radius much more than the number of samples $n$, which dominated at $p_{A}\approx1$.%

\section{Computational overhead reduction} \label{sec:adaptive}
To avoid exacerbating the high computational cost of certification via RS with the need to evaluate many models constituting an ensemble, we propose two orthogonal and synergizing approaches to reduce sample complexity significantly. First, an adaptive sampling scheme reducing the average number of samples required for certification. Second, a more efficient aggregation mechanism for ensembles, reducing the number of models evaluated per sample, also applicable to inference. 
\paragraph{Adaptive Sampling}
We assume a setting where a target certification radius $r$ is known a priori, as is the standard in deterministic certification, and aim to show $R \geq r$ via \cref{thm:original}.
After choosing the majority class $\hat{c}_A$ as in \textsc{Certify}, \certadp (see \cref{alg:adaptive}) leverages the insight that relatively few samples are often sufficient to certify robustness at radius $r$.
We perform an $s \geq 1$ step procedure: At step $i$, we evaluate $n_{i}$ fresh samples and, depending on these, either certify radius $r$ as in \textsc{Certify}, abort certification if an upper confidence bound suggests that certification will not be fruitful, or continue with $n_{i+1} \gg n_{i}$. After $s$ steps, we abstain. To obtain the same strong statistical guarantees as with \textsc{Certify}, it is essential to correct for multiple testing by applying Bonferroni correction \citep{bonferroni1936teoria}, i.e., use confidence $1-\frac{\alpha}{s}$ rather than $1-\alpha$.
The following theorem summarizes the key properties of the algorithm:

\begin{theorem}
  \label{thm:adaptive}
For $\alpha, \beta \in [0, 1], s \in \mathbb{N}^+, n_1 < \dots < n_s$, \textsc{CertifyAdp}:
\begin{enumerate}
	\itemsep0.025em
	\item{returns $\hat{c}_A$ if at least $1-\alpha$ confident that $G$ is robust with a radius of at least $r$.}
	\item returns \abstain before stage s only if at least $1-\beta$ confident that robustness of $G$ at radius $r$ can not be shown.
	\item{for $n_s\!\geq\! \ceil{n(1 \! -\! \log_{\alpha}(s))}$ has maximum certifiable radii at least as large as \textsc{Certify} for $n$.}
\end{enumerate}
\end{theorem}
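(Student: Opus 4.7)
The plan is to handle the three claims separately, each reducing to a single-stage analysis combined with a Bonferroni union bound to control the family-wise error rate across the (up to) $s$ sequential tests.

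For claim 1, at each stage $i$ I compute the Clopper--Pearson lower confidence bound $\underline{p_A}$ at level $1-\alpha/s$ and certify $r$ only when $\underline{p_A}$ is large enough for \cref{thm:original} to yield radius $\geq r$. A single such decision has type-I error at most $\alpha/s$. Union-bounding over the at most $s$ stages gives total error at most $\alpha$, so conditional on returning $\hat{c}_A$ we are at least $1-\alpha$ confident that $G$ is robust at radius $r$.

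For claim 2, at each of the $s-1$ stages prior to the last I use a Clopper--Pearson \emph{upper} confidence bound at level $1-\beta/s$ on the success probability and declare certification infeasible if this upper bound falls below the threshold required for certifying radius $r$ via \cref{thm:original}. The same Bonferroni argument over the $s-1$ early-abort decisions yields total error at most $(s-1)\beta/s \leq \beta$.

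For claim 3, the key observation is that the Clopper--Pearson lower bound is monotone in the success count, so the maximum certifiable radius of \textsc{Certify} at confidence $1-\alpha$ with $n$ samples is attained when all $n$ draws vote for the majority class; in that extremal case the lower bound satisfies $(\underline{p_A})^{n} = \alpha$, i.e., $\underline{p_A} = \alpha^{1/n}$, giving maximum radius $\sigma_\epsilon \Phi^{-1}(\alpha^{1/n})$. Analogously, the final stage of \certadp uses $n_s$ samples at confidence $1-\alpha/s$, so its maximum lower bound is $(\alpha/s)^{1/n_s}$. Requiring $(\alpha/s)^{1/n_s} \geq \alpha^{1/n}$ and taking logarithms (noting $\log\alpha,\log(\alpha/s)<0$, so the inequality flips on division) gives $n_s \geq n \,\log(\alpha/s)/\log\alpha = n(1 - \log_\alpha s)$, and the ceiling absorbs integrality. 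Monotonicity of $\Phi^{-1}$ then transports the lower-bound inequality to the radius.

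The main obstacle is identifying the correct extremal configuration for claim 3---namely, that matching \textsc{Certify}'s best radius reduces to the unanimous-vote case $k=n$ where Clopper--Pearson has the closed form $\alpha^{1/n}$---and then carefully tracking the direction of inequalities when dividing by $\log\alpha<0$. The remaining content is routine: textbook Bonferroni for the two soundness claims and elementary algebra for the sample-count bound.
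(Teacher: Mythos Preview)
Your proposal is correct and follows essentially the same route as the paper: Bonferroni over the $s$ certification tests for claim~1, Bonferroni over the $s-1$ early-abstain tests for claim~2, and for claim~3 reducing to the unanimous-vote extremal case where the Clopper--Pearson lower bound has the closed form $\alpha^{1/n}$, then solving $(\alpha/s)^{1/n_s}\geq \alpha^{1/n}$ for $n_s$. One small slip: the algorithm computes the upper confidence bound at level $1-\tfrac{\beta}{s-1}$, not $1-\tfrac{\beta}{s}$, so the union bound over the $s-1$ early stages gives exactly $(s-1)\cdot\tfrac{\beta}{s-1}=\beta$ rather than your (still valid but looser) $(s-1)\tfrac{\beta}{s}\leq\beta$.
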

We provide a proof in \cref{sec:proof_adaptive}.
For well-chosen parameters, this allows certifying radius $r$ with, in expectation, significantly fewer evaluations of $F$ than \textsc{Certify} as often $n_i \ll n$ samples suffice to certify or to show that certification will not succeed. At the same time, we choose $n_s$ in accordance with (3) in \cref{thm:adaptive} to retain the ability to certify large radii (compared to \textsc{Certify}) despite the increased confidence level required due to Bonferroni correction.
For an example, see \cref{fig:adaptive_example}.

\begin{figure}[t]
	\scalebox{0.9}{
	\begin{minipage}{0.615\linewidth}
		\algrenewcommand\algorithmicindent{0.5em}
				  \begin{algorithm}[H]
					  \caption{Adaptive Sampling}
					  \label{alg:adaptive}
					  \begin{algorithmic}[1]
						  \Function{\certadpraw}{$F, \sigma_{\epsilon}, \vx, n_0, \{n_j\}_{j=1}^s, s, \alpha, \beta, r$}
						  \State $\texttt{cnts0} \leftarrow \textsc{SampleUnderNoise}(F, \vx, n_0, \sigma_{\epsilon})$
						  \State $\hat{c}_A \leftarrow$ top index in \texttt{cnts0}
						  \For{\texttt{$i \leftarrow 1$ to $s$}}
						  \State $\texttt{cnts} \leftarrow \textsc{SampleUnderNoise}(F, \vx, n_i, \sigma_{\epsilon})$
						  \State $\underline{p_A} \leftarrow \textsc{LowerConfBnd}$($\texttt{cnts}[\hat{c}_A]$, $n_i$, $1 - \frac{\alpha}{s}$)
						  \State \textbf{If} {$\sigma_{\epsilon} \, \Phi^{-1}(\underline{p_A}) \geq r$} \textbf{then return} $\hat{c}_A$
						  \State $\overline{p_A} \leftarrow \textsc{UpperConfBnd}$($\texttt{cnts}[\hat{c}_A]$, $n_i$, $1 - \frac{\beta}{s-1}$)
						  \State \textbf{If} {$\sigma_{\epsilon} \, \Phi^{-1}(\overline{p_A}) < r$} \textbf{then} \textbf{return} \abstain
						  \EndFor
						  \State \textbf{return} \abstain
						  \EndFunction
					  \end{algorithmic}
				  \end{algorithm}
			  \vspace{1mm}
	  \end{minipage}
	}
\hfill
	\begin{minipage}{0.385\linewidth}
		\vspace{-0.4mm}
		\resizebox{\textwidth}{!}{
		\begin{tikzpicture}
		  \definecolor{fill}{RGB}{230, 230, 230}
		  \def\ts{0.65}
		  \def\dw{1.9cm}
		  \def\dh{0.43cm}
		  \tikzstyle{box}=[rectangle,fill=my-full-blue!15,minimum width=2.2cm,minimum height=1cm, align=center, scale=\ts, rounded corners=1mm]
		  \tikzstyle{line} = [draw, -latex']
		  \node[box] (sample_n1) at (0, 0) {sample with\\$n_1 = 1000$};
		  \node[box, below=\dh of sample_n1] (sample_n2) {sample with \\$n_{2} = 10000$};
		  \node[box, below=\dh of sample_n2] (sample_n3) {sample with \\$n_3 = 125000$};
		
		  \node[box, minimum width=0.75cm, left=\dw of sample_n3, fill=my-green] (ca) {$\hat{c_{A}}$};
		  \node[box, minimum width=0.75cm, right=\dw of sample_n3, fill=my-red] (abstain) {\abstain};
		
		  \path [line] (sample_n1) -> node[right,scale=\ts]{else} (sample_n2);
		  \path [line] (sample_n2) -> node[right,scale=\ts]{else} (sample_n3);
		
		  \path let \p1 = (sample_n1),
		  	        \p2 = (abstain),
		  	        \p3 = (ca)
		  in coordinate (s1a)  at (\x2,\y1)
		     coordinate (s1c)  at (\x3,\y1);
		
		  \path let \p1 = (sample_n2),
		  	        \p2 = (abstain),
		  	        \p3 = (ca)
		  in coordinate (s2a)  at (\x2,\y1)
		     coordinate (s2c)  at (\x3,\y1);
		
		  \path [line] (sample_n1) -- node[above, scale=\ts]{$\texttt{cnts}[\hat{c}_A] < 795$} (s1a)
		                (s1a) -> (abstain);
		  \path [line] (sample_n1) -- node[above, scale=\ts]{$\texttt{cnts}[\hat{c}_A] \geq 880$} (s1c)
		                (s1c) -> (ca);
		
		  \path [line] (sample_n2) -- node[above, scale=\ts]{$\texttt{cnts}[\hat{c}_A]< 8270$} (s2a)
		                (s2a) -> (abstain);
		  \path [line] (sample_n2) -- node[above, scale=\ts]{$\texttt{cnts}[\hat{c}_A]\geq 8538$} (s2c)
		                (s2c) -> (ca);
		
		  \path [line] (sample_n3) -> node[above, scale=\ts]{else} (abstain);
		  \path [line] (sample_n3) -> node[above, scale=\ts]{$\texttt{cnts}[\hat{c}_A] \geq 105607$} (ca);
		\end{tikzpicture}
	}
	\captionof{figure}{\certadp with $\beta=0.0001$ obtains similar guarantee as \textsc{Certify} at $n=100'000$ while being more sample efficient for most inputs.
	  Both use
	$\alpha=0.001$, $\sigma_{\epsilon} = r = 0.25$.}
\label{fig:adaptive_example}
\end{minipage}
\vspace{-5mm}
\end{figure}

\paragraph{$K$-Consensus Aggregation}
To reduce both inference and certification times, we can adapt the ensemble aggregation process to return an output early when there is consensus.
Concretely, first, we order the classifiers in an ensemble by their accuracy (under noisy inputs) on a holdout dataset. Then, when evaluating $\bar{f}$, we query classifiers in this order. If the first $K$ individual classifiers agree on the predicted class, we perform soft-voting on these and return the result without evaluating the remaining classifiers.
Especially for large ensembles, this approach can significantly reduce inference time without hampering performance, as the ratio $\frac{k}{K}$ can be large.

We note that similar approaches for partial ensemble evaluation have been proposed  \citep{Inoue19,WangGY18,SotoSM16}, and that this does not affect the mathematical guarantees of RS.

\section{Experimental Evaluation} \label{sec:eval}
In line with prior work, we evaluate the proposed methods on the \cifar \citep{cifar} and \IN \citep{ImageNet} datasets with respect to two key metrics: (i) the certified accuracy at predetermined radii $r$ and (ii) the average certified radius (ACR).
We show that on both datasets, all ensembles outperform their strongest constituting model and obtain a new state-of-the-art.
On \cifar, we demonstrate that ensembles outperform individual models even when correcting for computational cost via model size or the number of used perturbations. %
Further, we find that using adaptive sampling and K-Consensus aggregation speeds up certification up to $55$-fold for ensembles and up to $33$-fold for individual models.

\paragraph{Experimental Setup} We implement our approach in PyTorch \citep{PaszkeGMLBCKLGA19} and evaluate  on \cifar with ensembles of \RNS and \RNB and on \IN with ensembles of \RNM \citep{He_2016_CVPR}, using $1$ and $2$ GeForce RTX 2080 Ti, respectively.
Due to the high computational cost of \textsc{Certify}, we follow previous work \citep{CohenRK19} and evaluate every $20^{th}$ image of the \cifar test set \ and every $100^{th}$ of the \IN test set ($500$ samples total).
\paragraph{Training and Certification}
We train models with \cohen \citep{CohenRK19}, \consistency \citep{jeong2020consistency}, and \smoothadv \citep{salman2019provably} training and utilize pre-trained \smoothadv and \macer \citep{zhai2020macer} models.
More details are provided in \cref{sec:experimental-details}.
If not declared differently, we use $n_0=100$, $n = 100'000$, $\alpha = 0.001$, no \certadp or $K$-Consensus and evaluate and train with the same $\sigma_{\epsilon}$.

\subsection{Main Results}

\begin{figure}[t]
	\begin{minipage}[t]{0.62 \textwidth}
				\captionof{table}{\cifar average certified radius (ACR) and certified accuracy at different radii for ensembles of $k$ \RNB ($k=1$ are individual models) at $\sigma_{\epsilon}=0.5$. Larger is better.} 
				\vspace{-2mm}
					\label{tab:cifar10_main_short_pre}
				\small
				\centering
				\scalebox{0.72}{
					\begin{threeparttable}
						\begin{tabular}{ccccccccccc}
							\toprule
							\multirow{2.6}{*}{Training} &\multirow{2.6}{*}{$k$} & \multirow{2.6}{*}{ACR} & \multicolumn{8}{c}{Radius r}\\
							\cmidrule(lr){4-11}
							& & & 0.0 & 0.25 & 0.50 & 0.75 & 1.00 & 1.25 & 1.50 & 1.75 \\

							\midrule
							\multirow{2}{*}{\cohen} & 1 & 0.535 & 65.8 & 54.2 & 42.2 & 32.4 & 22.0 & 14.8 & 10.8 & 6.6 \\
							& 10 & 0.648 & \textbf{69.0} & \textbf{60.4} & \textbf{49.8} & 40.0 & 29.8 & 19.8 & 15.0 & 9.6 \\
							\cmidrule(lr){1-1}
							\multirow{2}{*}{\consistency} & 1 & 0.708 & 63.2 & 54.8 & 48.8 & 42.0 & 36.0 & 29.8 & 22.4 & 16.4 \\
							& 10 & \textbf{0.756} & 65.0 & 59.0 & 49.4 & \textbf{44.8} & 38.6 & 32.0 & 26.2 & 19.8 \\
							\cmidrule(lr){1-1}
							\multirow{2}{*}{\smoothadv} & 1 & 0.707 & 52.6 & 47.6 & 46.0 & 41.2 & 37.2 & 31.8 & 28.0 & 23.4 \\
							& 10 & 0.730 & 52.4 & 48.6 & 45.8 & 42.6 & \textbf{38.8} & \textbf{34.4} & \textbf{30.4} & \textbf{25.0} \\
							\cmidrule(lr){1-1}
							\macer & 1 & 0.668 & 62.4 & 54.4 & 48.2 & 40.2 & 33.2 & 26.8 & 19.8 & 13.0 \\

							\bottomrule
						\end{tabular}
				\end{threeparttable}
			}
	\end{minipage}
	\hfill
	\begin{minipage}[t]{0.33\textwidth}
			\vspace{-1.3mm}
			\centering
			\includegraphics[width=0.93\linewidth]{./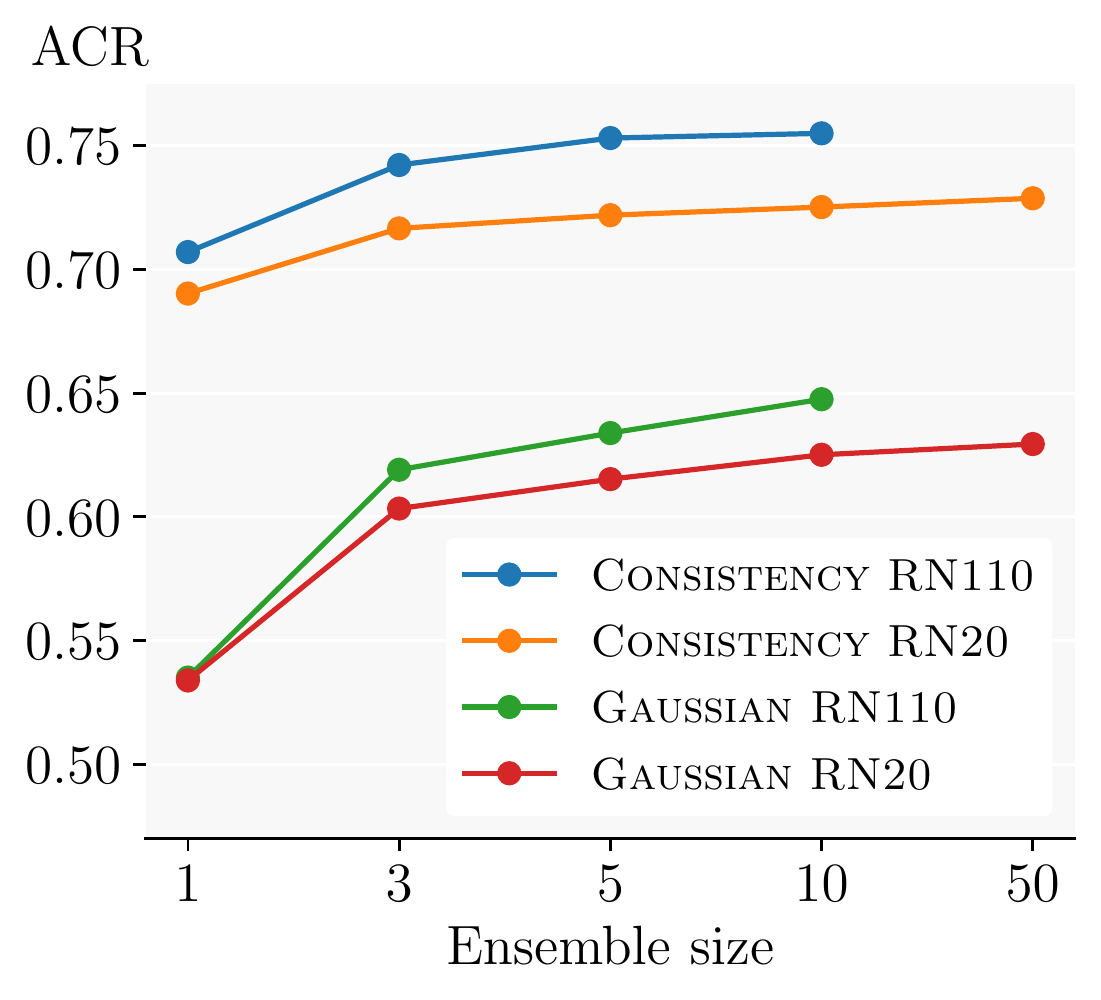}
			\vspace{-4.mm}
			\captionof{figure}{ACR over ensemble size $k$ for $\sigma_{\epsilon}=0.5$ on \mbox{\cifar}.}
			\label{fig:ens_size_cifar10_acr}
	\end{minipage}
	\vspace{-2.5mm}
	\end{figure}

\begin{figure}[t]
	\begin{minipage}[t]{0.62 \textwidth}
				\captionof{table}{\IN average certified radius (ACR) and certified accuracy at different radii for ensembles of $k$ \RNM ($k=1$ are individual models) at $\sigma_{\epsilon}=1.0$. Larger is better.}
				\vspace{-2mm}
					\label{tab:IN_main_short_pre}
				\small
				\centering
				\scalebox{0.72}{
				\begin{threeparttable}
				\begin{tabular}{cccccccccccc}
					\toprule
					\multirow{2.6}{*}{Training} & \multirow{2.6}{*}{$k$} & \multirow{2.6}{*}{ACR} & \multicolumn{8}{c}{Radius r}\\
					\cmidrule(lr){4-11}
					& & & 0.0 & 0.50 & 1.00 & 1.50 & 2.00 & 2.50 & 3.00 & 3.50 \\
					\midrule
					\multirow{2}{*}{\cohen} & 1 & 0.875 & 43.6 & 37.8 & 32.6 & 26.0 & 19.4 & 14.8 & 12.2 & 9.0 \\
					& 3 & 0.968 & 43.8 & 38.4 & 34.4 & 29.8 & 23.2 & 18.2 & 15.4 & 11.4\\
					\cmidrule(lr){1-1}
					
					\multirow{2}{*}{\consistency} & 1 & 1.022 & 43.2 & 39.8 & 35.0 & 29.4 & 24.4 & 22.2 & 16.6 & 13.4 \\
					 & 3 & \textbf{1.108} & 44.6 & 40.2 & \textbf{37.2} & \textbf{34.0} & \textbf{28.6} & 23.2 & 20.2 & 16.4 \\
					\cmidrule(lr){1-1}
					\multirow{2}{*}{\smoothadv} & 1 & 1.011 & 40.6 & 38.6 & 33.8 & 29.8 & 25.6 & 20.6 & 18.0 & 14.4 \\
					  & 3 & 1.065 & 38.6 & 36.0 & 34.0 & 30.0 & 27.6 & \textbf{24.6} & \textbf{21.2} & \textbf{18.8} \\
				  \cmidrule(lr){1-1}
				  \macer$^\dagger$ & 1 & 1.008 & \textbf{48} & \textbf{43} & 36 & 30 & 25 & 18 & 14 & - \\
					  
					\bottomrule
				\end{tabular}
				\begin{tablenotes}
					\item[$\dagger$]{As reported by \citet{zhai2020macer}.}
			\end{tablenotes}
			\end{threeparttable}
			}
	\end{minipage}
	\hfill
	\begin{minipage}[t]{0.33\textwidth}
			\vspace{-0.5mm}
			\centering
			\includegraphics[width=0.93\linewidth]{./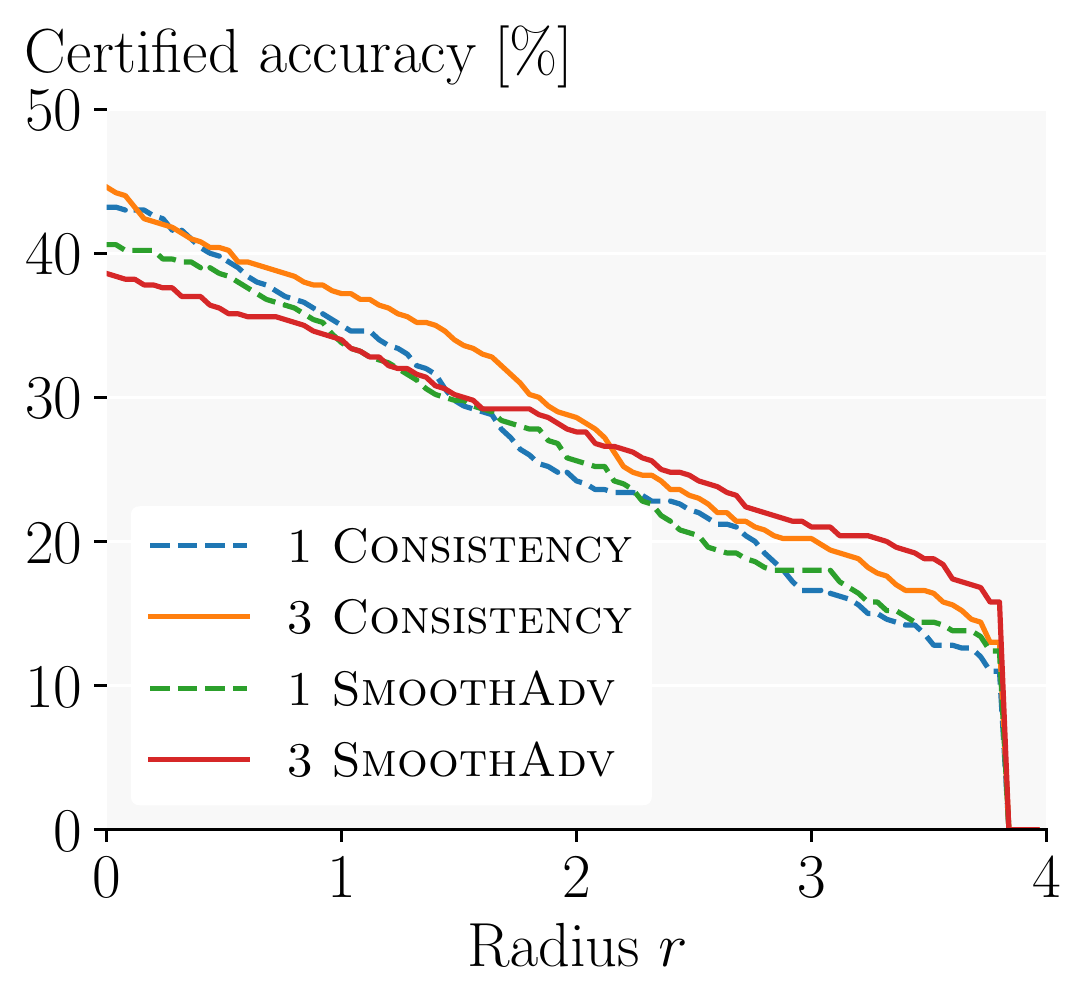}
			\vspace{-4.0mm}
			\captionof{figure}{Certified acc. over radius $r$ for $\sigma_{\epsilon}=1.0$ on \mbox{\IN}.}
			\label{fig:in_acr_plot_main}
	\end{minipage}
	\vspace{-4mm}
	\end{figure}

\paragraph{Results on \cifar}
In \cref{tab:cifar10_main_short_pre}, we compare ensembles of $10$ \RNB against individual networks at $\sigma_{\epsilon}=0.50$ w.r.t. average certified radius (ACR) and certified accuracy at various radii.
We consistently observe that ensembles outperform their constituting models by up to $21$\% in ACR and $45$\% in certified accuracy, yielding a new state-of-the-art at every radius.
Improvements are more pronounced for larger ensembles (see \cref{fig:ens_size_cifar10_acr}) and at larger radii, where larger $\underline{p_A}$ are required, which agrees well with our theoretical observations in \cref{sec:theory_var}.
We present more extensive results in \cref{sec:additional-cifar10-results}, including experiments for $\sigma_{\epsilon}=0.25$ and $\sigma_{\epsilon}=1.0$ yielding new state-of-the-art ACR in all settings.

\paragraph{Results on \IN}
In \cref{tab:IN_main_short_pre}, we compare ensembles of 3 \RNM with individual models at $\sigma_{\epsilon} = 1.0$, w.r.t. ACR and certified accuracy.
We observe similar trends to \cifar, with ensembles outperforming their constituting models and the ensemble of 3 \consistency trained \RNM obtaining a new state-of-the-art ACR of $1.11$, implying that ensembles are effective on a wide range of datasets.
We visualize this in \cref{fig:in_acr_plot_main} and present more detailed results in \cref{sec:additional-imagenet-results}, where we also provide experiments for $\sigma_{\epsilon}=0.25$ and $\sigma_{\epsilon}=0.5$, yielding new state-of-the-art ACR in both cases.

\begin{wraptable}[10]{r}{0.48\textwidth}
	\vspace{-4.5mm}
	\small
	\centering
	\caption{Adaptive sampling and \kcons on \cifar for $10$ \consistency trained \RNB.} %
	\label{tab:cifar_adapt_samp_cons_short}
	\vspace{-2.2mm}
	\scalebox{0.745}{
	\begin{tabular}{cccccc}
		\toprule
		Radius & $\sigma_{\epsilon}$ & $\text{acc}_{\text{cert}}$ [\%] & SampleRF & KCR [\%] & TimeRF \\
		\midrule
        0.25 & 0.25 & 70.4 & \textbf{55.24} & 39.3 & \textbf{67.16}\\
        0.50 & 0.25 & 60.6 & 18.09 & 57.3 & 25.07\\
        0.75 & 0.25 & 52.0 & 5.68 & 87.8 & 10.06\\
        1.00 & 0.50 & 38.6 & 14.79 & 78.3 & 24.01\\
        1.25 & 0.50 & 32.2 & 8.14 & 92.7 & 15.06\\
        1.50 & 0.50 & 26.2 & 6.41 & \textbf{97.5} & 12.31\\
		\bottomrule
	\end{tabular}
}
\end{wraptable}

\paragraph{Computational Overhead Reduction}
We evaluate \certadp in conjunction with \kcons using $\beta = 0.001$, $\{n_j\} = \{100, 1'000, 10'000, 120'000\}$, and $K=5$ in \cref{tab:cifar_adapt_samp_cons_short} and report KCR, the percentage of inputs for which only $K$ classifiers were evaluated and SampleRF and TimeRF, the factors by which sample complexity and certification time are reduced, respectively. We observe up to 67-fold certification speed-ups and up to 55-fold sample complexity reductions for individual predetermined radii, without incurring any accuracy penalties.
This way, certifying an ensemble of $k=10$ networks with \certadp can be significantly faster than certifying an individual model with \textsc{Certify} while yielding notably higher certified accuracies.
We observe that adaptive sampling and \kcons complement each other well: At larger radii $r$, more samples pass on to the later stages of \certadp, requiring more model evaluations. However, only samples with high success probabilities reach these later stages, making it more likely for \kcons to terminate an evaluation early.
We provide extensive additional experiments in \cref{sec:additional_computational_overhead_reduction_experiments}.

\subsection{Ablation Study}

\paragraph{Ensemble Size and Model Size Ablation}
In \cref{fig:ens_size_cifar10_acr}, we illustrate the effect of ensemble size on ACR for various training methods and model architectures, providing more extensive results in \cref{sec:additional-ensemble-size-ablation}.
We observe that in all considered settings, even relatively small ensembles of just $3$-$5$ models realize most of the improvements to be made.
In fact, as few as $3$ and $5$ \RNS are enough to outperform a single \RNB under \cohen and \consistency training, respectively, while only having $47\%$ and $79\%$ of the parameters.

\begin{wrapfigure}[14]{r}{0.3\textwidth}
	\centering
	\vspace{-1.2mm}
	\includegraphics[width=0.99\linewidth]{./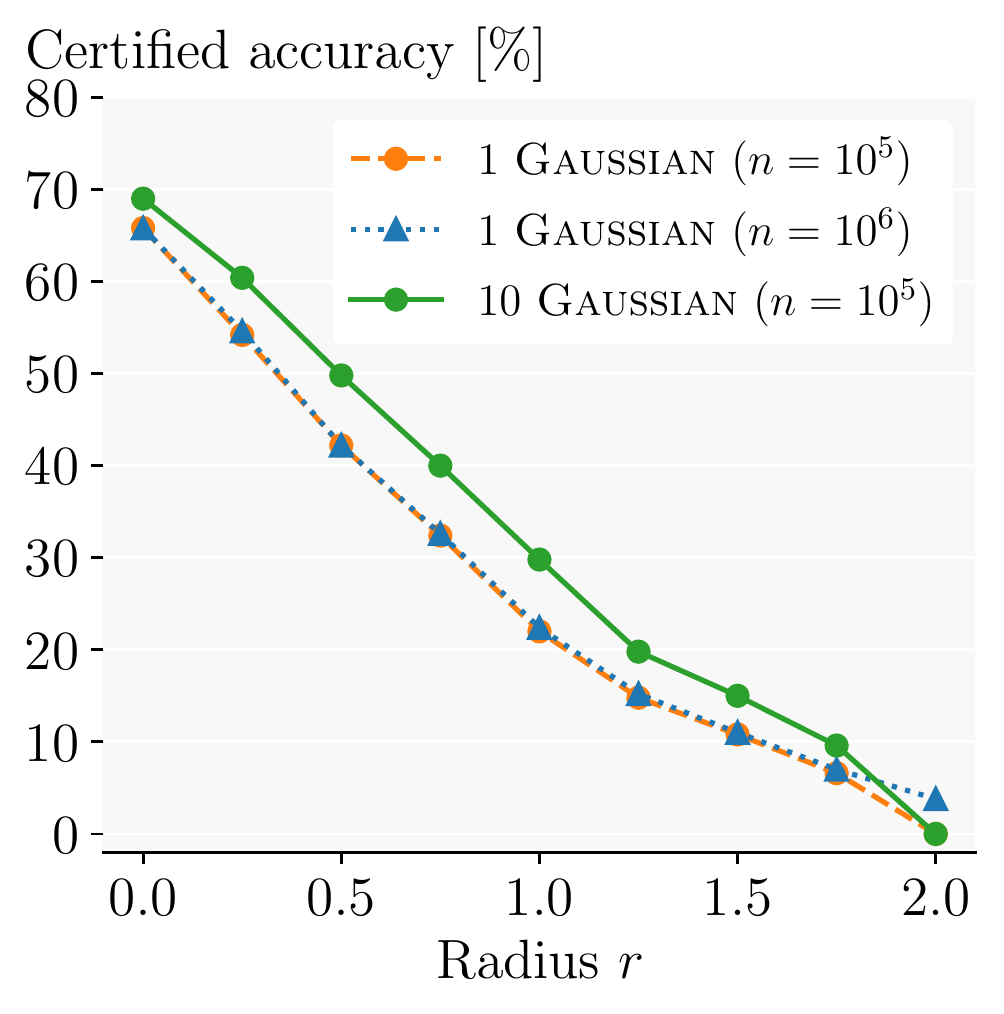}
	\vspace{-7.2mm}
	\caption{Comparing an ensemble and an individual model at an equal number of total inferences on \cifar.}
	\label{fig:sample_experiments}
\end{wrapfigure}
\paragraph{Equal Number of Inferences}
In \cref{fig:sample_experiments}, we compare the effect of evaluating a larger number of perturbations $n$ for an individual model compared to an ensemble at the same computational cost.
We observe that increasing the sample count for a single model only enables marginal improvements at most radii and only outperforms the ensemble at radii, which can mathematically not be certified using the smaller sample count.
See also \cref{fig:cert_rad_p} for a theoretical explanation and \cref{sec:equal-number-of-inferences} for more experiments.

\paragraph{Other Base Classifiers}
While our approach works exceptionally well when ensembling similar classifiers, it can also be successfully extended to base models trained with different training methods (see \cref{sec:different-training}) and denoised classifiers (see \cref{sec:denoised-smoothing}), yielding ACR improvements of up to $5\%$ and $17\%$ over their strongest constituting models, respectively. This way, existing models can be reused to avoid the high cost of training an ensemble.

\begin{table}[t]
	\centering
	\small
	\centering
	\captionof{table}{Adaptive sampling for different $\{n_j\}$ on \cifar at $\sigma_{\epsilon}=0.25$. SampleRF and TimeRF are the reduction factors compared to standard sampling with $n=100'000$ (larger is better). ASR$_j$ is the portion of certification attempts returned in phase $j$.}
	\label{tab:adaptive_single_short}
	\vspace{-2mm}
    \scalebox{0.7}{
	\begin{tabular}{ccccccccccc}
		\toprule
        $r$ & Training & $\{n_j\}$ & $\text{acc}_{\text{cert}}$ [\%] & ASR$_1$ [\%] & ASR$_2$ [\%]& ASR$_3$ [\%]& ASR$_4$ [\%]& SampleRF & TimeRF\\
		\midrule 
		\multirow{4.5}{*}{0.25} & \multirow{3}{*}{\cohen}
		& $1'000, 110'000$ & 60.0 & 92.2 & 7.8 & - & - & 10.34 & 10.45\\
		& & $1'000, 10'000, 116'000$ & 60.0 & 91.6 & 5.0 & 3.4 & - & 17.01 & 17.21\\
		& & $100, 1'000, 10'000, 120'000$ & 60.0 & 75.4 & 16.0 & 5.6 & 3.0 & 20.40 & 21.22\\
		\cmidrule{2-2}
		&  \multirow{1}{*}{\consistency }& $100, 1'000, 10'000, 120'000$ & 66.0 & 84.8 & 10.8 & 2.6 & 1.8 & \textbf{33.91} & \textbf{34.67}\\
		\cmidrule{1-2}
	    \multirow{2.0}{*}{0.75}& \multirow{1}{*}{\cohen} & $100, 1'000, 10'000, 120'000$ & 30.2 & 48.8 & 11.6 & 29.4 & 10.2 & \textbf{5.92} & \textbf{6.10}\\
		&\consistency & $100, 1'000, 10'000, 120'000$ & 46.4 & 32.4 & 8.2 & 49.4 & 10.0 & 5.32 & 5.40\\
		\bottomrule
	\end{tabular}}
\vspace{-4mm}
\end{table}

\paragraph{Adaptive Sampling Ablation}
In \cref{tab:adaptive_single_short}, we show the effectiveness of \certadp for certifying robustness at predetermined radii using $\beta = 0.001$, different target radii, training methods, and sampling count sets $\{n_j\}$. 
We compare ASR$_j$, the portion of samples returned in stage $j$, SampleRF, and TimeRF. %
Generally, we observe larger speed-ups for smaller radii (up to a factor of $34$ at $r=0.25$).
There, certification only requires small $\underline{p_A}$, which can be obtained even with few samples.
For large radii (w.r.t $\sigma_{\epsilon}$), higher $\underline{p_A}$ must be shown, requiring more samples even at true success rates of $p_A=1$. There, the first phase only yields early abstentions.
The more phases we use, the more often we certify or abstain early, but also the bigger the costs due to Bonferroni correction.
We observe that exponentially increasing phase sizes and in particular $\{n_j\} = \{100, 1'000,10'000,120'000\}$ perform very well across different settings.
See \cref{sec:additional-adaptive-sampling-experiments} for additional results.

\begin{wraptable}[10]{r}{0.382\textwidth}
	\vspace{-4.5mm}	
	\small
	\centering
	\caption{$K$-Consensus aggregation at $\sigma_{\epsilon}=0.25$ on \cifar.}
	\label{tab:k_cons_short}
	\vspace{-2.5mm}	
	\scalebox{0.76}{
	\begin{tabular}{ccccc}
		\toprule
        Architecture & $K$ & ACR & Time$RF$ & KCR\\
        \midrule
        \multirow{4}{*}{\makecell[c]{\consistency \\ \RNB}}& 1 & 0.546 & 10.0 & 100.0\\
        & 2 & 0.576 &  3.25 & 85.8\\
        & 5 & 0.583 &  1.59 & 74.2\\
        & 10 & 0.583&  1.00 & 0.0\\
     		\cmidrule{1-1}
        \multirow{4}{*}{\makecell[c]{\consistency \\ \RNS} }& 1 & 0.528 & 50.0 & 100.0\\
        & 2 & 0.544 &  6.50 & 87.7\\
        & 10 & 0.551&  2.01 & 69.8\\
        & 50 & 0.551&  1.00 & 0.0\\
		\bottomrule
	\end{tabular}
}
\end{wraptable}

\paragraph{$K$-Consensus Aggregation Ablation}
In \cref{tab:k_cons_short}, we compare ACR and certification time reduction for ensembles of $10$ \RNB and $50$ \RNS using \kcons in isolation across different $K$.
Even when using only $K=2$, we already obtain $81\%$ and $70\%$ of the ACR improvement obtainable by always evaluating the full ensembles ($k=10$ and $k=50$), respectively.
The more conservative $K=10$ for \RNS still reduces certification times by a factor of $2$ without losing accuracy. See \cref{sec:additional-k-consensus-experiments} for more detailed experiments.

\section{Conclusion}
\label{sec:conclusion}
We propose a theoretically motivated and statistically sound approach to construct low variance base classifiers for Randomized Smoothing by ensembling. We show theoretically and empirically why this approach significantly increases certified accuracy yielding state-of-the-art results. %
To offset the computational overhead of ensembles, we develop a generally applicable adaptive sampling mechanism, reducing certification costs up to 55-fold for predetermined radii and an ensemble aggregation mechanism, complementing it and reducing evaluation costs on its own up to 6-fold.

\pagebreak

\section{Ethics Statement}
\label{sec:impact}

Most machine learning techniques can be applied both in ethical and unethical ways.
Techniques to increase the certified robustness of models do not change this but only make the underlying models more robust, again allowing beneficial and malicious applications, e.g., more robust medical models versus more robust weaponized AI.
As our contributions improve certified accuracy, certification radii, and inference speed, both of these facets of robust AI are amplified.
Furthermore, while we achieve state-of-the-art results, these do not yet extend to realistic perturbations. Malicious actors may aim to convince regulators that methods such as the proposed approach are sufficient to provide guarantees for general real-world threat scenarios, leading to insufficient safeguards.

\section{Reproducibility Statement}

We make all code and pre-trained models required to reproduce our results publicly available at \url{https://github.com/eth-sri/smoothing-ensembles}.
There, we also provide detailed instructions and scripts facilitating the reproduction of our results. 
We explain the basic experimental setup in \cref{sec:eval} and provide exact experimental details in \cref{sec:experimental-details}, where we extensively describe the datasets (\cref{sec:details-dataset}), model architectures (\cref{sec:details-architecture}), training methods (\cref{sec:train-rand-smooth}), hyper-parameter choices (\cref{sec:details-training}), and experiment timings (\cref{sec:experiment_timings}).
We remark that the datasets, architectures, and training methods we use are all publicly available.
Section \cref{sec:eval} and \cref{sec:additional_experiments} contain numerous experiments for various datasets, model architectures, training methods, noise levels, and ensemble compositions.
The consistent trends we observe over this wide range of settings highlight the general applicability of our approach.
In \cref{sec:error-bars}, we analyze the variability of our results, reporting standard deviations for a range of metrics,  find that our ensembling approach reduces variability, and note that our results are statistically significant.
We include complete proofs of all theoretical contributions (\cref{sec:theory_var,sec:adaptive}) in \cref{sec:math-deriv,sec:proof_adaptive}.
In \cref{app:theory_val}, we theoretically motivate the modelling assumptions made in \cref{sec:theory_var} and validate them empirically.

\message{^^JLASTBODYPAGE \thepage^^J}

\clearpage
\bibliography{references}
\bibliographystyle{iclr2022_conference}

\message{^^JLASTREFERENCESPAGE \thepage^^J}

\ifbool{includeappendix}{%
	\clearpage
	\appendix
	\section{Comparison to \citet{liu2020enhancing}}
\label{sec:comparison_sween}
\citet{liu2020enhancing} introduce Smoothed Weighted Ensembling (SWEEN), which are weighted ensembles of smoothed base models.
They focus on deriving optimal ensemble weights for generalization rather than on analyzing the particularities of ensembles in the RS setting.
As they do not consider multiple testing correction and the confidence of the applied Monte Carlo sampling, they obtain statements about empirical, distributional robustness of their classifiers, rather than individual certificates with high confidence. Due to this difference setting, we do not compare numerically.

\section{Additional Related Work}
\label{sec:additional_related_work}

\paragraph{Extensions}
As outlined in \cref{sec:background}, \citet{CohenRK19}
focuses on the addition of Gaussian noise to drive $\ell_2$-robustness results.
However, various extensions with other types of noise and guarantees have been proposed.

By using different types of noise distributions and radius calculations,
\citet{YangDHSR020,ZhangYGZ020} derive recipes to determine certificates for general $\ell_p$-balls and specifically showcase results for $p=1,2,\infty$.

Numerous works have shown extensions to discrete perturbations such as $\ell_0$-perturbations \citep{LeeYCJ19,WangJCG21,schuchardt2021collective}, graphs \citep{BojchevskiKG20,GaoHG20}, patches \citet{0001F20a} or manipulations of points in a point cloud \citet{LiuJG21}.

\citet{DvijothamHBKQGX20} provide theoretical derivations for the application of both continuous and discrete smoothing measures.

\citet{MohapatraKWC0D20} improve certificates by considering gradient information.

Beyond norm-balls certificates, \citet{FischerBV20,li2021tss} show how geometric operations such as rotation or translation can be certified via Randomized Smoothing.

\citet{ChiangCAK0G20,FischerBV21} show how the certificates can be extended from the setting of classification to regression (and object detection) and segmentation, respectively. In the classification setting, \citet{JiaCWG20} extend certificates from just the top-1 class to the top-k classes. Similarly, \citet{Kumar0FG20} certify the confidence of the classifier not, just the top class prediction.

Beyond the inference-time evasion attacks, \citet{RosenfeldWRK20} showcase RS as a defense against data poisoning attacks. 

Many of these extensions are orthogonal to the standard Randomized Smoothing approach and thus orthogonal to our improvements.
We showcase that these improvements carry over from the $\ell_2$ case to other $\ell_p$ norms in  \cref{sec:other_norms}.

\paragraph{Limitations}
Recently, some works also have investigated the limitations of Randomized Smoothing:
\citet{MohapatraKWC0D21} point out the limitations of training for Randomized Smoothing as well as the impact on class-wise accuracy. 
\citet{Kumar0GF20,WuBKG21} show that for $p > 2$, the achievable certification radius quickly diminishes in high-dimensional spaces.

\section{Mathematical Derivation of Variance Reduction for Ensembles}
\label{sec:math-deriv}

In this section, we present the algebraic derivations for \cref{sec:theory_var}, skipped in the main part due to space constraints.

\paragraph{Individual Classifier}
In \cref{sec:theory_var} define $f^{l}(\vx) =: \vy^{l} \in \R^{\nclass}$ as the sum of two random variables
$\vy^l = \vy^l_p + \vy^l_c$.

The behavior on a specific clean sample $\vx$ is modeled by $\vy_c^{l}$ with mean $\vc \in \R^{\nclass}$, the expectation of the logits for this sample over the randomization in the training process, and corresponding covariance $\Sigma_c$:
\begin{align*}
\E[\vy_{c}^{l}] &= \vc\\
\Cov[\vy_{c}^{l}] &= \Sigma_{c}
= \left[\begin{array}{ccc}
\sigma^2_{c,1} & \cdots & \rho_{c,1m}\sigma_{c,1}\sigma_{c,m} \\
& \ddots & \vdots \\
& & \sigma^2_{c,m}
\end{array} \right]
\end{align*}
We note that: (i) we omit the lower triangular part of covariance matrices due to symmetry, (ii) $\vc$ and $\Sigma_{c}$ are constant across the different $f^{l}$ for a fixed $\vx$ and (iii) that due to (ii) and our modelling assumptions all means, (co)variances and probabilities are conditioned on $\vx$, e.g., $\E[\vy_{c}^{l}] = {\E[\vy_{c}^{l} \mid \vx]} = \vc$. However, as we define all random variables only for a fixed $\vx$, we omit this.

Similarly, we model the impact of the perturbations $\epsilon$ introduced during RS by $\vy_p^{l}$ with mean zero and covariance $\Sigma_{p}$:
\begin{align*}
\E[\vy_{p}^{l}] &= \mbf{0}\\
\Cov[\vy_{p}^{l}] &= \Sigma_{p}
= \left[\begin{array}{ccc}
\sigma^2_{p,1} & \cdots & \rho_{p,1m}\sigma_{p,1}\sigma_{p,m} \\
& \ddots & \vdots \\
& & \sigma^2_{p,m}
\end{array} \right]
\end{align*}

That is, we assume $\vc^{l}$ to be the expected classification a model learns for the clean sample while the covariances $\Sigma_{c}$ and $\Sigma_{p}$ encode the stochasticity of the training process and perturbations, respectively.
As $\vy_p^{l}$ models the local behavior under small perturbations and $\vy_c^{l}$ models the global training effects, we assume them to be independent:
\begin{align*}
\E[\vy^{l}] &= \vc\\
\Cov[\vy^{l}] &= \Sigma
= \Sigma_{c} + \Sigma_{p}
= \left[\begin{array}{ccc}
 \sigma^2_{p,1} + \sigma^2_{c,1}& \cdots & \rho_{p,1m} \sigma_{p,1} \sigma_{p,m} + \rho_{c,1m} \sigma_{c,1} \sigma_{c,m}\\
 & \ddots & \vdots \\
 & & \sigma^2_{p,m} + \sigma^2_{c,m}
 \end{array} \right].
\end{align*}
The classifier prediction $\argmax_{\idxclass} y_{\idxclass}$ is determined by the differences between logits. We call the difference between the target logit and others the classification margin.
During certification with RS, the first step is to determine the majority class.
Without loss of generality, we assume that it has been determined to be index $1$, leading to the classification margin $z_i = y_1 - y_i$. If $z_i > 0$ for all $i \neq 1$, the majority class logit $y_1$ is larger than those of all other classes $y_i$.
We define $\vz := [z_2, .., z_n]^{T}$, skipping the margin of $y_1$ to itself.
Under the above assumptions, the statistics of the classification margin for a single classifier are:
\begin{align*}%
\E[z_i] &= c_1 - c_i\\
\Var[z_i] &= \sigma^2_{p,1} + \sigma^2_{p,i} +\sigma^2_{c,1} + \sigma^2_{c,i} - 2 \rho_{p,1i} \sigma_{p,1} \sigma_{p,i} - 2 \rho_{c,1i} \sigma_{c,1} \sigma_{c,i}
\end{align*}
\paragraph{Ensemble}
Now, we construct an ensemble of $\nens$ of these classifiers.
Using soft-voting (cf. \cref{eq:ensemble}) to compute the ensemble output $\bar{\vy} = \frac{1}{\nens} \sum_{l=1}^\nens \vy^l$.
We assume the $\vy_p^i$ and $\vy_p^j$ to be correlated with $\zeta_p \Sigma_p$ for classifiers $i\neq  j$ and similarly model the correlation of $\vy_c^l$ with $\zeta_c \Sigma_c$ for $\zeta_p, \zeta_c \in [0,1]$.
Letting $\vy^{*} := [{\vy^1}^\top, ..., {\vy^k}^\top]^\top$ denote the concatenation of the logit vectors of all classifiers, we assemble their joint covariance matrix in a block-wise manner from the classifier individual covariance matrices as:
\begin{equation*}
\Cov[\vy^{*}] = \Sigma^* = \left[\begin{array}{ccc}
\Sigma_p + \Sigma_c & \cdots &  \zeta_p \Sigma_p + \zeta_c \Sigma_c\\
& \ddots & \vdots \\
&  & \Sigma_p + \Sigma_c
\end{array} \right]
\end{equation*}
We then write the corresponding classification margins $\bar{z}_i = \bar{y}_1 - \bar{y}_i$.
or in vector notation $\bar{\vz} := [\bar{z}_2, .., \bar{z}_m]^{T}$, again skipping the margin of $\bar{y}_1$ to itself.
By linearity of expectation we obtain $\E[\bar{z}_i] = \E[z_i] = c_1 - c_i$ or equivalently
\begin{equation*}
\E[\bar{\vz}] = \bar{\mu} =
\begin{psmallmatrix}
 c_{1} \\ \vdots \\ c_{1}
\end{psmallmatrix}
 - \mbf{c}_{[2:m]}.
\end{equation*}
We define the ensemble difference matrix $\mD \in \R^{m-1 \times mk}$ with elements $d_{i,j}$ such that $\bar{\vz} = \mD \vy^{*} $:
\begin{align*}
	\mD_{ij} &=
	\begin{cases}
		\frac{1}{k}, \quad & \text{if} \, j \mod m = 1\\
		\frac{-1}{k}, \quad & \text{else if } \, j \mod m = i \mod m\\
		0, \quad & \text{else}
	\end{cases}, \quad j \in [1, ..., mk], i \in [2, ..., m].
\end{align*}
This allows us to write the covariance matrix of the ensemble classification margins
\begin{align*}
\Cov[\bar{\vz}] = \bar{\Sigma} &= \mD \Sigma^* \mD^\top.
\end{align*}
Now we can evaluate its diagonal elements or use the multinomial theorem and rule on the variance of correlated sums to obtain the variance of individual terms: 
\begin{equation*}
\Var[\bar{z}_i] = \underbrace{\frac{k + 2 \binom{k}{2}\zeta_p}{k^2} (\sigma^2_{p,1} + \sigma^2_{p,i} - 2 \rho_{p,1i} \sigma_{p,1} \sigma_{p,i})}_{\sigma^2_p(\nens)} + \underbrace{\frac{k + 2 \binom{k}{2}\zeta_c}{k^2} (\sigma^2_{c,1} + \sigma^2_{c,i} - 2 \rho_{c,1i} \sigma_{c,1} \sigma_{c,i})}_{\sigma^2_c(\nens)}.
\end{equation*}

\paragraph{Variance Reduction}
We can split $\Var[\bar{z}_i] $ into the components associated with the perturbation effect $\sigma^2_p(\nens)$ and the clean prediction $\sigma^2_c(\nens)$, all as functions of the ensemble element number $\nens$.

Now, we analyze these variance components independently by normalizing them with the corresponding components of an individual classifier:
\begin{align*}
\frac{\sigma^2_p(\nens)}{\sigma^2_p(1)} &= \frac{(1 + \zeta_p (k-1))(\sigma^2_{p,1} + \sigma^2_{p,i} - 2 \rho_{p,1i} \sigma_{p,1} \sigma_{p,i})}{k (\sigma^2_{p,1} + \sigma^2_{p,i} - 2 \rho_{p,1i} \sigma_{p,1} \sigma_{p,i})} = \frac{1 + \zeta_p (\nens -1)}{\nens} \xrightarrow[]{k \to \infty} \zeta_{p}\\
\frac{\sigma^2_{c}(\nens)}{\sigma^2_{c}(1)} &= \frac{(1 + \zeta_c (k-1))(\sigma^2_{c,1} + \sigma^2_{c,i}- 2 \rho_{c,1i} \sigma_{c,1} \sigma_{c,i})}{k (\sigma^2_{c,1} + \sigma^2_{c,i}- 2 \rho_{c,1i} \sigma_{c,1} \sigma_{c,i})}  = \frac{1 + \zeta_c (\nens-1)}{\nens} \xrightarrow[]{k \to \infty} \zeta_{c}
\end{align*}
We observe that both variance components go towards their corresponding correlation coefficients $\zeta_p$ and $\zeta_c$ as ensemble size grows, highlighting the importance of non-identical classifiers.

Especially for samples that are near a decision boundary, this variance reduction will lead to much more consistent predictions, in turn significantly increasing the lower confidence bound $\underline{p}_1$ and thereby the certified radius as per \cref{thm:original}.

\section{Theory Validation} \label{app:theory_val}
In this section, we present experiments validating the modeling assumptions made in \cref{sec:theory_var} to derive the reduced variance over perturbations in RS resulting from using ensembles of similar classifiers. 
There are three main assumptions we make:
\begin{itemize}
	\item The (pre-softmax) output of a single model can be modeled as $f^{l}(\vx) = \vy_p + \vy_c$, where $\vy_c$ captures the prediction on clean samples and $\vy_p$ the effect of perturbations with mean zero.
	\item The clean and perturbation components $\vy_c$ and $\vy_p$ are independent.
	\item The covariance of $\vy^l_p$ and $\vy^l_c$ between different classifiers $i \neq j$ can be parametrized as $\Cov(\vy^i_p,\vy^j_p) = \zeta_p \Sigma_p$ and similarly $\Cov(\vy^i_c,\vy^j_c) = \zeta_c \Sigma_c$.
\end{itemize}

To show that these assumptions are valid, we additionally assume multivariate Gaussian distributions and evaluate an ensemble of $k=50$ \RNS on $100$ clean samples $\vx$, each perturbed with $n=1000$ different error terms $\epsilon$, and compare the obtained distributions and covariance matrices with our modeling. We generally observe excellent agreement between observation and model at low noise level $\sigma_{\epsilon} = 0.25$ and good agreement at a high noise level of $\sigma_{\epsilon} = 1.0$.

\paragraph{Individual classifier predictions}
We model the predictions of individual classifiers $f \colon \R^d \mapsto \R^{\nclass}$ on perturbed inputs $\vx + \epsilon$ for a \emph{single} arbitrary but fixed $\vx$ and Gaussian perturbations $\epsilon \sim \bc{N}(0, \sigma_{\epsilon}^2 \mI)$ with $f^{l}(\vx) = \vy_p + \vy_c$.

\begin{wrapfigure}[16]{r}{0.36\textwidth}
	\centering
	\vspace{-0.5mm}
	\includegraphics[width=1.0\linewidth]{./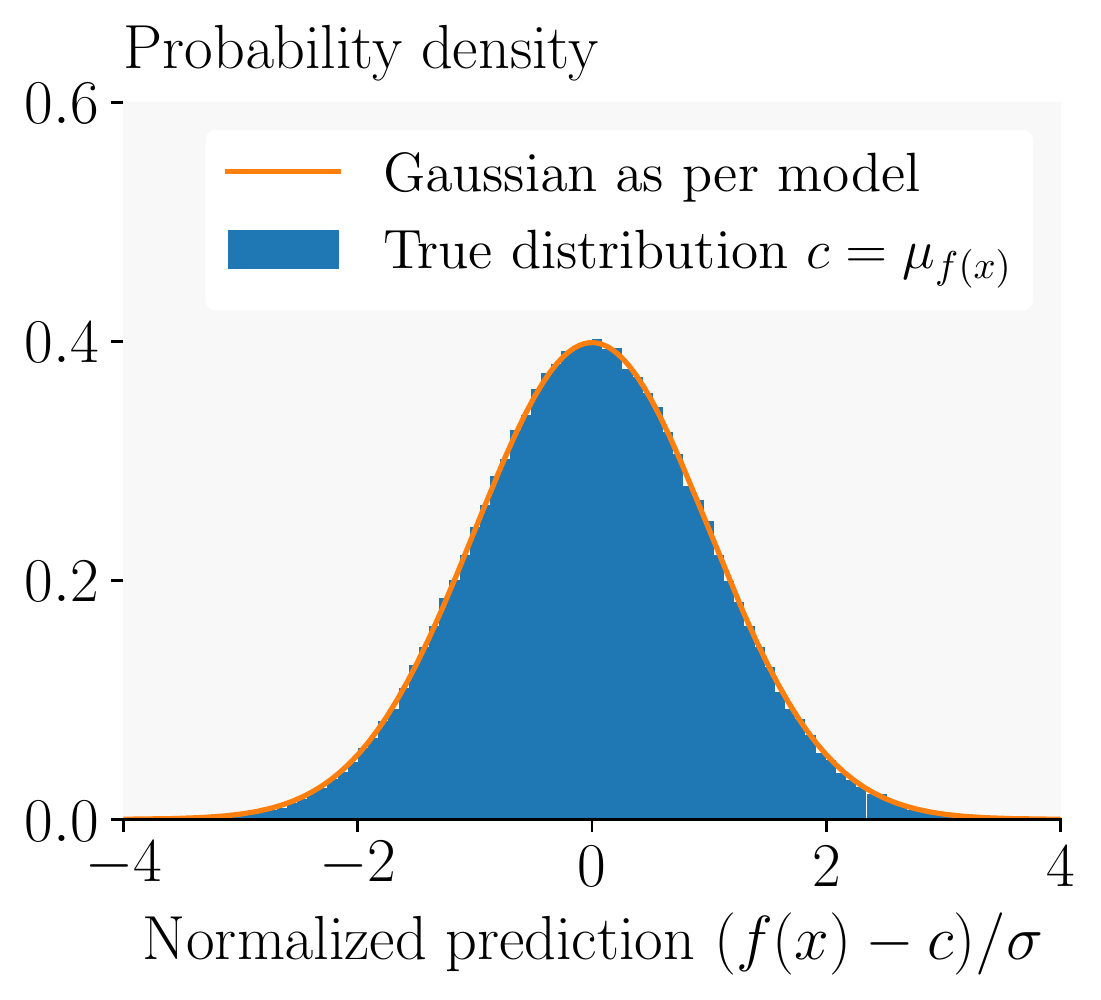}
	\vspace{-7.5mm}
	\caption{Comparison of the true and modelled distribution of clean predictions $\vy_c = \vf(\vx)$.}
	\label{fig:clean prediction}
\end{wrapfigure}We can interpret the clean component $\vy_{c}$ in two different ways: either as the prediction $\vy_{c}:=\vf(\vx)$ on the clean sample $\vx$ or as the mean prediction over perturbations of this sample $\vy_{c}:=\E_{\epsilon\sim \N(0,\sigma_{\epsilon}^2\mI)}[\vf(\vx+\epsilon)]$. In both cases, we assume $\vy_c$ to be distributed with mean $\vc \in \R^{\nclass}$ and covariance $\Sigma_c$.
The first case is illustrated in \cref{fig:clean prediction}, where we rescale both our model and the predictions with the sample-wise mean and variance over classifiers and observe excellent agreement between model and observed distribution.
The second case is illustrated in the leftmost column in \cref{fig:theory_inst} for two different $\sigma_\eps$, where we again observe excellent agreement between the true distribution and our model.
Here we denote with $\mu_{f(x')}$ the mean prediction over perturbed samples $\E_{\epsilon\sim \N(0,\sigma_{\epsilon}^2\mI)}[\vf(\vx+\epsilon)]$, corresponding to the setting of $\vy_{c}:=\E_{\epsilon\sim \N(0,\sigma_{\epsilon}^2\mI)}[\vf(\vx+\epsilon)]$ and similarly with denote with $\mu_{f(x)}$ the mean prediction over classifiers for an unperturbed sample $\E_{l}[\vf^l(\vx)]$, corresponding to the setting of $\vy_{c}:=\vf(\vx)$.

The perturbation effect $\vy_p$ is assumed to be mean zero, either following from local linearization and zero mean perturbations when defining the clean component as $\vy_{c}:=\vf(\vx)$ or directly from the definition of the clean component as $\vy_{c}:=\E_{\epsilon\sim \N(0,\sigma_{\epsilon})}[\vf(\vx+\epsilon)]$.
For small perturbations ($\sigma_{\epsilon} = 0.25$), the assumption of local linearity holds, and we observe a mean perturbation effect distributed very tightly around $0$ (see top right pane in \cref{fig:theory_inst}).
For larger perturbations ($\sigma_{\epsilon} = 1.00$), this assumption of local linearity begins to break down, and we observe a much wider spread of mean perturbation effect (see bottom right pane in \cref{fig:theory_inst}).
At both perturbation levels, we observe an excellent agreement of the perturbation effect model with the observed data when using the definition of $\vy_{c}:=\E_{\epsilon\sim \N(0,\sigma_{\epsilon})}[\vf(\vx+\epsilon)]$ (green in the middle column in \cref{fig:theory_inst}). 
Using $\vy_{c}:=\vf(\vx)$, we still observe great agreement at the lower perturbation magnitude but a notable disagreement at high perturbation levels.
While both definitions for $\vy_{c}$ are fully compatible with our derivation in \cref{sec:theory_var}, we choose to present a version based on $\vy_{c}:=\vf(\vx)$ as it allows for a better intuitive understanding.

\begin{figure}[t]
	\centering
	\makebox[\textwidth][c]{
		\begin{subfigure}[t]{0.30\textwidth}
			\centering
			\includegraphics[width=\textwidth]{./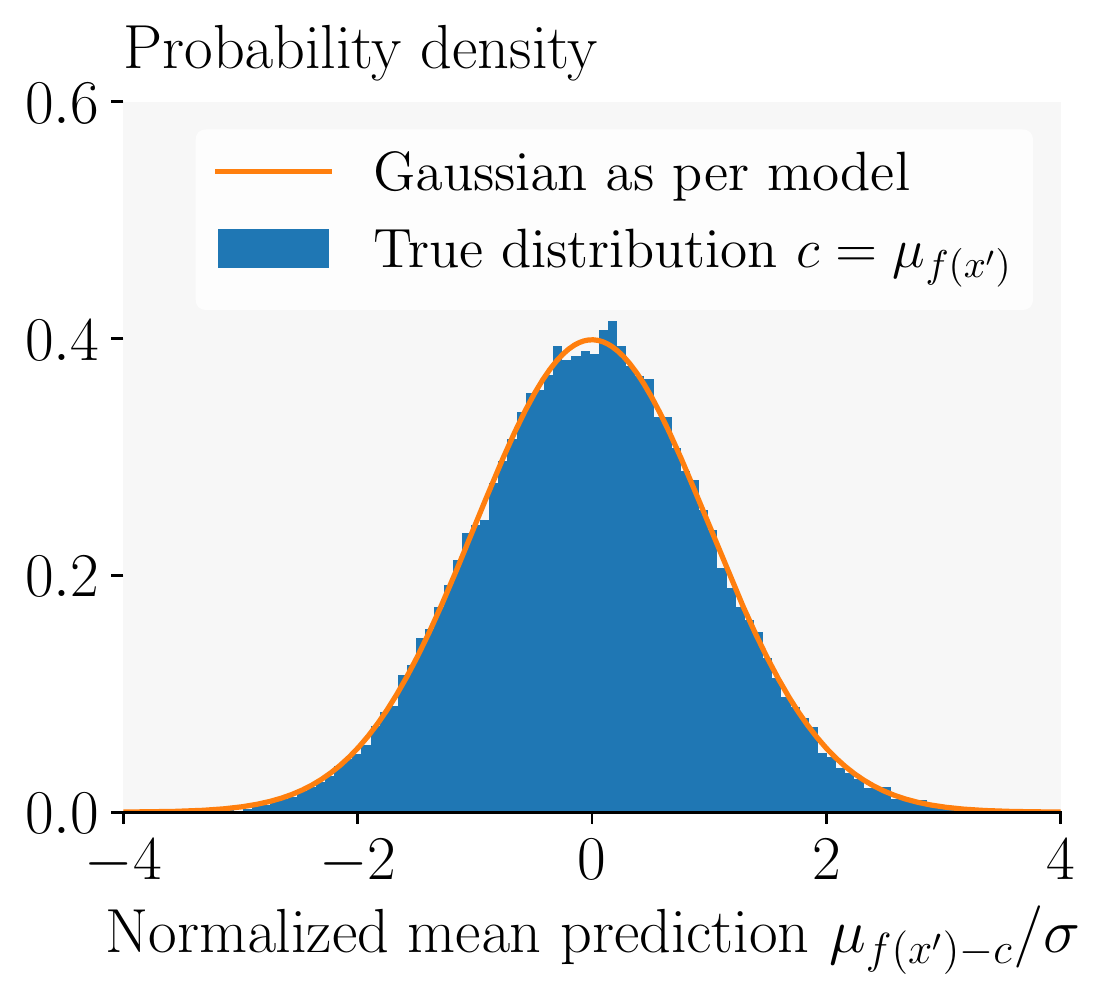}
			\vspace{-6mm}
		\end{subfigure}
		\hfill
		\begin{subfigure}[t]{0.367\textwidth}
			\centering
			\includegraphics[width=\textwidth]{./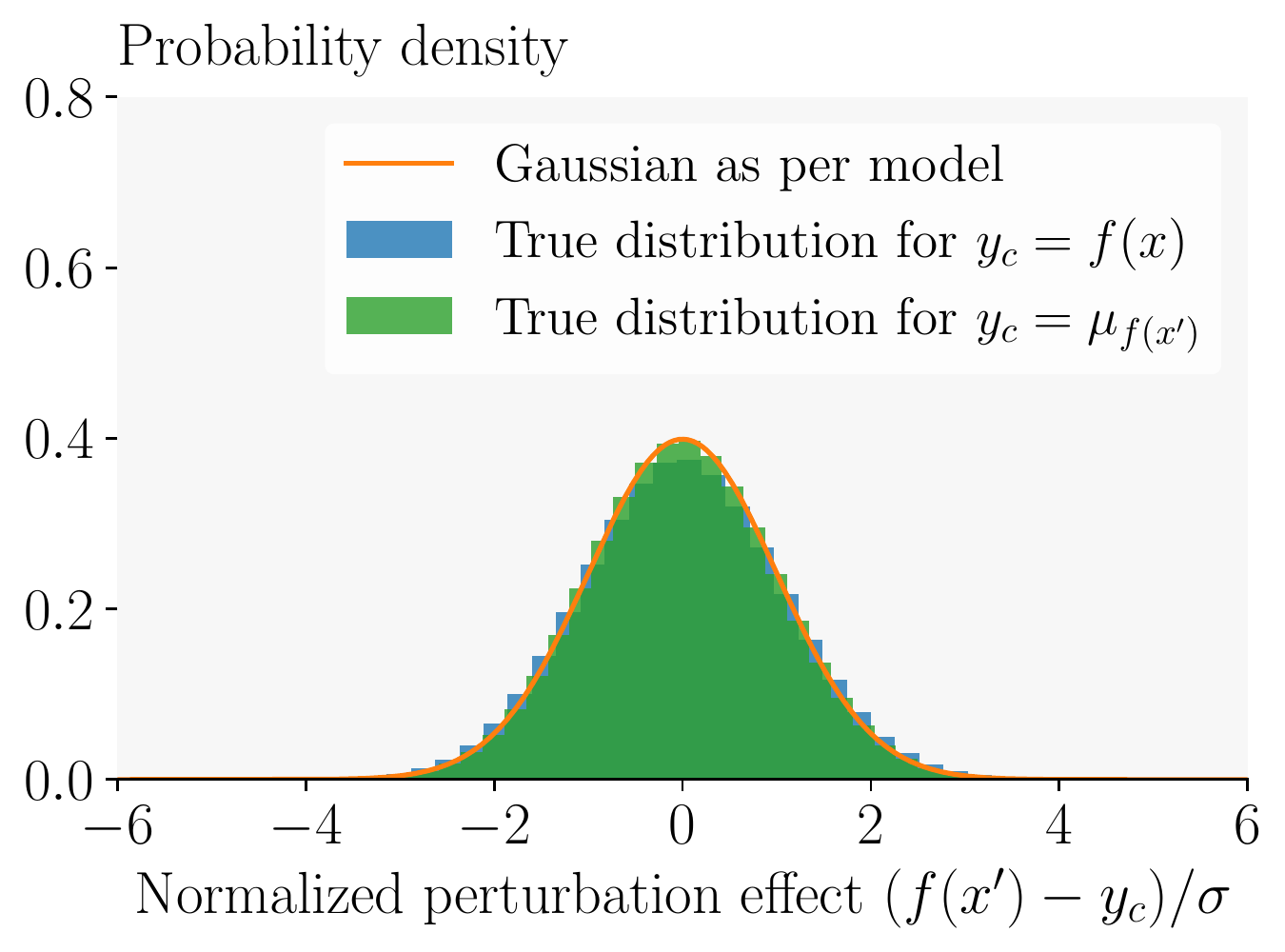}
			\vspace{-6mm}
			\subcaption{$\sigma_{\epsilon}=0.25$}
		\end{subfigure}
		\hfill
		\begin{subfigure}[t]{0.30\textwidth}
			\centering
			\includegraphics[width=1.0\linewidth]{./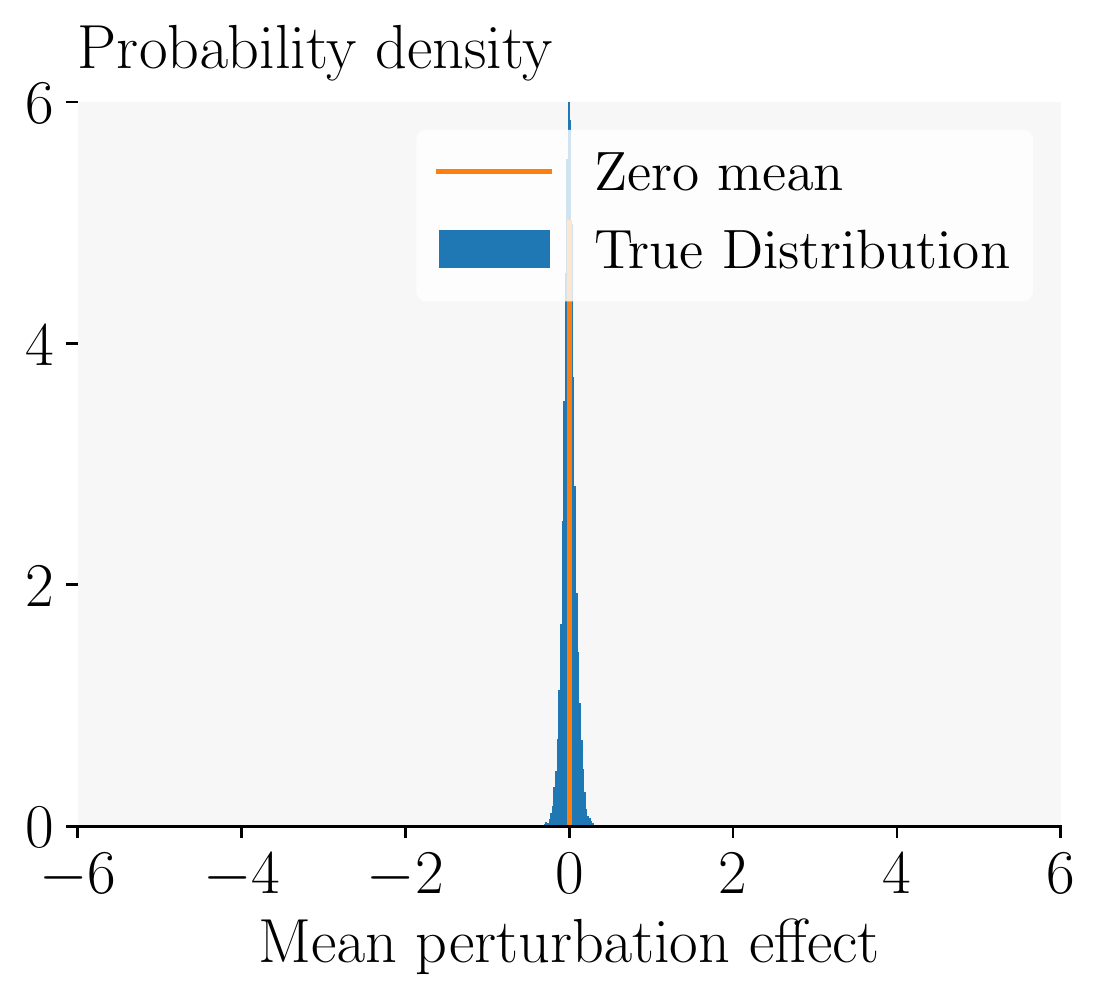}
			\vspace{-6mm}
		\end{subfigure}
	}
	\makebox[\textwidth][c]{
		\begin{subfigure}[t]{0.30\textwidth}
			\centering
			\vspace{5mm}
			\includegraphics[width=\textwidth]{./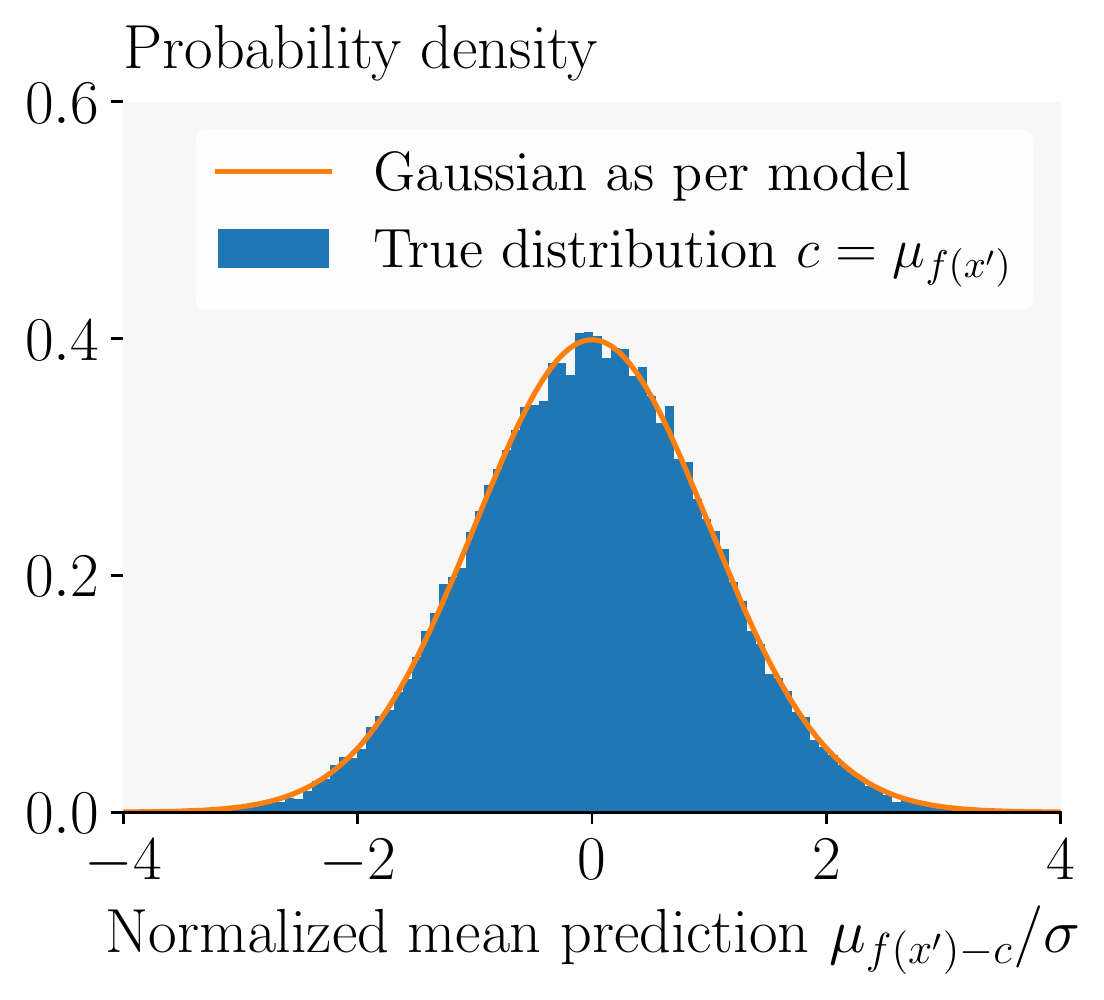}
			\vspace{-6mm}
		\end{subfigure}
		\hfill
		\begin{subfigure}[t]{0.367\textwidth}
			\centering
					\vspace{5mm}
			\includegraphics[width=\textwidth]{./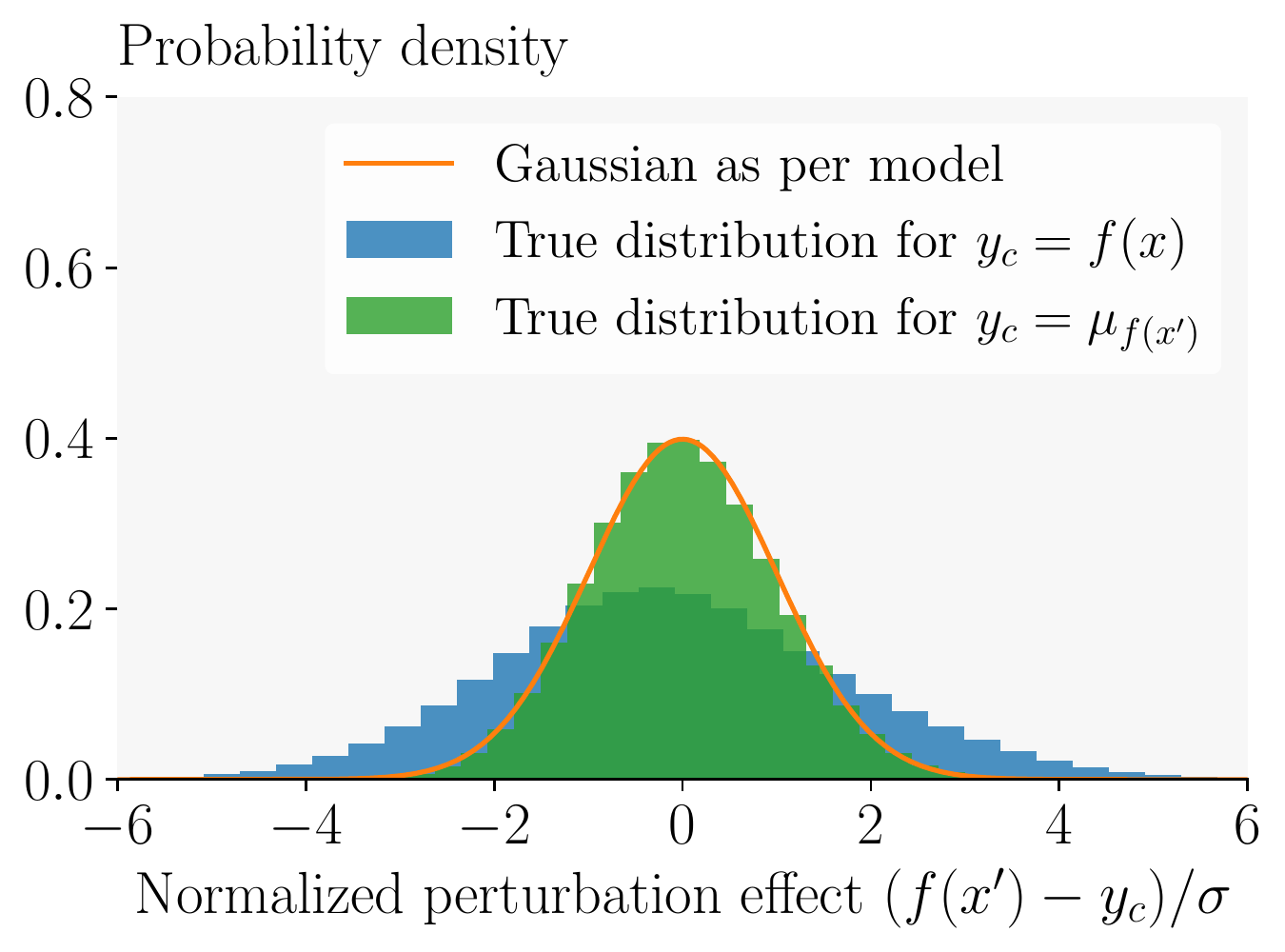}
			\vspace{-6mm}
			\subcaption{$\sigma_{\epsilon}=1.00$}
		\end{subfigure}
		\hfill
		\begin{subfigure}[t]{0.30\textwidth}
			\centering
					\vspace{5mm}
			\includegraphics[width=1.0\linewidth]{./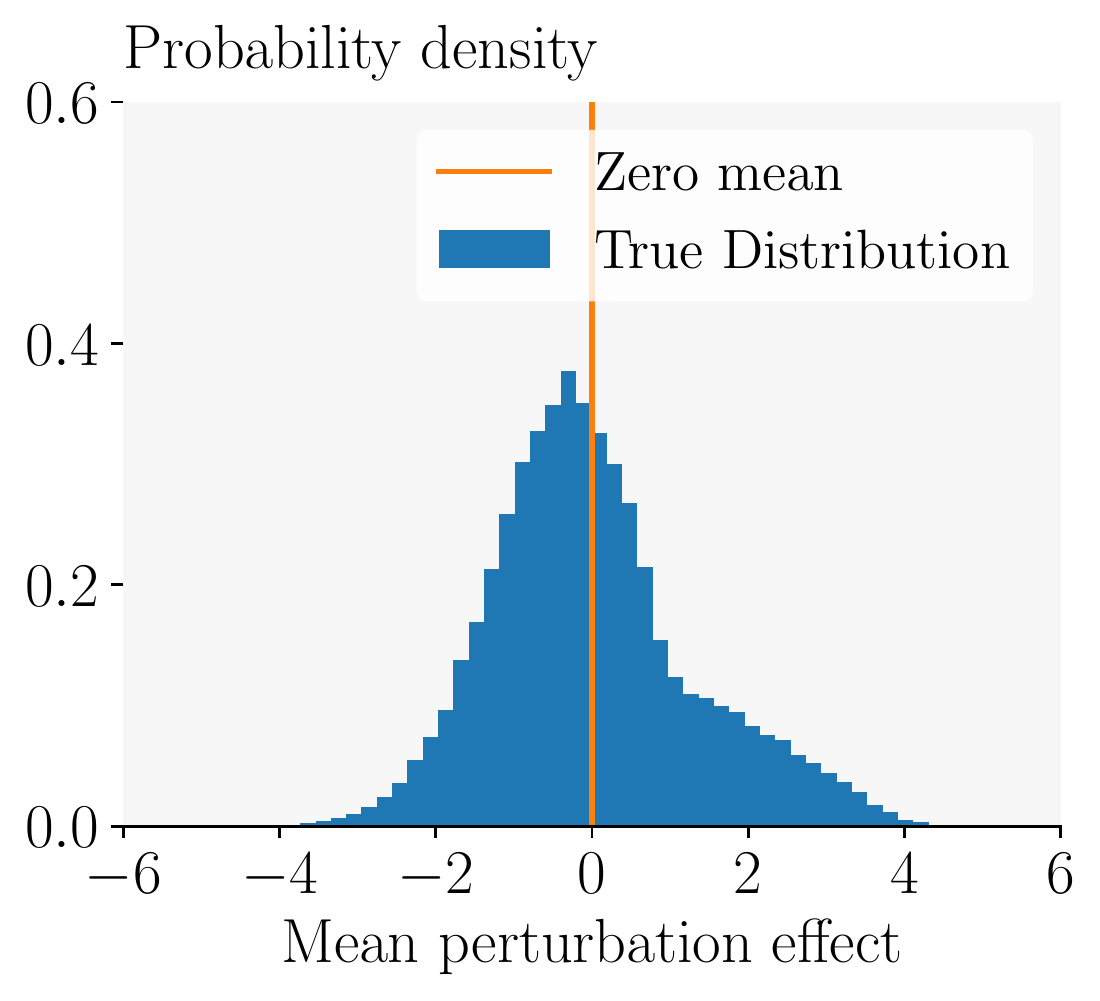}
			\vspace{-6mm}
		\end{subfigure}
	}
	\vspace{-0mm}
	\caption{Comparison of modeled and true distributions of clean prediction (left), perturbation effect (middle), and mean perturbation effect (right) for $\sigma_{\epsilon}=0.25$ (top) and $\sigma_{\epsilon}=0.5$ (bottom).}
	\label{fig:theory_inst}
	\vspace{-4.5mm}
\end{figure}

\paragraph{Correlation between clean and error component}
\begin{figure}[t]
	\centering
	\makebox[\textwidth][c]{
		\begin{subfigure}[c]{0.30\textwidth}
			\centering
			\includegraphics[width=\textwidth]{./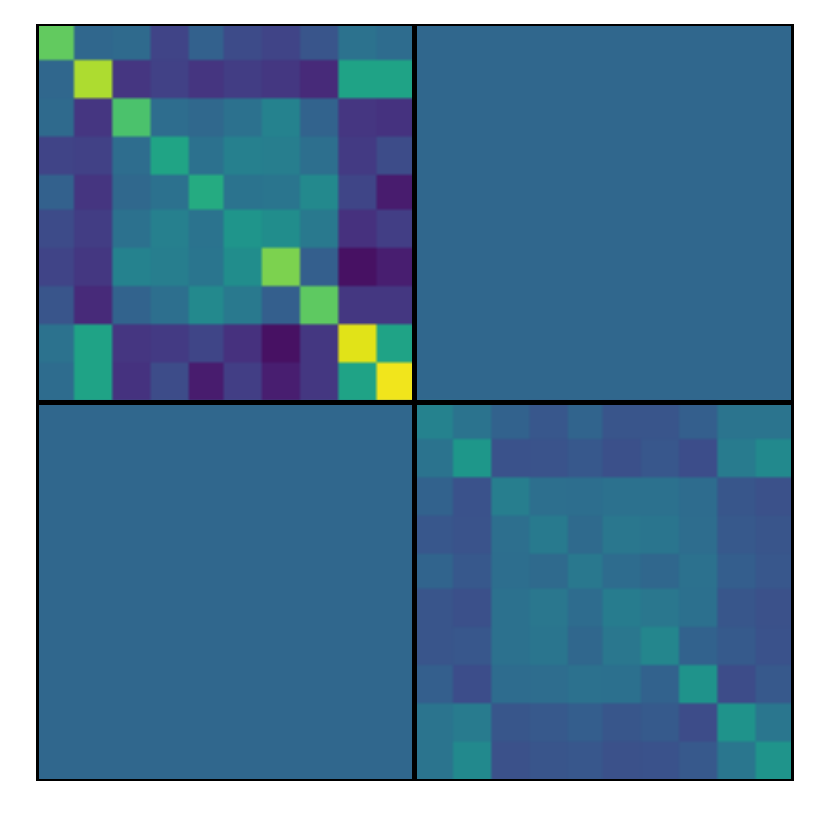}
			\vspace{-6mm}
			\subcaption{$\Cov(\vy_c:=\mu_f(x'), \vy_p)$}
			\label{fig:cov_mat_intra_mean}
		\end{subfigure}
		\hspace{0mm}
		\begin{subfigure}[c]{0.30\textwidth}
			\centering
			\includegraphics[width=\textwidth]{./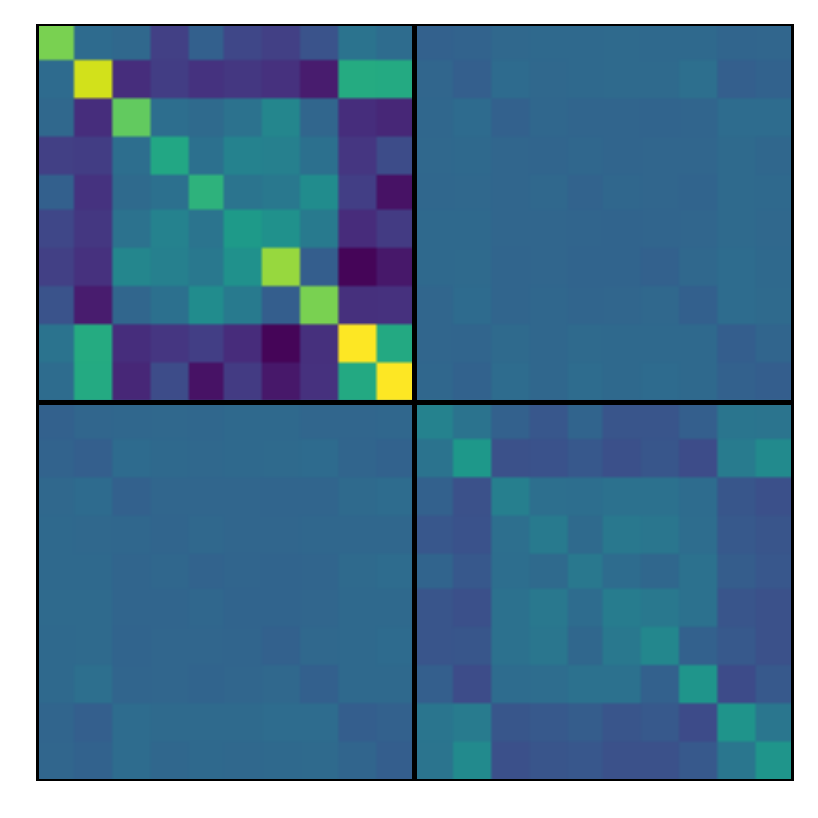}
			\vspace{-6mm}
			\subcaption{$\Cov(\vy_c:=\vf(\vx), \vy_p)$}
			\label{fig:cov_mat_intra_clean}
		\end{subfigure}
		\begin{subfigure}[c]{0.1\textwidth}
		\centering
		\includegraphics[width=\textwidth]{./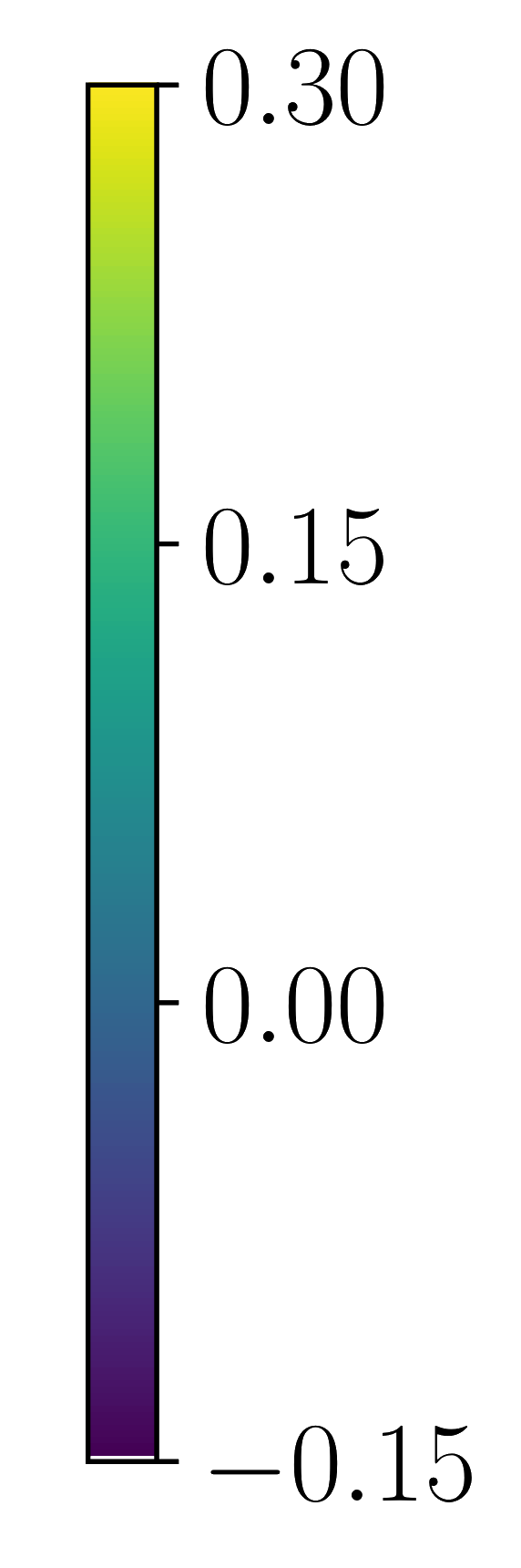}
		\vspace{-.5mm}
		\end{subfigure}
	}
	\vspace{-2mm}
	\caption{Covariance $\Cov(\vy_c, \vy_p)$ of clean and perturbation components $\vy_c$ and $\vy_p$, respectively, for $\sigma_{\epsilon}=0.25$ using $\vy_c:=\mu_f(x')$ (left) or $\vy_c:=\vf(\vx)$ (right). The upper diagonal block corresponds to $\Cov(\vy_c,\vy_c)$ and the lower one to $\Cov(\vy_p,\vy_p)`$.}
	\label{fig:cov_mat_intra}
	\vspace{-1.5mm}
\end{figure}
We model $\vy_c$ and $\vy_p$ separately to distinguish between randomness over perturbations, which are introduced during RS, and over the models introduced during training with different random seeds.
As $\vy_c$ models the large scale effects of training and $\vy_p$ the smaller local scale local effects of inference time perturbations, we assume them to be independent, corresponding to a covariance $\Cov(y_{p,i},y_{c,j}) \approx 0 \; \forall i,j$.
Depending on which definition of $\vy_c$ we use, we obtain either of the two covariance matrices in \cref{fig:cov_mat_intra}, where the upper left-hand block is the covariance of $\vy_c$, the lower righthand block the covariance of $\vy_p$ and the off-diagonal blocks $\Cov(y_{p,i},y_{c,j})$. 
In the setting of $\vy_c:=\mu_f(x')$, the two components $\vy_c$ and $\vy_p$ are perfectly independent as can be seen by the $0$ off-diagonal blocks in the covariance matrix shown in \cref{fig:cov_mat_intra_mean}. In the setting of $\vy_c:=\vf(\vx)$, very small terms can be seen in the off-diagonal blocks shown in \cref{fig:cov_mat_intra_clean}.
Overall, the agreement with our model is excellent in both cases.

\begin{figure}[t]
	\centering
	\begin{subfigure}[t]{1\textwidth}
	\makebox[\textwidth][c]{
		\begin{subfigure}[t]{0.30\textwidth}
			\centering
			\stackunder[0pt]{\includegraphics[width=\textwidth]{./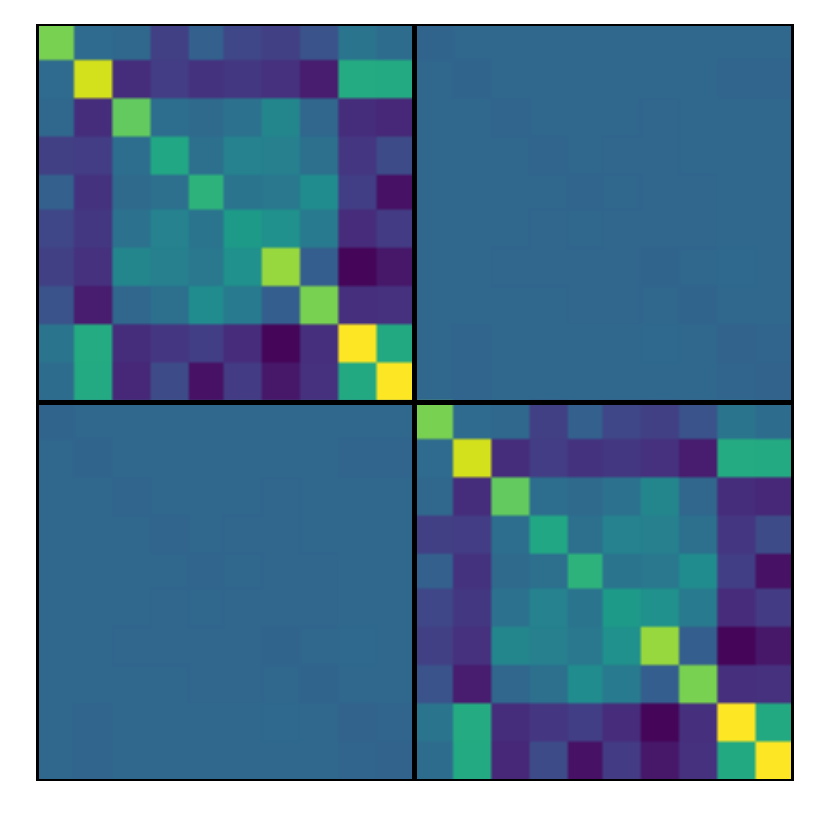}}{\footnotesize True covariance}
			\vspace{-6mm}
		\end{subfigure}
		\hfill
		\begin{subfigure}[t]{0.30\textwidth}
			\centering
			\stackunder[0pt]{\includegraphics[width=\textwidth]{./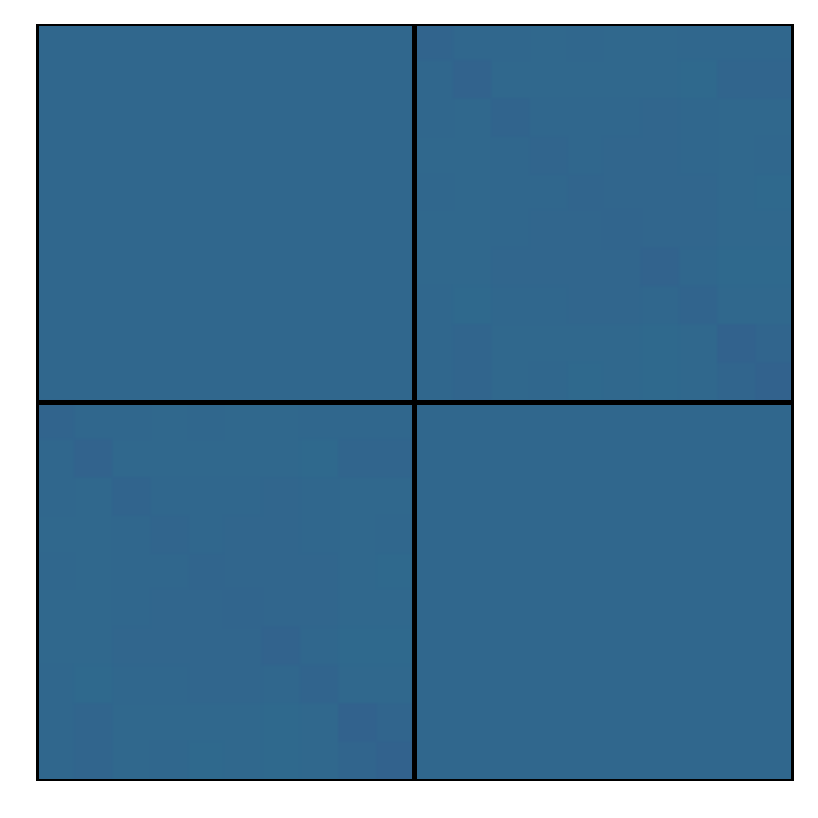}}{\footnotesize Delta}
			\vspace{-6mm}
		\end{subfigure}
		\hfill
		\begin{subfigure}[t]{0.30\textwidth}
			\centering
			\stackunder[0pt]{\includegraphics[width=1.0\linewidth]{./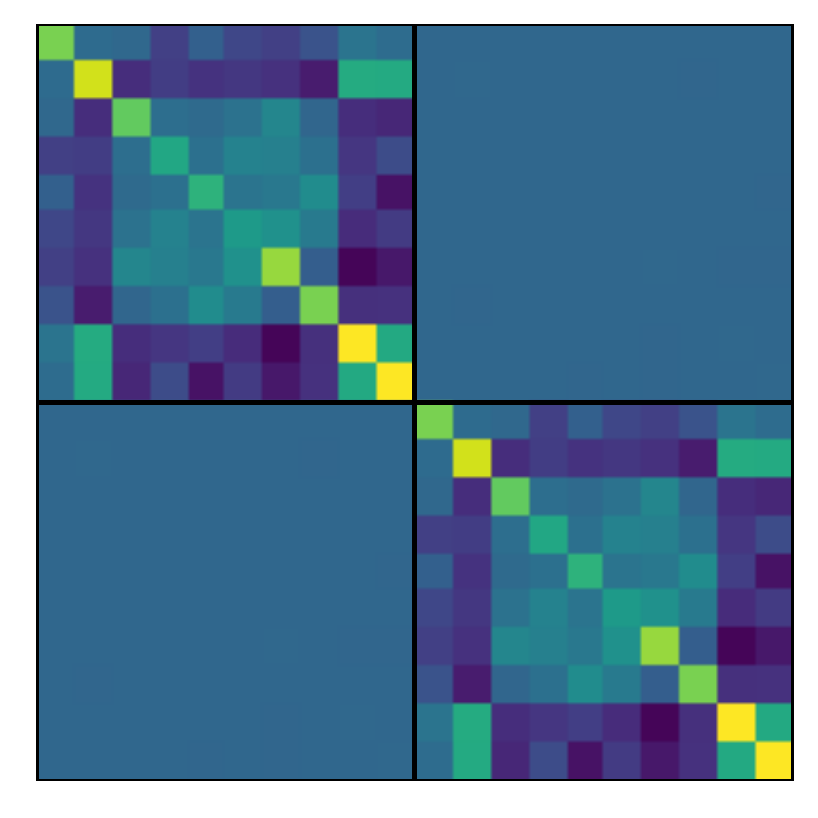}}{\footnotesize Modeled covariance}
			\vspace{-6mm}
		\end{subfigure}
		\begin{subfigure}[t]{0.1\textwidth}
		\centering
		\includegraphics[width=\textwidth]{./figures/cov_legend.png}
		\vspace{-.5mm}
		\end{subfigure}	
		}
	\vspace{-1mm}
	\subcaption{$\Cov(\vy^i_{c},\vy^j_{c})$ and $\zeta_{c} = 0.0051\qquad\qquad$}
	\end{subfigure}
	\begin{subfigure}[t]{1\textwidth}
		\makebox[\textwidth][c]{
			\begin{subfigure}[t]{0.30\textwidth}
				\centering
				\stackunder[0pt]{\includegraphics[width=\textwidth]{./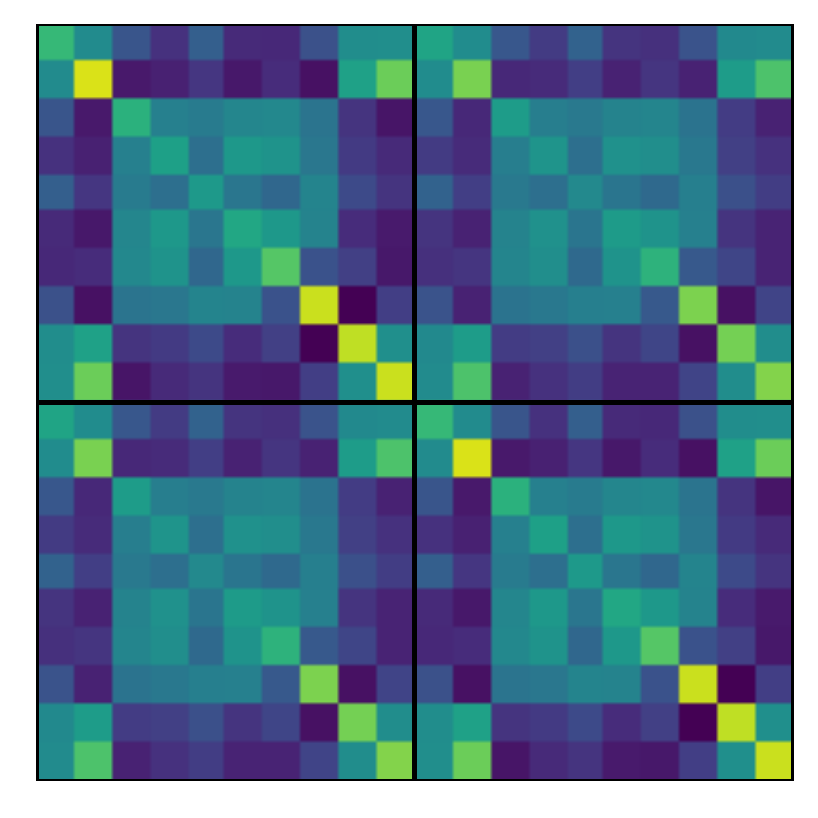}}{\footnotesize True covariance}
				\vspace{-6mm}
			\end{subfigure}
			\hfill
			\begin{subfigure}[t]{0.30\textwidth}
				\centering
				\stackunder[0pt]{\includegraphics[width=\textwidth]{./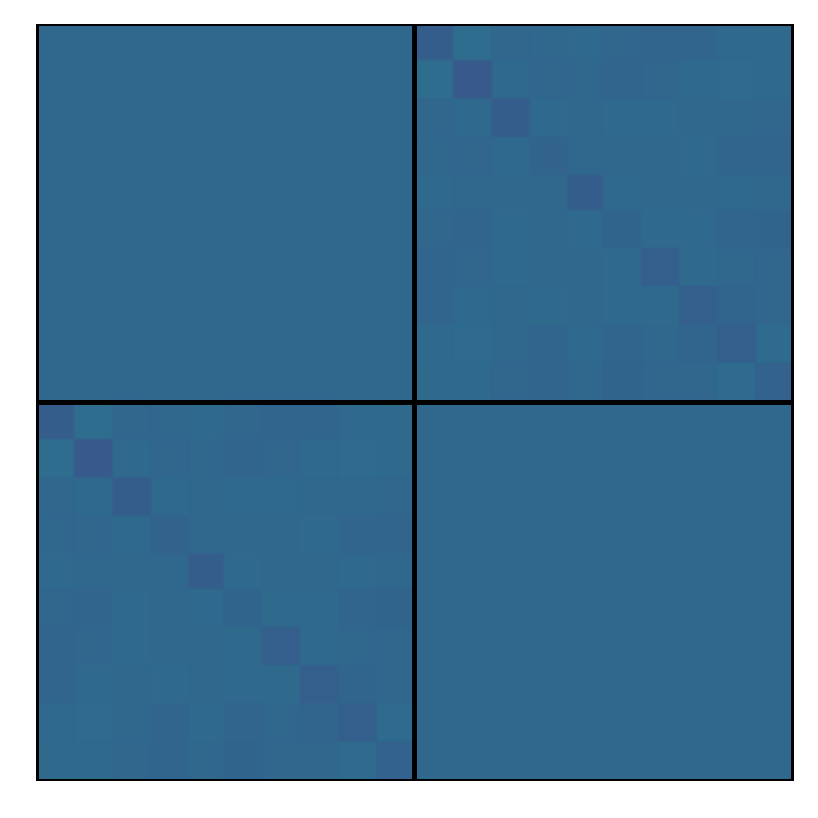}}{\footnotesize Delta}
				\vspace{-6mm}

			\end{subfigure}
			\hfill
			\begin{subfigure}[t]{0.30\textwidth}
				\centering
				\stackunder[0pt]{\includegraphics[width=1.0\linewidth]{./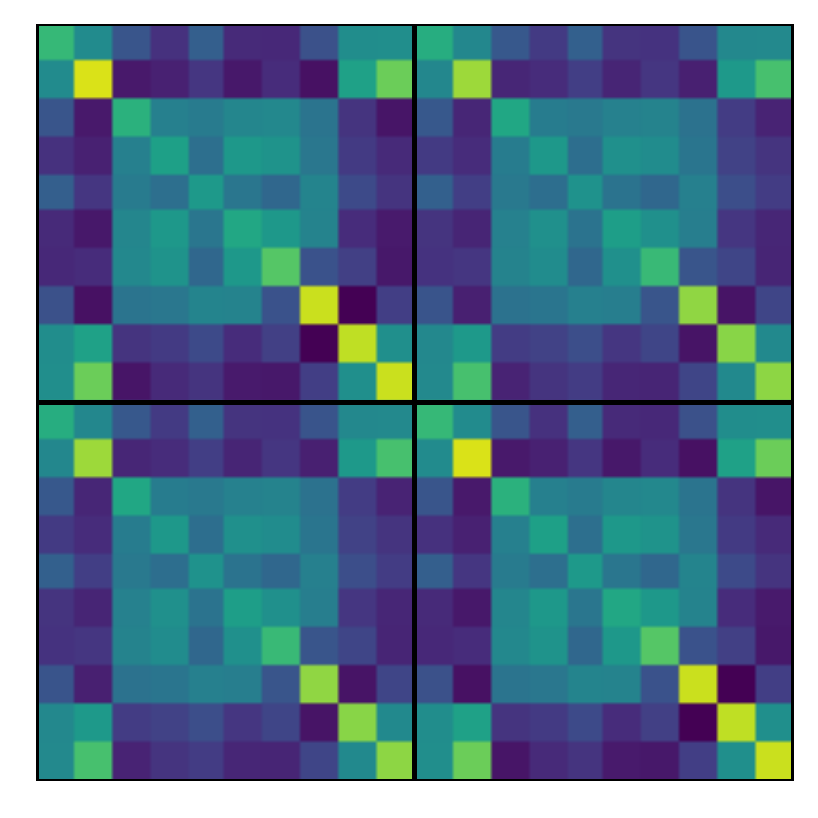}}{\footnotesize Modeled covariance}
				\vspace{-6mm}
			\end{subfigure}
			\begin{subfigure}[t]{0.1\textwidth}
			\centering
			\includegraphics[width=\textwidth]{./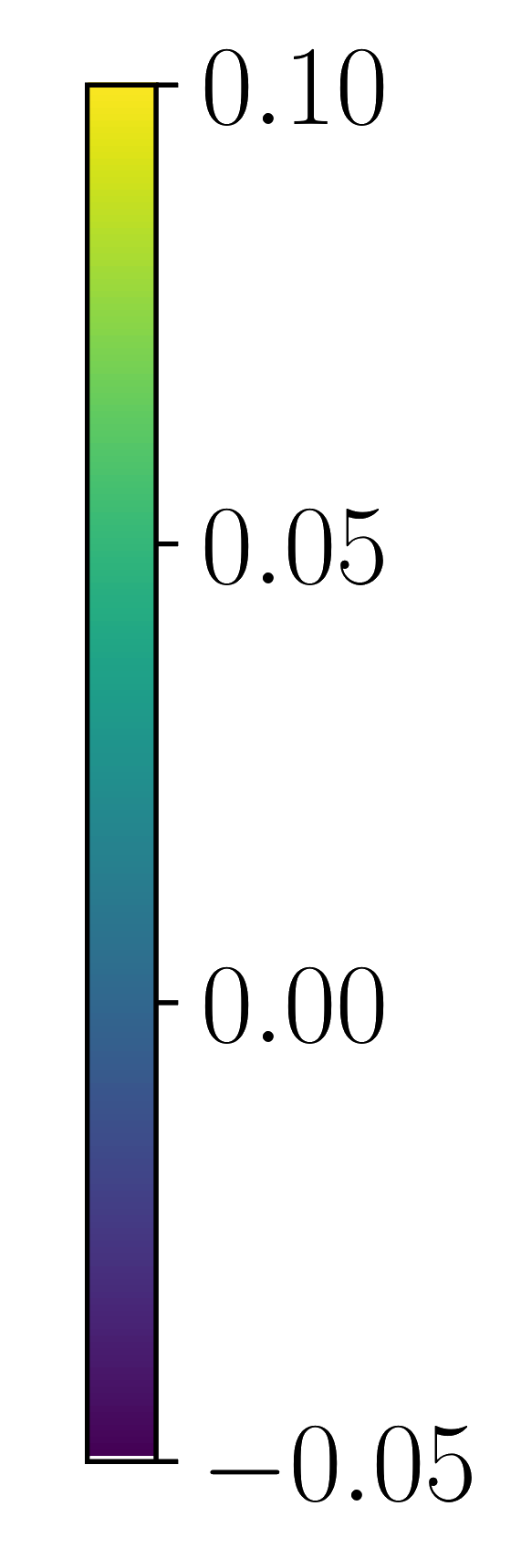}
			\vspace{-.5mm}
			\end{subfigure}
		}
		\vspace{-1mm}
		\subcaption{$\Cov(\vy^i_{p},\vy^j_{p})$ and $\zeta_{p} = 0.8519\qquad\qquad$}
	\end{subfigure}
	\vspace{-2mm}
	\caption{Comparison of modeled (right) and true (left) covariance matrix between the clean and perturbation components $\vy^l_c$ (top) and $\vy^l_p$ (bottom), respectively, of different classifiers for $\sigma_{\epsilon}=0.25$ using $\vy_c:=\vf(\vx)$.}
	\label{fig:cov_inter}
	\vspace{-1.5mm}
\end{figure}

\paragraph{Inter classifier correlation structure}
In our analysis, we consider ensembles of similar classifiers which are based on the same architecture and trained using the same methods, just using different random seeds. Hence, we assume that the correlation between the logits of different classifiers has a similar structure but smaller magnitude than the correlation between the logits of the same classifier. 
Correspondingly, we parametrize the covariance between $\vy_p^i$ and $\vy_p^j$ for classifiers $i\neq j$ with $\zeta_p \Sigma_p$ and similarly between $\vy_c^i$ and $\vy_c^j$ with $\zeta_c \Sigma_c$ for $\zeta_p, \zeta_c \in [0,1]$.
To evaluate this assumption, we consider all $\binom{50}{2}$ pairings of classifiers and compute the resulting pairwise covariance matrices for the clean components $\vy^l_c:=\vf^l(\vx)$ and perturbation effects $\vy^l_p$.
In \cref{fig:cov_inter} we show the resulting covariance matrices (left column), our model (right column), and the delta between the two (center column). Again we observe excellent agreement to the point that even given the delta maps, it is hard to spot the differences in the covariance matrices.

\paragraph{Model fit conclusion} We conclude that our model is able to capture the relevant behavior even of deep networks (here \RNS) with great accuracy that goes beyond what is required for a theoretical analysis to allow for interesting insights into the underlying mechanics.

\section{Proof of \cref{thm:adaptive}}
\label{sec:proof_adaptive}

We first restate the theorem:
\begin{theorem*}[\cref{thm:adaptive} restated]
  For $\alpha, \beta \in [0, 1], s \in \mathbb{N}^+, n_1 < \dots < n_s$, \textsc{CertifyAdp}:
  \begin{enumerate}
	  \itemsep0.025em
	  \item{returns $\hat{c}_A$ if at least $1-\alpha$ confident that $G$ is robust with a radius of at least $r$.}
	  \item returns \abstain before stage s only if at least $1-\beta$ confident that robustness of $G$ at radius $r$ can not be shown.
	  \item{for $n_s\!\geq\! \ceil{n(1 \! -\! \log_{\alpha}(s))}$ has maximum certifiable radii at least as large as \textsc{Certify} for $n$.}
  \end{enumerate}
\end{theorem*}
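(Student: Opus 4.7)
The three claims decompose naturally and are proved independently. Parts (1) and (2) are soundness statements that follow from Bonferroni (union-bound) correction applied to the Clopper--Pearson confidence intervals used inside \certadp, and part (3) reduces to an algebraic comparison of Clopper--Pearson lower bounds at different sample sizes and confidence levels.

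\textbf{Part (1): Certification soundness.} At each of the $s$ stages, \certadp computes a Clopper--Pearson lower confidence bound $\underline{p_A}$ on $p_A := \prob_{\epsilon}(F(\vx+\epsilon) = \hat c_A)$ at level $1-\alpha/s$; by the defining property of the Clopper--Pearson interval, at any given stage $\prob(\underline{p_A} > p_A) \leq \alpha/s$. A union bound over the $s$ stages yields overall failure probability at most $\alpha$. Hence, whenever \certadp returns $\hat c_A$, the inequality $\underline{p_A} \leq p_A$ holds with probability at least $1-\alpha$. Setting $\overline{p_B} = 1-\underline{p_A}$ and invoking \cref{thm:original} certifies $G$ at radius $\sigma_\epsilon \Phi^{-1}(\underline{p_A}) \geq r$, which is precisely the test used on line 7 of \cref{alg:adaptive}.

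\textbf{Part (2): Early-abstention soundness.} Early abstention can only trigger in stages $1,\dots,s-1$, as stage $s$ aborts unconditionally at the end of the loop. At each of these $s-1$ stages, a Clopper--Pearson \emph{upper} confidence bound $\overline{p_A}$ is computed at level $1-\beta/(s-1)$, so $\prob(\overline{p_A} < p_A) \leq \beta/(s-1)$ per stage; a union bound over the $s-1$ relevant stages yields overall failure probability at most $\beta$. Consequently, whenever \certadp abstains before stage $s$, with probability at least $1-\beta$ we have $p_A \leq \overline{p_A}$ and $\sigma_\epsilon \Phi^{-1}(p_A) < r$, so any attempt to show a lower bound $\underline{p_A}$ satisfying $\sigma_\epsilon \Phi^{-1}(\underline{p_A}) \geq r$ for the fixed candidate class $\hat c_A$ must fail; in this sense certification of radius $r$ via \cref{thm:original} (for the class selected in the initial step) cannot be shown.

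\textbf{Part (3): Maximum certifiable radius.} The maximum radius \textsc{Certify} can output with $n$ samples at confidence $1-\alpha$ is realized when the count for $\hat c_A$ is $n$, giving the closed-form Clopper--Pearson lower bound $\underline{p_A} = \alpha^{1/n}$ and radius $\sigma_\epsilon \Phi^{-1}(\alpha^{1/n})$. Analogously, at the final stage of \certadp the largest possible lower bound is $(\alpha/s)^{1/n_s}$. Since $\Phi^{-1}$ is monotone, it suffices to prove $(\alpha/s)^{1/n_s} \geq \alpha^{1/n}$. Taking logarithms and dividing by $\log\alpha < 0$ (flipping the inequality), this rearranges to $n_s \geq n \cdot \frac{\log(\alpha/s)}{\log \alpha} = n(1 - \log_\alpha s)$, and taking ceilings yields exactly the hypothesis $n_s \geq \lceil n(1 - \log_\alpha s) \rceil$.

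The main obstacle is phrasing part (2) precisely: because $\hat c_A$ is selected once in the preparatory step and never revised, ``certification cannot be shown'' must be read as referring to the fixed candidate class and the specific statistical protocol, and the Bonferroni budget must be allocated only to the $s-1$ stages where early abstention is actually possible, not to the trivial final-stage abstention. The other parts are routine once the union bound and the $\alpha^{1/n}$ closed form for the extreme Clopper--Pearson bound are in hand.
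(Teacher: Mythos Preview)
Your proposal is correct and follows essentially the same approach as the paper: a three-part decomposition using Bonferroni correction over the $s$ certification tests for part (1), over the $s-1$ early-abstention tests for part (2), and the closed-form Clopper--Pearson bound $\alpha^{1/n}$ together with the algebraic equivalence for part (3). If anything, you are slightly more careful than the paper in explicitly noting that the inequality flips when dividing by $\log\alpha<0$ and in articulating the precise sense of ``cannot be shown'' in part (2).
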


\begin{proof}
  We show statements 1, 2, and 3 individually.

 \textit{Proof of 1}.
If a phase in \textsc{CertifyAdp} returns $\hat{c}_A$, then via \cref{thm:original}, $G$ will be not robust with probability at most $\alpha / s$.
Via Bonferroni correction \citep{bonferroni1936teoria}, if \textsc{CertifyAdp} returns $\hat{c}_A$, then $G$ is robust with radius at least $r$ with confidence $1 - s(\alpha / s) = 1 - \alpha$.

\textit{Proof of 2}.
If \textsc{CertifyAdp} returns \abstain in phase $s$, we have
$\overline{p_A} < p_A' = \Phi(r/\sigma_{\epsilon})$
the minimum success probability for certification at r.
By the definition of the \textsc{UpperConfidenceBound} the true success probability of $G$ will be
$p_A \leq \overline{p_A}$
with confidence at least $\beta / (s-1)$.
Hence, if phase $j$ returns \abstain, robustness of $G$ cannot be shown at radius $r$ with confidence $1 - \beta/(s-1)$, even with access to the true success probability or infinitely many samples.
Again with Bonferroni correction \citep{bonferroni1936teoria}, the overall probability that $G$ is robust at $r$ despite \textsc{CertifyAdp} abstaining early is at most $(s-1) (\beta / (s-1))= \beta$.

\textit{Proof of 3}.
Finally, to prove the last part, we assume \texttt{cnts[$\hat{c}_A$]} $= n$, yielding the largest certifiable radii for any given $n$ and $\alpha$.
Now, the largest radius provable via \cref{thm:original} with $n$ samples at $\alpha$ is $\alpha^{1 / n}$ \citep{CohenRK19}.
Similarly, for $n_s$ samples at $\alpha / s$, it is $\left(\frac{\alpha}{s}\right)^{\frac{1}{n_s}}$.
Then we have the following equivalences:
\begin{align*}
(\alpha)^{\frac{1}{n}} & = \left(\frac{\alpha}{s}\right)^{\frac{1}{n_s}} \Leftrightarrow
\alpha^{n_s} = \left(\frac{\alpha}{s}\right)^n \Leftrightarrow
n_s\log \alpha = n (\log \alpha - \log s) \Leftrightarrow
n_s = %
n (1-n \log_{\alpha}(s))
\end{align*}

Hence, if we choose $n_s = \ceil{n (1 - \log_{\alpha}(s))}$, then we can certify at least the same maximum radius with $n_s$ samples at $\alpha / s$ as with $n$ samples at $\alpha$.
Thus, overall, we can certify the same maximum radius at $\alpha$.
\end{proof}

\section{Additional Randomized Smoothing Details}
\label{sec:background_appendix}

This section contains definitions needed for the standard certification via Randomized Smoothing, \textsc{Certify}, proposed by \citet{CohenRK19}.

$\textsc{SampleUnderNoise}(F, x, n, \sigma_{\epsilon})$ first samples $n$ inputs $x_1, \dots, x_n$ with $x_1 = x + \epsilon_1, \dots, x_n = x + \epsilon_n$ for $\epsilon_1, \dots, \epsilon_n \sim \mathcal{N}(0, \sigma_{\epsilon})$.
Then it counts how often $F$ predicts which class for these $x_1, \dots, x_n$ and returns the corresponding $\nclass$ dimensional array.

$\textsc{UpperConfBnd}(k, n, 1 - \alpha)$ returns an upper bound on $p$ with confidence at least $1 - \alpha$ where $p$ is an unknown probability such that $k \sim \mathcal{B}(n, p)$.
Here, $\mathcal{B}(n, p)$ is a binomial distribution with parameters $n$ and $p$.

$\textsc{LowerConfBnd}(k, n, 1 - \alpha)$ is analogous to $\textsc{UpperConfBnd}$ but returns a lower bound on $p$ instead of an upper bound with confidence $1 - \alpha$.

We point the interested reader to \citet{CohenRK19} for more details about Randomized Smoothing.

\section{Experimental Details}
\label{sec:experimental-details}

In this section, we provide greater detail on the datasets, architectures, and training and evaluation methods used for our experiments.

\subsection{Dataset Details}
\label{sec:details-dataset}
We use the MNIST, \cifar, and \IN datasets in our experiments.

MNIST \citep{deng2012mnist} contains 60'000 training and 10'000 test set images partitioned into 10 classes for the 10 digits. We evaluate all images from the test set.

\cifar \citep{cifar} (MIT License) contains 50'000 training and 10'000 test set images partitioned into 10 classes. We evaluate every 20$^{th}$ image of the test set starting with index 0 (0, 20, 40, ..., 9980) in our experiments, following previous work \citep{CohenRK19}.

\IN \citep{ImageNet} contains 1'287'167 training and 50'000 validation images, partitioned into 1000 classes. We evaluate every 100$^{th}$ image of the validation set starting with index 0 (0, 100, 200, ..., 49900) in our experiments, following previous work \citep{CohenRK19}.

\subsection{Architecture Details}
\label{sec:details-architecture}
\begin{wraptable}[8]{r}{0.35\textwidth}
	\vspace{-4.5mm}	
	\small
	\centering
	\caption{Parameter count of the used network architectures}
	\label{tab:par_count}
	\vspace{-2mm}
	\scalebox{0.82}{
		\begin{tabular}{ccc}
			\toprule
			Dataset & Architecture & Parameter count\\
			\midrule
			\multirow{2}{*}{\cifar} & \RNS & 272'474 \\
			& \RNB & 1'730'714\\
			\cmidrule{1-1}
			\IN & \RNM & 25'557'032\\
			\bottomrule
		\end{tabular}
	}
\end{wraptable}
We use different versions of \RN \citep{He_2016_CVPR} for our experiments. 
Concretely, we evaluate ensembles of \RNS and \RNB on \cifar and ensembles of \RNM on \IN.

\RNB has about 6.35 times as many parameters as \RNS (see \cref{tab:par_count}). \RNM instantiated for \IN has substantially more parameters than \RNB instantiated for \cifar because of the significantly larger input dimension of \IN samples.

\subsection{Training Methods}
\label{sec:train-rand-smooth}
To obtain high certified radii via RS, the base model $F$ must be trained to cope with the added Gaussian noise $\epsilon$.
To this end, \citet{CohenRK19} propose data augmentation with Gaussian noise during training, referred to as \cohen in the following.
Building on this, \citet{salman2019provably} suggest \smoothadv, a combination of adversarial training \citep{madry2017towards, KurakinGB17,rony2019decoupling} with the data augmentation from \cohen. (Here, we always consider the PGD version.)
While improving accuracy, this training procedure is computationally very expensive.
\macer \citep{zhai2020macer} achieves similar performance with a cheaper training procedure, adding a loss term directly optimizing a surrogate certification radius. %
\citet{jeong2020consistency}, called \consistency in the following, replace this term with a more easily optimizable loss, further decreasing training time and improving performance.
Depending on the setting, the current state-of-the-art results are either achieved by \smoothadv, \macer, \consistency, or a combination thereof.

Further, \citet{SalmanSYKK20} present denoised smoothing, where the base classifier $f = h \circ f'$ is the composition of a denoiser $h$ that removes the Gaussian noise from the input and an underlying classifier $f'$ that is not specially adapted to Gaussian noise, e.g., accessed via an API.

\subsection{Training Details}
\label{sec:details-training}

All models are implemented in PyTorch \citep{PaszkeGMLBCKLGA19} (customized BSD license).

For \cohen training (No license specified), we use the published code by \citet{CohenRK19}.
For each network, we chose a different random seed, otherwise using identical parameters.
Overall, we trained 50 \RNS and 10 \RNB for each $\sigma_{\epsilon} \in \{0.25, 0.5, 1.0\}$ on \cifar.
For \IN, we trained 3 \RNM for $\sigma_{\epsilon}=1.0$, and took the results for $\sigma_{\epsilon} \in \{0.25, 0.50\}$ from their GitHub.

For \consistency training (MIT License), we use the code published by \citet{jeong2020consistency} and \consistency instantiation built on \cohen training. %
Similarly to \cohen, we use a different random seed and otherwise identical parameters for all networks.
We chose the parameters reported to yield the largest ACR by \citet{jeong2020consistency} for any given $\sigma_\epsilon$, except for $\sigma_{\epsilon}=0.25$ on \IN, for which no parameters were reported.
In detail, for \cifar we generally use $\eta = 0.5$, using $\lambda = 20$ for $\sigma_{\epsilon} = 0.25$ and $\lambda = 10$ for $\sigma_{\epsilon}=0.5$ and $\sigma_{\epsilon} = 1.0$ (all for both \RNS and \RNB).
In this way, we train 50 \RNS and 10 \RNB for each $\sigma_{\epsilon} \in \{0.25, 0.5, 1.0\}$.
For \IN, we use $\eta = 0.1$ and $\lambda = 5$ for $\sigma_{\epsilon} = 1.0$.
For $\sigma_{\epsilon}=0.25$ and $\sigma_{\epsilon}=0.5$, we use $\eta=0.5$ with $\lambda=10$ respectively $\lambda=5$.

For \smoothadv training (MIT License), we use the PGD based instantiation of the code published by \citet{salman2019provably}. Note that due to the long training times, we only train \smoothadv models ourselves on \cifar.
The individual models we train only differ by random seed, all using PGD attacks.
For $\sigma_{\epsilon} = 0.25$, we train with $T=10$ steps, an $\epsilon = 255/255$, $10$ epochs of warm-up, and $m_{\text{train}}=8$ noise terms per sample during training.
For $\sigma_{\epsilon} = 0.5$, we train with $T=2$ steps, an $\epsilon = 512/255$, $10$ epochs of warm-up, and $m_{\text{train}}=2$ noise terms per sample during training.
Finally, $\sigma_{\epsilon} = 1.0$, we train with $T=10$ steps, an $\epsilon = 512/255$, $10$ epochs of warm-up, and $m_{\text{train}}=2$ noise terms per sample during training.

We use the same training schedule and optimizer for all models,
i.e., stochastic gradient descent with Nesterov momentum (weight = $0.9$, no dampening), with an $\ell_2$ weight decay of $0.0001$.
For \cifar, we use a batch size of $256$ and an initial learning rate of $0.1$, reducing it by a factor of $10$ every $50$ epochs and training for a total of $150$ epochs.
For \IN, we use the same settings, only reducing the total epoch number to $90$ and decreasing the learning rate every $30$ epochs.

All single \macer (No license specified) trained models for \cifar are taken directly from \citet{zhai2020macer}.

For our experiments on denoised smoothing, we use the 4 white box denoisers with DNCNN-WIDE architecture trained with learning schedules 1, 3, 4, and 5 for \texttt{STAB ResNet110}, and the \RNB trained on unperturbed samples for $90$ epochs from \citet{SalmanSYKK20} (MIT License).

To rank the single models for \cifar, we have evaluated them on a disjunct hold-out portion of the \cifar test set.
Concretely, we use the test images with indices $1, 21, 41, ..., 9981$ to rank the single models for \cohen, \consistency, and \smoothadv trained models.
The performances of individual models values we report are those of the models with the best score on this validation set for \cifar and the best score on the test set for \IN (favoring individual models in this setting).
For ensembles of size $k<10$ and $k<50$ for \RNB and \RNS, respectively, we ensemble the $k$ models according to their performance on the hold-out set.
We note that other combinations might yield stronger ensembles, but an exhaustive search of all combinatorially many possibilities is computationally infeasible.

\cifar models were trained on single GeForce RTX 2080 Ti and \IN models on quadruple 2080 Tis.
We report the epoch-wise and total training times for individual models in \cref{tab:training_times} (when trained sequentially one at a time). 

\begin{table}[t]
	\centering
	\small
	\centering
	\caption{Reference training times for individual models on GeForce RTX2080 Tis} 
	\label{tab:training_times}
	\vspace{-1mm}
	\scalebox{0.9}{
	\begin{threeparttable}
	\begin{tabular}{cccccc}
		\toprule
		Dataset & Training & Architecture & $\#$GPUs & time per Epoch [s] & total time [h] \\
		\midrule
	    \multirow{7.0}{*}{\cifar} & \multirow{2}{*}{\cohen} & \RNB & 1 & 41.8 & 1.74 \\
		& & \RNS & 1 & 9.6 & 0.40\\
 		\cmidrule(lr){2-2}
	    & \multirow{2}{*}{\consistency} & \RNB & 1 & 79.5 & 3.31\\
		& & \RNS & 1 & 18.5 & 0.77\\
		\cmidrule(lr){2-2}
		& \smoothadv ($\sigma_{\epsilon}=0.25)$ & \RNB & 1 & 2471 & 102.96\\
		& \smoothadv ($\sigma_{\epsilon} \in \{0.5, 1.0\}$) & \RNB & 1 & 660 & 27.5\\
 		\cmidrule(lr){1-2}
		\IN & \consistency & \RNM & 4 & 3420 & 85 \\		
	    \bottomrule
	\end{tabular}
\end{threeparttable}
}
\vspace{-2mm}
\end{table}

\subsection{Experiment Timings} \label{sec:experiment_timings}
To evaluate the ensembles of \RNS and \RNB on \cifar, we use single GeForce RTX 2080 Ti, and to evaluate ensembles of \RNM on \IN, we use double 2080 Tis.

For both datasets, 500 samples are evaluated per experiment as discussed in \cref{sec:details-dataset}.
Below we list the time required for certification using \textsc{Certify} and no \kcons:
\begin{itemize}
	\item{\RNB on \cifar (1 GPU): 4.12h per single model; 41.2h per ensemble of 10}
	\item{\RNS on \cifar (1 GPU): 0.825h per single model; 41.3h per ensemble of 50}
	\item{\RNM on \IN (2 GPUs): 8.75h per single model; 26.2h per ensemble of 3}
\end{itemize}
The timing for different ensemble sizes scales linearly between full size and individual model timings. 
The time required for certification using \certadp or \kcons or both can be obtained by dividing the timings reported above with the speed-up factor TimeRF reported for the corresponding experiments.

\section{Additional Experiments}
\label{sec:additional_experiments}

In the following, we present numerous additional experiments and more detailed results for some of the experiments presented in \cref{sec:eval}.

\subsection{Additional Results on \cifar}
\label{sec:additional-cifar10-results}

\begin{table}[tp]
	\centering
	\small
	\centering
	\caption{Average certified radius (ACR) and certified accuracy at various radii for ensembles of $k$ models ($k=1$ are single models) for a various training methods and model architectures on \cifar. Larger is better.}
	\label{tab:cifar10_main}
	\resizebox{1.00\columnwidth}{!}{
	\begin{threeparttable}
	\begin{tabular}{cccccccccccccccc}
		\toprule
		\multirow{2.6}{*}{$\sigma_{\epsilon}$} & \multirow{2.6}{*}{Training} & \multirow{2.6}{*}{Architecture} &\multirow{2.6}{*}{$k$} & \multirow{2.6}{*}{ACR} & \multicolumn{11}{c}{Radius r}\\
		\cmidrule(lr){6-16}
		& & & & & 0.0 & 0.25 & 0.50 & 0.75 & 1.00 & 1.25 & 1.50 & 1.75 & 2.00 & 2.25 & 2.50 \\
		
		\midrule
	    \multirow{17.0}{*}{0.25} & \multirow{5}{*}{\cohen} & \multirow{2}{*}{\RNB} & 1 & 0.450 & 77.6 & 60.0 & 45.6 & 30.6 & 0.0 & 0.0 & 0.0 & 0.0 & 0.0 & 0.0 & 0.0 \\
		&   &   & 10 & 0.541 & \textbf{83.4} & \textbf{70.6} & 55.4 & 42.0 & 0.0 & 0.0 & 0.0 & 0.0 & 0.0 & 0.0 & 0.0 \\
		\cmidrule(lr){3-3}
		&   & \multirow{3}{*}{\RNS} & 1 & 0.434 & 77.4 & 63.4 & 43.6 & 26.6 & 0.0 & 0.0 & 0.0 & 0.0 & 0.0 & 0.0 & 0.0 \\
		&   &   & 5 & 0.500 & 80.2 & 65.4 & 50.4 & 36.8 & 0.0 & 0.0 & 0.0 & 0.0 & 0.0 & 0.0 & 0.0 \\
		&   &   & 50 & 0.517 & 80.4 & 68.6 & 52.8 & 38.8 & 0.0 & 0.0 & 0.0 & 0.0 & 0.0 & 0.0 & 0.0 \\
		\cmidrule(lr){2-3}
		& \multirow{5}{*}{\consistency}  & \multirow{2}{*}{\RNB} & 1 & 0.546 & 75.6 & 65.8 & 57.2 & 46.4 & 0.0 & 0.0 & 0.0 & 0.0 & 0.0 & 0.0 & 0.0 \\
		&   &   & 10 & \textbf{0.583} & 76.8 & 70.4 & \textbf{60.4} & 51.6 & 0.0 & 0.0 & 0.0 & 0.0 & 0.0 & 0.0 & 0.0 \\
		\cmidrule(lr){3-3}
		&   & \multirow{3}{*}{\RNS} & 1 & 0.528 & 71.8 & 64.2 & 55.6 & 45.2 & 0.0 & 0.0 & 0.0 & 0.0 & 0.0 & 0.0 & 0.0 \\
		&   &   & 5 & 0.547 & 73.0 & 65.4 & 57.6 & 47.6 & 0.0 & 0.0 & 0.0 & 0.0 & 0.0 & 0.0 & 0.0 \\
		&   &   & 50 & 0.551 & 73.0 & 64.8 & 57.0 & 50.2 & 0.0 & 0.0 & 0.0 & 0.0 & 0.0 & 0.0 & 0.0 \\
		\cmidrule(lr){2-3}
		& \multirow{2}{*}{\smoothadv} & \multirow{2}{*}{\RNB} & 1 & 0.527 & 70.4 & 62.8 & 54.2 & 48.0 & 0.0 & 0.0 & 0.0 & 0.0 & 0.0 & 0.0 & 0.0\\
		& & & 10 & 0.560 & 71.6 & 64.8 & 57.8 & \textbf{52.4} & 0.0 & 0.0 & 0.0 & 0.0 & 0.0 & 0.0 & 0.0\\
		\cmidrule(lr){2-3}
		& \macer & \RNB & 1 & 0.518 & 77.4 & 69.0 & 52.6 & 39.4 & 0.0 & 0.0 & 0.0 & 0.0 & 0.0 & 0.0 & 0.0 \\
		
		\midrule
		\multirow{17.0}{*}{0.50} & \multirow{5}{*}{\cohen} & \multirow{2}{*}{\RNB} & 1 & 0.535 & 65.8 & 54.2 & 42.2 & 32.4 & 22.0 & 14.8 & 10.8 & 6.6 & 0.0 & 0.0 & 0.0 \\
		&   &   & 10 & 0.648 & \textbf{69.0} & \textbf{60.4} & \textbf{49.8} & 40.0 & 29.8 & 19.8 & 15.0 & 9.6 & 0.0 & 0.0 & 0.0 \\
		\cmidrule(lr){3-3}
		&   & \multirow{3}{*}{\RNS} & 1 & 0.534 & 65.2 & 55.0 & 43.0 & 33.0 & 22.4 & 16.2 & 9.6 & 5.0 & 0.0 & 0.0 & 0.0 \\
		&   &   & 5 & 0.615 & 67.6 & 58.4 & 47.4 & 38.8 & 27.4 & 19.8 & 13.2 & 7.0 & 0.0 & 0.0 & 0.0 \\
		&   &   & 50 & 0.630 & 67.2 & 59.4 & 48.6 & 39.2 & 29.0 & 21.6 & 14.6 & 8.2 & 0.0 & 0.0 & 0.0 \\
		\cmidrule(lr){2-3}
		& \multirow{5}{*}{\consistency}  & \multirow{2}{*}{\RNB} & 1 & 0.708 & 63.2 & 54.8 & 48.8 & 42.0 & 36.0 & 29.8 & 22.4 & 16.4 & 0.0 & 0.0 & 0.0 \\
		&   &   & 10 & \textbf{0.756} & 65.0 & 59.0 & 49.4 & \textbf{44.8} & 38.6 & 32.0 & 26.2 & 19.8 & 0.0 & 0.0 & 0.0 \\
		\cmidrule(lr){3-3}
		&   & \multirow{3}{*}{\RNS} & 1 & 0.691 & 62.6 & 55.2 & 47.4 & 41.8 & 34.6 & 28.4 & 21.8 & 16.8 & 0.0 & 0.0 & 0.0 \\
		&   &   & 5 & 0.723 & 62.2 & 55.0 & 48.6 & 42.6 & 36.4 & 29.8 & 23.4 & 20.6 & 0.0 & 0.0 & 0.0 \\
		&   &   & 50 & 0.729 & 61.6 & 55.8 & 49.2 & 43.0 & 37.8 & 30.6 & 24.2 & 20.0 & 0.0 & 0.0 & 0.0 \\
		\cmidrule(lr){2-3}
		& \multirow{2}{*}{\smoothadv} & \multirow{2}{*}{\RNB} & 1 & 0.707 & 52.6 & 47.6 & 46.0 & 41.2 & 37.2 & 31.8 & 28.0 & 23.4 & 0.0 & 0.0 & 0.0\\
		& & & 10 & 0.730 & 52.4 & 48.6 & 45.8 & 42.6 & \textbf{38.8} & \textbf{34.4} & \textbf{30.4} & \textbf{25.0} & 0.0 & 0.0 & 0.0\\
		\cmidrule(lr){2-3}
		& \macer & \RNB & 1 & 0.668 & 62.4 & 54.4 & 48.2 & 40.2 & 33.2 & 26.8 & 19.8 & 13.0 & 0.0 & 0.0 & 0.0 \\
		
		\midrule
		\multirow{17.0}{*}{1.00} & \multirow{5}{*}{\cohen} & \multirow{2}{*}{\RNB} & 1 & 0.532 & 48.0 & 40.0 & 34.4 & 26.6 & 22.0 & 17.2 & 13.8 & 11.0 & 9.0 & 5.8 & 4.2 \\
		&   &   & 10 & 0.607 & 49.4 & \textbf{44.0} & 37.6 & 29.6 & 24.8 & 20.0 & 16.4 & 13.6 & 11.2 & 9.4 & 6.8 \\
		\cmidrule(lr){3-3}
		&   & \multirow{3}{*}{\RNS} & 1 & 0.538 & 48.0 & 41.2 & 35.0 & 27.8 & 21.6 & 17.8 & 14.8 & 12.0 & 9.0 & 5.6 & 3.4 \\
		&   &   & 5 & 0.590 & 49.2 & 42.8 & 37.8 & 30.4 & 24.0 & 19.4 & 16.2 & 13.8 & 11.2 & 8.2 & 5.0 \\
		&   &   & 50 & 0.597 & \textbf{49.6} & 43.0 & 37.4 & 30.4 & 23.6 & 18.6 & 15.8 & 13.6 & 11.2 & 9.0 & 5.0 \\
		\cmidrule(lr){2-3}
		& \multirow{5}{*}{\consistency} & \multirow{2}{*}{\RNB} & 1 & 0.778 & 45.4 & 41.6 & 37.4 & 33.6 & 28.0 & 25.6 & 23.4 & 19.6 & 17.4 & 16.2 & 14.6 \\
		&   &   & 10 & 0.809 & 46.4 & 42.6 & 37.2 & 33.0 & 29.4 & 25.6 & 23.2 & 21.0 & 17.6 & 16.2 & 14.6 \\
		\cmidrule(lr){3-3}
		&   & \multirow{3}{*}{\RNS} & 1 & 0.757 & 43.6 & 39.8 & 34.8 & 30.8 & 27.6 & 24.6 & 22.6 & 19.4 & 17.4 & 15.4 & 13.8 \\
		&   &   & 5 & 0.779 & 43.4 & 40.0 & 35.6 & 32.2 & 28.0 & 24.8 & 22.2 & 20.4 & 17.4 & 15.8 & 13.8 \\
		&   &   & 50 & 0.788 & 45.2 & 40.6 & 36.4 & 32.4 & 28.2 & 24.6 & 22.0 & 20.2 & 17.8 & 16.0 & 14.6 \\
		\cmidrule(lr){2-3}
		& \multirow{2}{*}{\smoothadv} & \RNB & 1 & 0.844 & 45.4 & 41.0 & 38.0 & 34.8 & 32.2 & 28.4 & 25.0 & \textbf{22.4} & 19.4 & \textbf{16.6} & \textbf{14.8} \\
		& & \RNB & 10 & \textbf{0.855} & 44.8 & 40.6 & \textbf{38.2} & \textbf{35.6} & \textbf{32.6} & \textbf{29.2} & \textbf{25.8} & 22.0 & \textbf{19.8} & 15.8 & \textbf{14.8} \\
		\cmidrule(lr){2-3}
		& \macer & \RNB & 1 & 0.797 & 42.8 & 40.6 & 37.4 & 34.4 & 31.0 & 28.0 & 25.0 & 21.4 & 18.4 & 15.0 & 13.8 \\

		\bottomrule
	\end{tabular}
\end{threeparttable}
}
\end{table}

In this section, we present more experimental results on \cifar.
Concretely, \cref{tab:cifar10_main} presents the performance of ensembles of \RNS ($k \in \{1,5,50\} $) and \RNB ($k \in \{1, 10\} $) trained using a wide range of of methods for $\sigma_{\epsilon}=0.25$, $\sigma_{\epsilon}=0.5$, and $\sigma_{\epsilon}=1.0$.
We again consistently observe that ensembles significantly outperform their constituting models, implying that ensembles are effective for various training methods and model architectures, as well as different $\sigma_{\epsilon}$.
In particular, they lead to a new state-of-the-art both in ACR and at most radii at $\sigma_{\epsilon}=0.25$, $\sigma_{\epsilon}=0.5$, and $\sigma_{\epsilon}=1.0$.

\cref{fig:exp_ens_size} visualizes the evolution of ACR and certified accuracy with ensemble size for $\sigma_{\epsilon} = 0.25$.
In \cref{fig:cifar_acr_plots}, we visualize the evolution of certified accuracy over radii for \cohen trained models.

\begin{figure}[tp]
	\centering
	\scalebox{0.97}{
	\makebox[\textwidth][c]{
		\begin{subfigure}[t]{0.316\textwidth}
			\centering
			\includegraphics[width=\textwidth]{./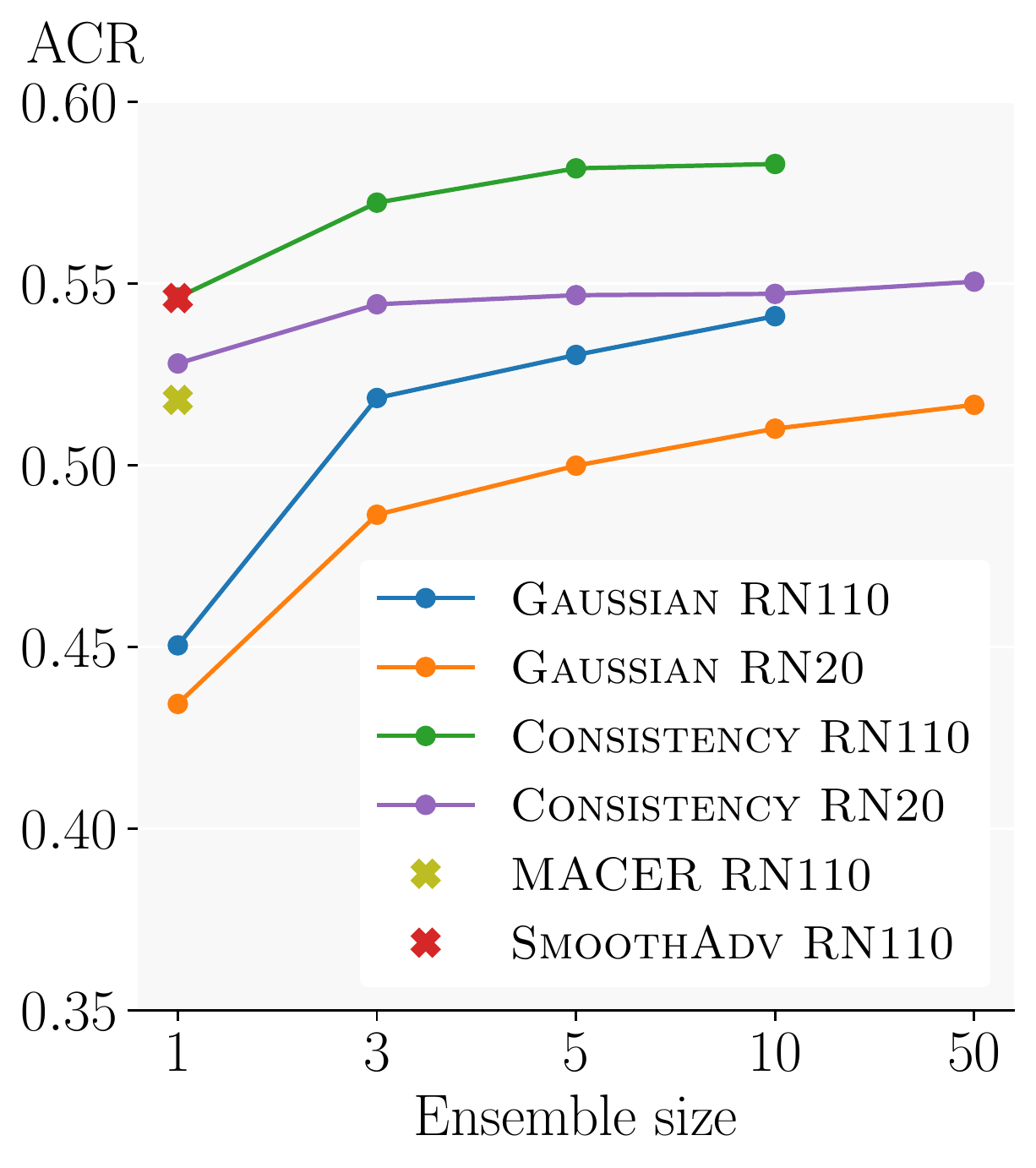}
		\end{subfigure}
		\hfill
		\begin{subfigure}[t]{0.31\textwidth}
			\centering
			\includegraphics[width=\textwidth]{./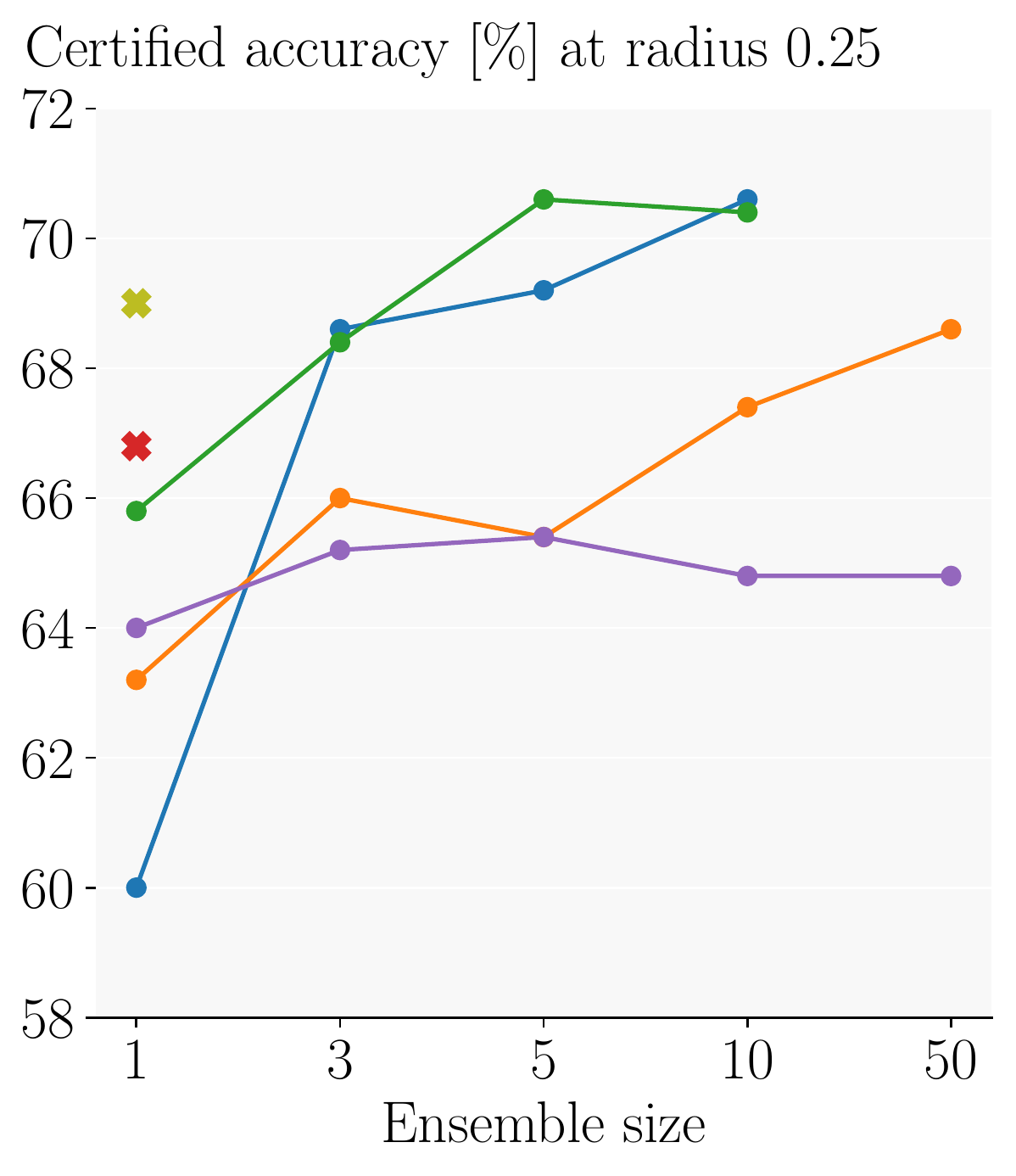}
		\end{subfigure}
		\hfill
		\begin{subfigure}[t]{0.31\textwidth}
			\centering
			\includegraphics[width=1.0\linewidth]{./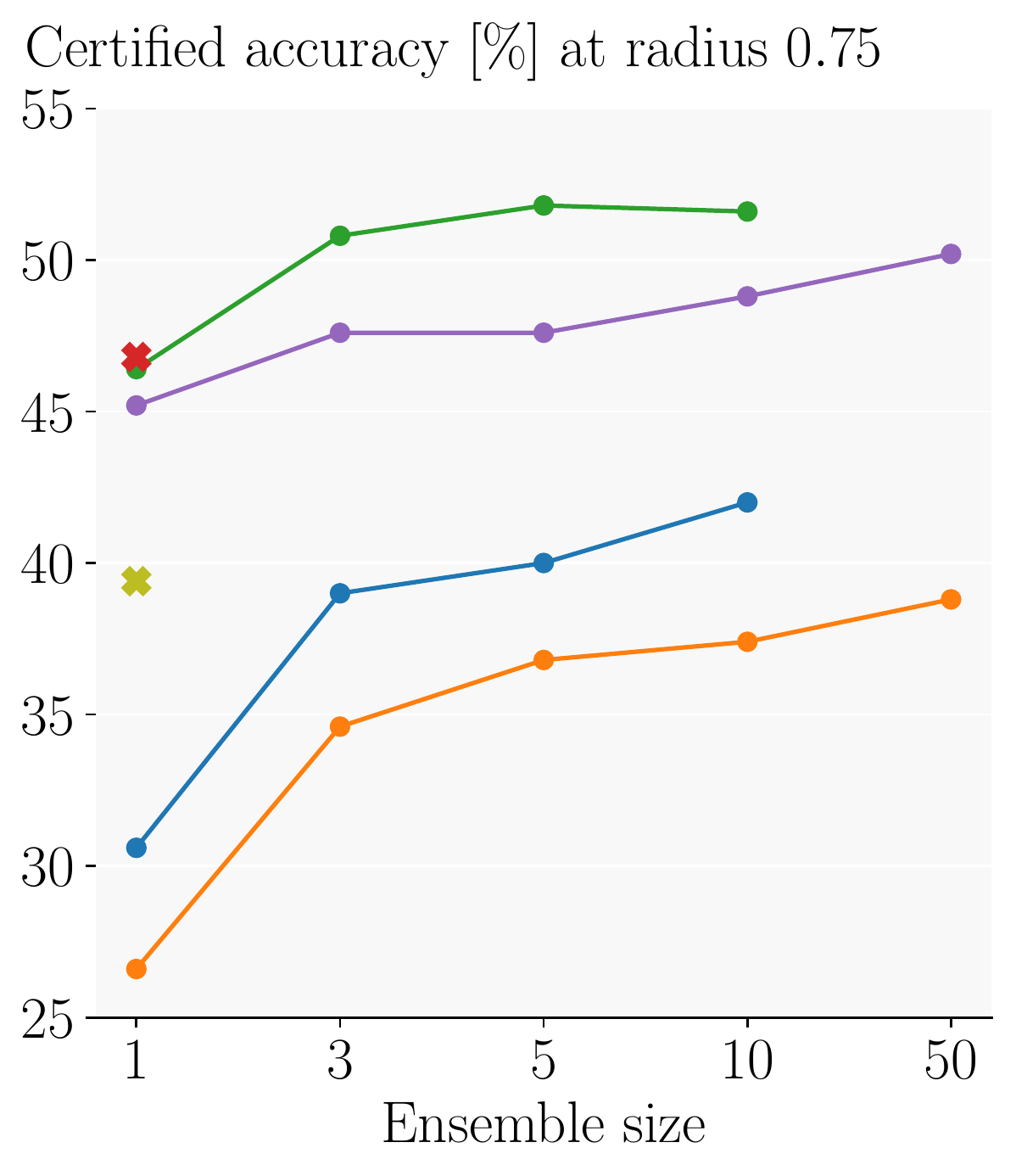}
		\end{subfigure}
	}
	}
	\vspace{-3mm}
	\caption{Evolution of average certified radius (ACR), certified accuracy at $r=0.25$, and $r=0.75$ with ensemble size $k$ for various underlying models and $\sigma_{\epsilon}=0.25$ on \cifar.}
	\label{fig:exp_ens_size}
	\vspace{-4.5mm}
\end{figure}

\begin{figure}[tp]
	\centering
	\makebox[\textwidth][c]{
		\begin{subfigure}[t]{0.32\textwidth}
			\centering
			\includegraphics[width=\textwidth]{./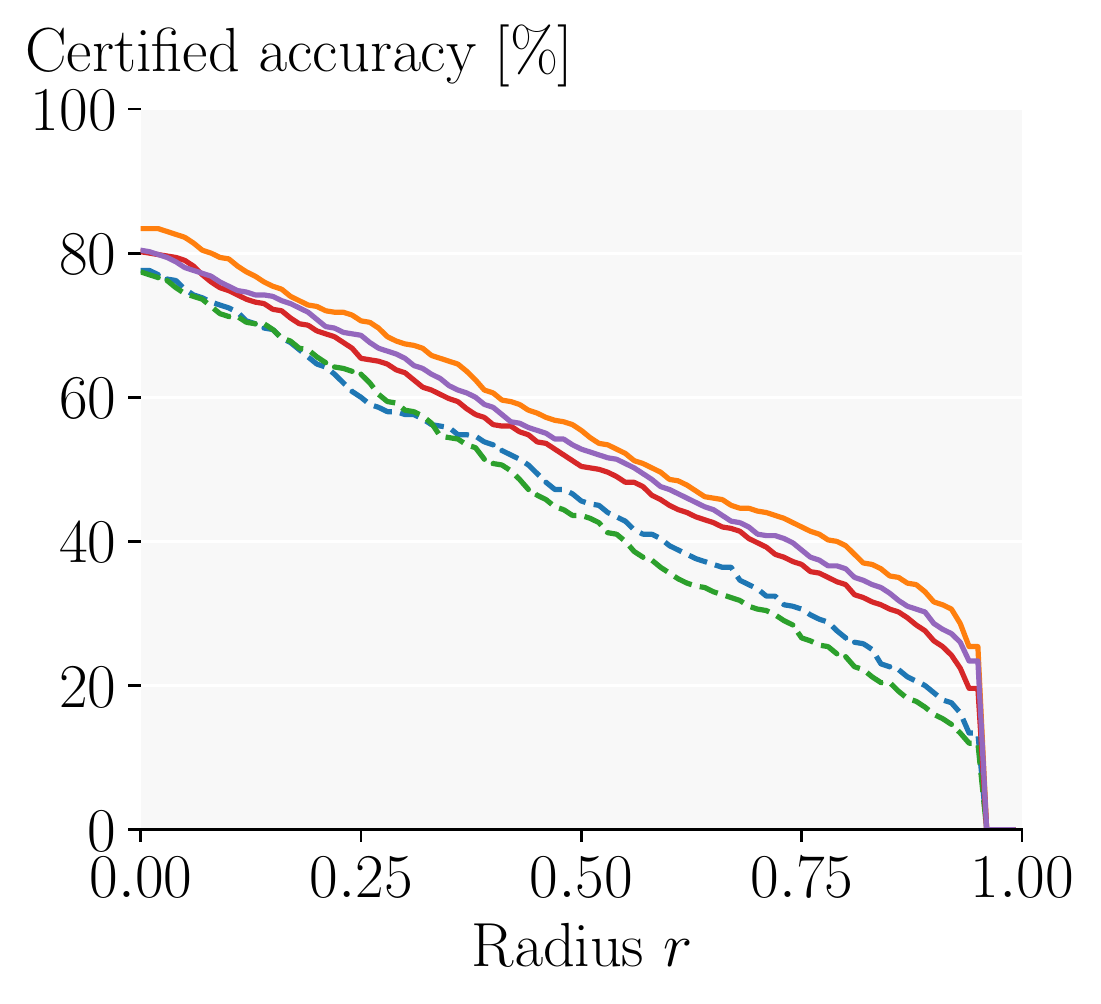}
			\vspace{-6mm}
			\subcaption{$\sigma_{\epsilon}=0.25$}
		\end{subfigure}
		\hfill
		\begin{subfigure}[t]{0.32\textwidth}
			\centering
			\includegraphics[width=\textwidth]{./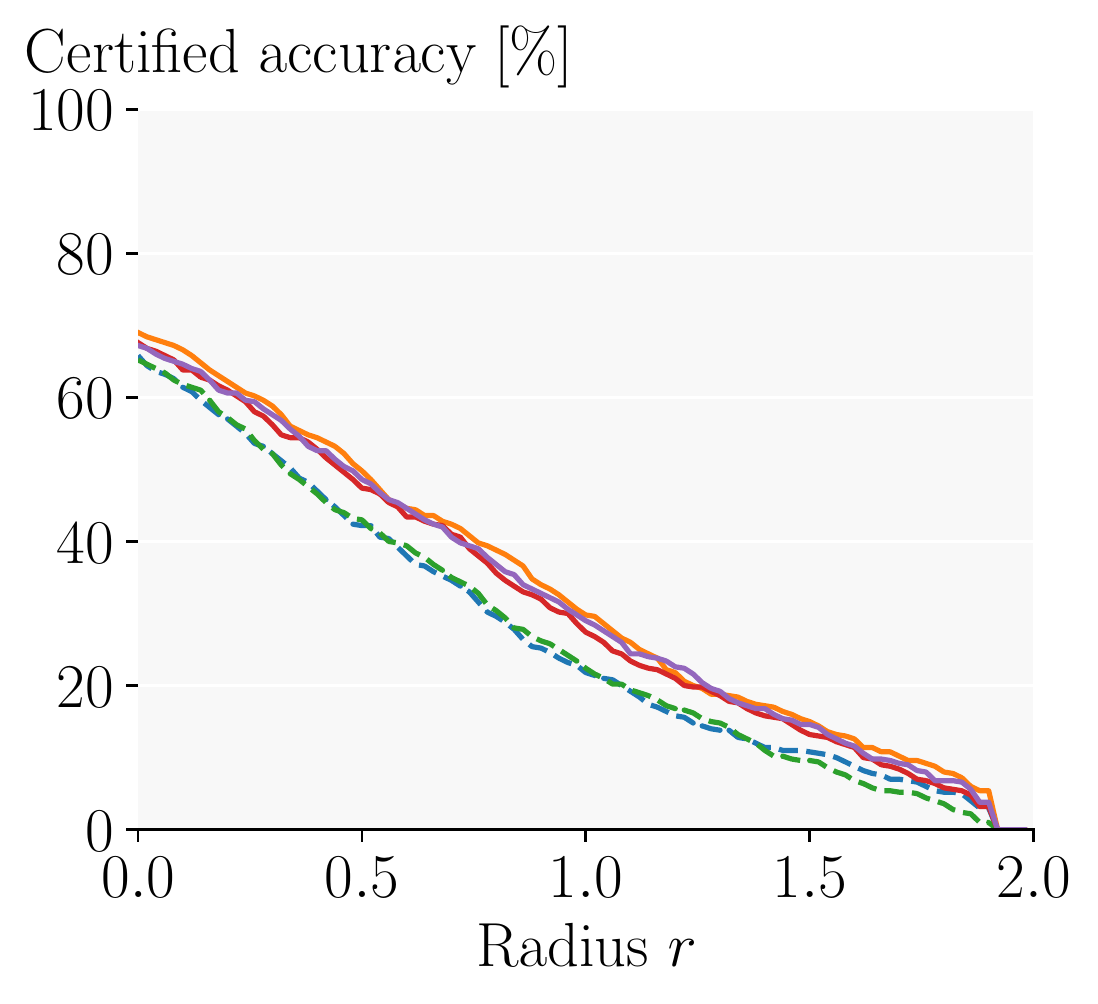}
			\vspace{-6mm}
			\subcaption{$\sigma_{\epsilon}=0.50$}
		\end{subfigure}
		\hfill
		\begin{subfigure}[t]{0.32\textwidth}
			\centering
			\includegraphics[width=1.0\linewidth]{./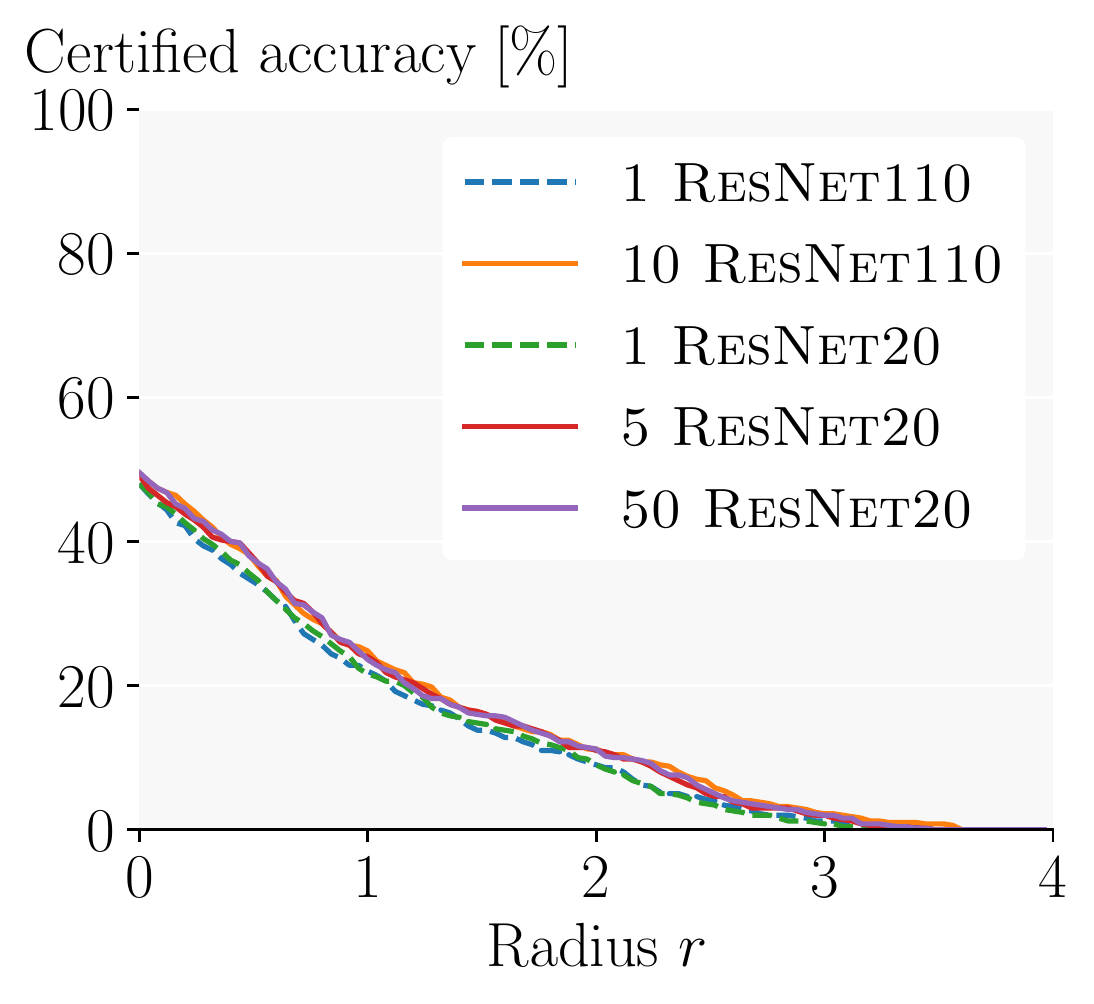}
			\vspace{-6mm}
			\subcaption{$\sigma_{\epsilon}=1.0$}
		\end{subfigure}
	}
	\caption{\cifar certified accuracy over certified radius $r$ for \cohen trained models.}
	\label{fig:cifar_acr_plots}
	\vspace{1.5mm}
\end{figure}

\subsection{Additional Results on \IN}
\label{sec:additional-imagenet-results}

Extending the results from \cref{sec:eval}, \cref{tab:IN_main} also provides ensemble results on \IN for $\sigma_{\epsilon} = 0.25$, and $\sigma_{\epsilon} = 0.5$ with \consistency trained \RNM.
Similarly to \cifar, we consistently observe that ensembles significantly outperform their constituting models, implying that ensembles are effective on various datasets.
In particular, we achieve a new state-of-the-art both in ACR and certified accuracy at most radii for $\sigma_{\epsilon}=0.25$, $\sigma_{\epsilon}=0.50$, and $\sigma_{\epsilon}=1.00$.

Additionally, we report the performance of all individual classifiers which constitute the ensembles, where $k=3$ combines all three and $k=2$ the first two for \cohen and \consistency, and the last two for \smoothadv.
Note that the \smoothadv models are taken directly from \citet{salman2019provably} and were trained with a different $\epsilon$ parameter ($256$, $512$, and $1024$).
We observe that if there is a large discrepancy between model performance at some radii (e.g., the \smoothadv models at small radii), the ensemble might perform worse than the strongest constituting model. Overall and when performance is homogenous, however, ensembles once again obtain significant performance increases over individual models.

\begin{table}[tp]
	\centering
	\small
	\centering
	\caption{Average certified radius (ACR) and certified accuracy at various radii for ensembles of k models (k = 1 are single models) for a various training methods on \IN. Larger is better.}
	\label{tab:IN_main}
	\scalebox{0.9}{
	\begin{threeparttable}
	\begin{tabular}{cccccccccccc}
		\toprule
		\multirow{2.6}{*}{$\sigma_{\epsilon}$} & \multirow{2.6}{*}{Training} & \multirow{2.6}{*}{$k$} & \multirow{2.6}{*}{ACR} & \multicolumn{8}{c}{Radius r}\\
		\cmidrule(lr){5-12}
		& & & & 0.0 & 0.50 & 1.00 & 1.50 & 2.00 & 2.50 & 3.00 & 3.50 \\

		\midrule
		\multirow{8.5}{*}{0.25} & \cohen & 1 & 0.477 & 66.7 & 49.4 & 0.0 & 0.0 & 0.0 & 0.0 & 0.0 & 0.0 \\
		\cmidrule(lr){2-3}
		& \multirow{5.5}{*}{\consistency} & 1 & 0.512 & 63.0 & 54.0 & 0.0 & 0.0 & 0.0 & 0.0 & 0.0 & 0.0\\
		& & 1 & 0.516 & 64.8 & 54.2 & 0.0 & 0.0 & 0.0 & 0.0 & 0.0 & 0.0\\
		& & 1 & 0.509 & 64.4 & 53.8 & 0.0 & 0.0 & 0.0 & 0.0 & 0.0 & 0.0\\
		\cmidrule(lr){3-3}
		& & 2 & 0.538 & 65.0 & 56.2 & 0.0 & 0.0 & 0.0 & 0.0 & 0.0 & 0.0\\
		& & 3 & \textbf{0.545} & 65.6 & \textbf{57.0} & 0.0 & 0.0 & 0.0 & 0.0 & 0.0 & 0.0\\
		\cmidrule(lr){2-3}
		& \smoothadv$^\dagger$ & 1 & 0.528 & 65 & 56 & 0 & 0 & 0 & 0 & 0 & - \\
		\cmidrule(lr){2-3}
		& \macer$^\dagger$ & 1 & 0.544 & \textbf{68} & \textbf{57} & 0 & 0 & 0 & 0 & 0 & - \\

		\midrule
		\multirow{8.5}{*}{0.5} & \cohen & 1 & 0.733 & 57.2 & 45.8 & 37.2 & 28.6 & 0.0 & 0.0 & 0.0 & 0.0 \\
		\cmidrule(lr){2-3}
		& \multirow{5.5}{*}{\consistency} & 1 & 0.793 & 56.0 & 48.0 & 39.6 & 34.0 & 0.0 & 0.0 & 0.0 & 0.0\\
		& & 1 & 0.806 & 55.4 & 48.8 & 42.2 & 35.0 & 0.0 & 0.0 & 0.0 & 0.0\\
		& & 1 & 0.799 & 54.0 & 48.0 & 41.2 & 35.2 & 0.0 & 0.0 & 0.0 & 0.0\\
		\cmidrule(lr){3-3}
		& & 2 & 0.851 & 58.6 & 51.6 & 43.6 & 37.0 & 0.0 & 0.0 & 0.0 & 0.0\\
		& & 3 & \textbf{0.868} & 57.0 & 52.0 & \textbf{44.6} & \textbf{38.4} & 0.0 & 0.0 & 0.0 & 0.0\\
		\cmidrule(lr){2-3}
		& \smoothadv$^\dagger$ & 1 & 0.815 & 54 & 49 & 43 & 37 & 0 & 0 & 0 & - \\
		\cmidrule(lr){2-3}
		& \macer$^\dagger$ & 1 & 0.831 & \textbf{64} & \textbf{53} & 43 & 31 & 0 & 0 & 0 & - \\

		\midrule
		\multirow{17.5}{*}{1.0} & \multirow{5.5}{*}{\cohen} & 1 & 0.849 & 42.4 & 35.6 & 30.0 & 25.2 & 20.2 & 15.4 & 12.2 & 10.0\\
		& & 1 & 0.856 & 42.4 & 37.2 & 31.4 & 25.8 & 19.6 & 15.4 & 12.0 & 8.4\\
		& & 1 & 0.839 & 40.8 & 35.6 & 29.4 & 25.8 & 20.4 & 15.6 & 12.2 & 8.0\\
		\cmidrule(lr){3-3}
		& & 2 & 0.944 & 45.0 & 37.6 & 33.4 & 29.8 & 22.8 & 18.0 & 14.6 & 11.0\\
		& & 3 & 0.968 & 43.8 & 38.4 & 34.4 & 29.8 & 23.2 & 18.2 & 15.4 & 11.4\\
		\cmidrule(lr){2-3}
	    & \multirow{5.5}{*}{\consistency} & 1 & 1.022 & 43.2 & 39.8 & 35.0 & 29.4 & 24.4 & 22.2 & 16.6 & 13.4 \\
		& &   1 & 0.990 & 42.0 & 37.2 & 34.4 & 29.6 & 24.8 & 20.2 & 16.0 & 13.4 \\
		& &   1 & 1.006 & 41.6 & 38.6 & 35.2 & 29.6 & 25.4 & 21.2 & 17.6 & 13.8 \\
		\cmidrule(lr){3-3}
		& & 2 & 1.086 & 44.8 & 41.0 & 36.6 & 32.4 & 27.4 & 22.4 & 19.4 & 15.6 \\
		& &  3 & \textbf{1.108} & 44.6 & 40.2 & \textbf{37.2} & \textbf{34.0} & \textbf{28.6} & 23.2 & 20.2 & 16.4 \\
		\cmidrule(lr){2-3}
		& \multirow{5.5}{*}{\smoothadv} & 1 & 1.011 & 40.6 & 38.6 & 33.8 & 29.8 & 25.6 & 20.6 & 18.0 & 14.4 \\
		&  & 1 & 1.002 & 39.4 & 35.4 & 32.0 & 29.2 & 25.6 & 22.2 & 20.0 & 16.4 \\
		&  & 1 & 0.927 & 32.0 & 30.8 & 28.6 & 26.0 & 23.6 & 21.0 & 19.2 & 18.6 \\
		\cmidrule(lr){3-3}
		&  & 2 & 1.022 & 37.4 & 33.4 & 31.4 & 29.4 & 26.6 & 23.8 & 21.0 & 18.6 \\
		&   & 3 & 1.065 & 38.6 & 36.0 & 34.0 & 30.0 & 27.6 & \textbf{24.6} & \textbf{21.2} & \textbf{18.8} \\
		\cmidrule(lr){2-3}
		& \macer$^\dagger$ & 1 & 1.008 & \textbf{48} & \textbf{43} & 36 & 30 & 25 & 18 & 14 & - \\

		\bottomrule
	\end{tabular}
	\begin{tablenotes}
		\item[$\dagger$]{As reported by \citet{zhai2020macer}.}
\end{tablenotes}
\end{threeparttable}
}
\end{table}

\subsection{Additional Ablation Study of Ensembles}
\label{sec:additional-ensemble-experiments}

This section presents a range of experiments on different different aspects of variance reduction through ensembles for Randomized Smoothing.
In particular, we cover aggregation approaches, the effect of ensemble size, the variability of our results, certifications with an equal number of inferences, ensembles of differently trained models, and ensembles of denoisers.

\subsubsection{Aggregation Approaches}
\label{sec:exp_aggregation}

We experiment with various instantiations of the general aggregation approach described in \cref{sec:ensemble}.
In particular, we consider soft-voting, hard-voting, soft-voting after softmax, and weighted soft-voting.

\paragraph{Soft-voting}
We simply average the logits to obtain
\begin{equation*}
	\bar{f}(\vx) = \frac{1}{k} \sum_{l=1}^{k} f^l(\vx) .
\end{equation*}

\paragraph{Hard-voting}
We process the outputs of single models with the post-processing function $\gamma_{HV}(\vy^l) = \1_{j=\argmax_i(y^l_i)}$ that provides a one-hot encoding of the $\argmax$ of $f^l(\vx)$ before averaging to obtain
\begin{equation*}
	\bar{f}(\vx) = \frac{1}{k} \sum_{l=1}^{k} \1_{j=\argmax_i(f^l(\vx)_i)}. %
\end{equation*}

\paragraph{Soft-voting after softmax}
We process the outputs of the single models by applying the softmax function as post-processing function $\gamma_{softmax}$ to obtain
\begin{equation*}
	\bar{f}(\vx) = \frac{1}{k} \sum_{l=1}^{k} \frac{\exp(f^l(\vx))}{\sum_i \exp(f^l(\vx)_i)}. %
\end{equation*}

\paragraph{Weighted soft-voting}
We consider soft-voting with a classifier-wise weights $w^{l}$ learned on a separate holdout set and obtain
\begin{equation*}
	\bar{f}(\vx) = \frac{1}{k} \sum_{l=1}^{k} w^l f^l(\vx).
\end{equation*}

We compare these approaches in \cref{tab:aggregation_schmemes} and observe that the two soft-voting schemes perform very similarly and outperform the other approaches.
We decide to use soft-voting without soft-max for its conceptual simplicity for all other experiments.

\begin{table}[t]
	\centering
	\small
	\centering
	\caption{Comparison of various aggregation methods for ensembles of $10$ \RNB at $\sigma_{\epsilon}=0.25$ on \cifar. Larger is better.}
    \label{tab:aggregation_schmemes}
	\scalebox{0.9}{
	\begin{tabular}{ccccccc}
		\toprule
        \multirow{2.6}{*}{Training} & \multirow{2.6}{*}{Aggregation method} & \multirow{2.6}{*}{ACR} & \multicolumn{4}{c}{Radius r}\\
		\cmidrule(lr){4-7}
        & & & 0.00 & 0.25 & 0.50 & 0.75 \\
        \midrule
        \multirow{4}{*}{\cohen} & soft-voting & \textbf{0.541} & 83.4 & 70.6 & \textbf{55.4} & \textbf{42.0}\\
         & hard-voting & 0.529 & 83.8 & 69.4 & 53.2 & 40.6\\
         & soft-voting after softmax & 0.540 & \textbf{84.0} & \textbf{70.8} & 55.2 & 41.8\\
         & weighted soft-voting & 0.538 & 83.4 & 70.4 & 54.6 & 41.8\\
        \midrule
        \multirow{4}{*}{\consistency} & soft-voting & 0.583 & 76.8 & 70.4 & 60.4 & 51.6\\
         & hard-voting & 0.574 & \textbf{77.2} & 69.6 & 60.0 & 50.6\\
         & soft-voting after softmax & \textbf{0.584} & 77.0 & 70.4 & \textbf{60.6} & \textbf{51.8}\\
         & weigthed soft-voting & 0.579 & 76.4 & \textbf{70.6} & 59.6 & 51.2\\
		\bottomrule
	\end{tabular}
}
\vspace{-1.5mm}
\end{table}

\subsubsection{Ensemble Size}
\label{sec:additional-ensemble-size-ablation}
In \cref{tab:ens_size}, we present extended results on the effect of ensemble size for different training methods and $\sigma_{\epsilon}$.
Generally, we observe that ensembles of sizes $3$ and $5$ already significantly improve over the performance of a single model, while further increasing the ensemble size mostly leads to marginal gains.
This aligns well with our theory from \cref{sec:theory_var}.

\begin{table}[t]
	\centering
	\small
	\centering
	\caption{Effect of ensemble size $k$ on ACR and certified accuracy at different radii, for \cohen and \consistency trained \RNS. Larger is better.}
	\label{tab:ens_size}
	\resizebox{0.97\columnwidth}{!}{
	\begin{tabular}{lllccccccccccccc}
		\toprule
		\multirow{2.6}{*}{Training} & \multirow{2.6}{*}{$\sigma_{\epsilon}$} &\multirow{2.6}{*}{$k$} & \multirow{2.6}{*}{ACR} & \multicolumn{11}{c}{Radius r}\\
		\cmidrule(lr){5-15}
		& & & & 0.0 & 0.25 & 0.50 & 0.75 & 1.00 & 1.25 & 1.50 & 1.75 & 2.00 & 2.25 & 2.50 \\
        \midrule
        \multirow{10}{*}{\cohen}&\multirow{5}{*}{0.25}& 1& 0.434 & 77.4 & 63.4 & 43.6 & 26.6 & 0.0 & 0.0 & 0.0 & 0.0 & 0.0 & 0.0 & 0.0\\
		& & 3& 0.486 & 79.6 & 66.0 & 49.6 & 34.6 & 0.0 & 0.0 & 0.0 & 0.0 & 0.0 & 0.0 & 0.0\\
		& & 5 & 0.500 & 80.2 & 65.4 & 50.4 & 36.8 & 0.0 & 0.0 & 0.0 & 0.0 & 0.0 & 0.0 & 0.0\\
		& & 10& 0.510 & \textbf{80.4} & 67.4 & 51.8 & 37.4 & 0.0 & 0.0 & 0.0 & 0.0 & 0.0 & 0.0 & 0.0\\
		& & 50 & \textbf{0.517} & \textbf{80.4} & \textbf{68.6} & \textbf{52.8} & \textbf{38.8} & 0.0 & 0.0 & 0.0 & 0.0 & 0.0 & 0.0 & 0.0\\
		\cmidrule(lr){2-2}
		&\multirow{5}{*}{1.0}& 1 & 0.538 & 48.0 & 41.2 & 35.0 & 27.8 & 21.6 & 17.8 & 14.8 & 12.0 & 9.0 & 5.6 & 3.4\\
		& & 3 & 0.579 & 49.2 & 42.8 & 36.4 & 30.2 & 23.0 & 18.8 & 15.6 & 13.0 & 10.8 & 7.4 & 4.4\\
		& & 5  & 0.590 & 49.2 & 42.8 & \textbf{37.8} & \textbf{30.4} & \textbf{24.0} & \textbf{19.4} & \textbf{16.2} & \textbf{13.8} & \textbf{11.2} & 8.2 & \textbf{5.0}\\
		& & 10 & 0.592 & 48.8 & 42.8 & 37.0 & \textbf{30.4} & 23.6 & 19.0 & 16.2 & 13.6 & 11.0 & \textbf{9.2} & \textbf{5.0}\\
		& & 50& \textbf{0.597} & \textbf{49.6} & \textbf{43.0} & 37.4 & \textbf{30.4} & 23.6 & 18.6 & 15.8 & 13.6 & \textbf{11.2} & 9.0 & \textbf{5.0}\\
  		\cmidrule(lr){1-2}
        \multirow{10}{*}{\consistency}& \multirow{5}{*}{0.25} & 1 & 0.528 & 71.8 & 64.2 & 55.6 & 45.2 & 0.0 & 0.0 & 0.0 & 0.0 & 0.0 & 0.0 & 0.0\\
        &   & 3 & 0.544 & \textbf{73.2} & 65.2 & \textbf{57.6} & 47.6 & 0.0 & 0.0 & 0.0 & 0.0 & 0.0 & 0.0 & 0.0\\
        & & 5 & 0.547 & 73.0 & \textbf{65.4} & \textbf{57.6} & 47.6 & 0.0 & 0.0 & 0.0 & 0.0 & 0.0 & 0.0 & 0.0\\
        & & 10& 0.547 & 72.2 & 65.0 & 57.4 & 48.8 & 0.0 & 0.0 & 0.0 & 0.0 & 0.0 & 0.0 & 0.0\\
        & & 50& \textbf{0.551} & 73.0 & 64.8 & 57.0 & \textbf{50.2} & 0.0 & 0.0 & 0.0 & 0.0 & 0.0 & 0.0 & 0.0\\
		\cmidrule(lr){2-2}
        &\multirow{5}{*}{1.0} &1 & 0.757 & 43.6 & 39.8 & 34.8 & 30.8 & 27.6 & 24.6 & \textbf{22.6} & 19.4 & 17.4 & 15.4 & 13.8\\
        & & 3 &0.777 & 44.2 & 39.6 & \textbf{36.8} & 31.8 & 28.4 & \textbf{25.0} & 22.4 & 19.8 & 17.6 & 15.6 & 14.0\\
        & & 5 & 0.779 & 43.4 & 40.0 & 35.6 & 32.2 & 28.0 & 24.8 & 22.2 & \textbf{20.4} & 17.4 & 15.8 & 13.8\\
        & & 10& 0.784 & 44.4 & \textbf{40.6} & 36.4 & 32.2 & \textbf{29.2} & 24.2 & 22.2 & \textbf{20.4} & 17.6 & \textbf{16.0} & 14.2\\
        & & 50& \textbf{0.788} & \textbf{45.2} & \textbf{40.6} & 36.4 & \textbf{32.4} & 28.2 & 24.6 & 22.0 & 20.2 & \textbf{17.8} & \textbf{16.0} & \textbf{14.6}\\
		\bottomrule
	\end{tabular}
}
\vspace{-1.5mm}
\end{table}

\subsubsection{Variability of Results}
\label{sec:error-bars}
\begin{wrapfigure}[10]{r}{0.35\textwidth}
	\centering
	\vspace{-18.0mm}
	\includegraphics[width=1\linewidth]{./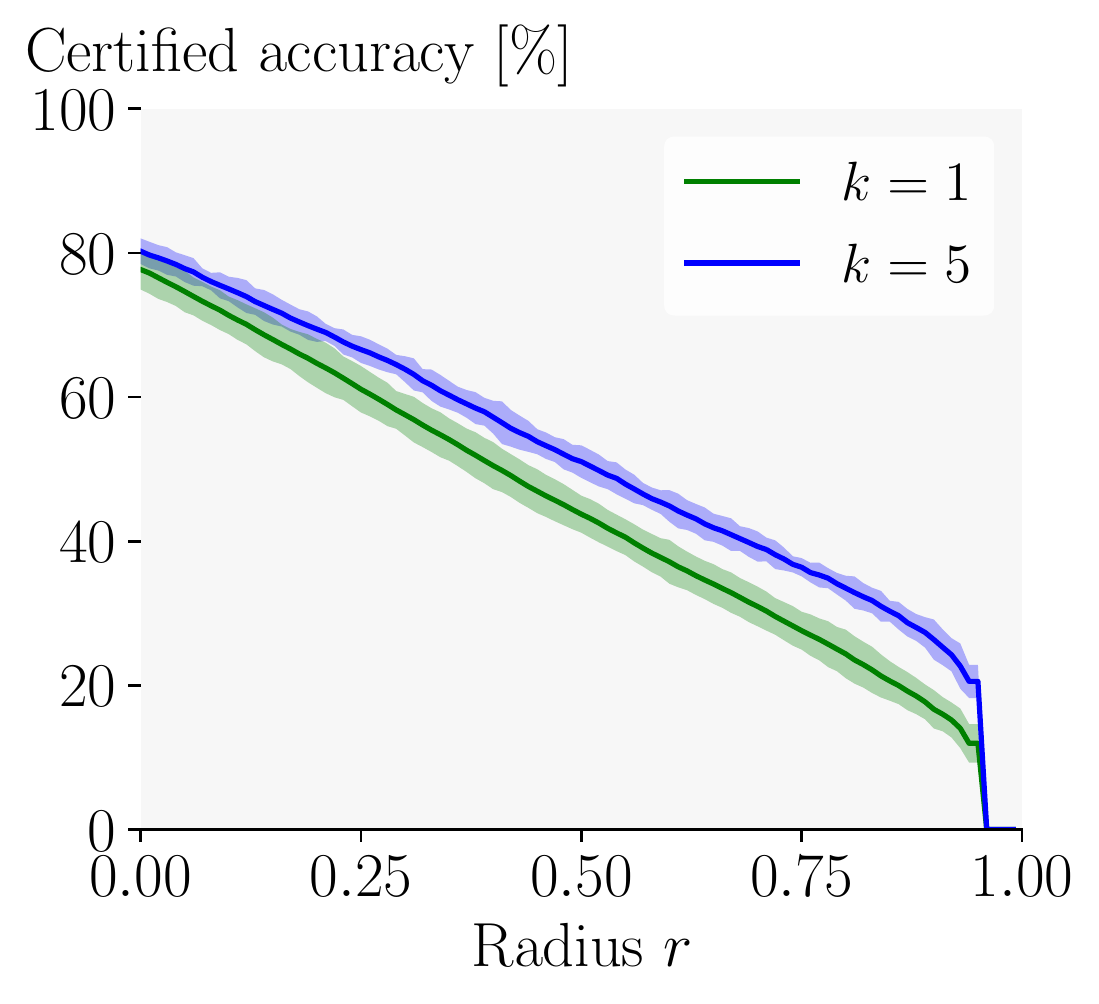}
	\vspace{-6mm}
	\caption{Mean and $\pm 3 \, \sigma$ interval of certified accuracy over radius $r$ computed over 50 \cohen trained \RNS.}
	\label{fig:exp_repeatability}
	\vspace{-4.5mm}
\end{wrapfigure}
In \cref{tab:error_bars}, we report mean and standard deviation of ACR and certified accuracy for 50 \cohen and \consistency trained \RNS with different random seeds, either combined to 10 ensembles with $k=5$ or evaluated individually.
Generally, we observe a notably smaller standard deviation of the ensembles compared to individual models.
We visualize this in \cref{fig:exp_repeatability}, where we show the $\pm 3 \, \sigma$ region.

\begin{table}[t]
	\centering
	\small
	\centering
	\caption{Mean and standard deviation of ACR and certified accuracy for \cifar at various radii for 50 \RNS with different random seeds either evaluated as 10 ensembles with $k=5$ or as individual models at $\sigma_{\epsilon} = 0.25$.}
	\vspace{-2mm}
    \label{tab:error_bars}
	\scalebox{0.9}{
	\begin{tabular}{cccccc}
		\toprule
		\multirow{2.6}{*}{Training} & \multirow{2.6}{*}{$k$} & \multirow{2.6}{*}{ACR} & \multicolumn{3}{c}{Radius r}\\
		\cmidrule(lr){4-6}
        & & & 0.25 & 0.50 & 0.75 \\
        \midrule
        \multirow{2}{*}{\cohen} & 1 & 0.4350 $\pm$ 0.0046 & 61.08 $\pm$ 1.08 & 43.74 $\pm$ 0.86 & 27.58 $\pm$ 0.88 \\
        & 5 & 0.4994 $\pm$ 0.0025 & 66.58 $\pm$ 0.62 & 51.04 $\pm$ 0.76 & 36.38 $\pm$ 0.42 \\
        \midrule
        \multirow{2}{*}{\consistency} & 1 & 0.5202 $\pm$ 0.0049 & 63.57 $\pm$ 0.86 & 54.22 $\pm$ 1.03 & 44.30 $\pm$ 0.89 \\
        & 5 & 0.5445 $\pm$ 0.0017 & 64.90 $\pm$ 0.66 & 56.86 $\pm$ 0.49 & 47.94 $\pm$ 0.68 \\
		\bottomrule
	\end{tabular}
}
\end{table}

\subsubsection{Equal Number of Inferences}
\label{sec:equal-number-of-inferences}
\begin{table}[t]
	\centering
	\small
	\centering
	\caption{Comparing and ensemble and an individual model at an equal number of total inferences on \cifar: average certified radius (ACR) and certified accuracy at various radii for ensembles of $k$ models ($k = 1$ are single models) for a various sampling sizes $n$. Larger is better.}
	\label{tab:cifar10_sampling_size}
	\resizebox{0.97\columnwidth}{!}{
	\begin{threeparttable}
	\begin{tabular}{ccccccccccccc}
		\toprule
		\multirow{2.6}{*}{Training} & \multirow{2.6}{*}{$k$} & \multirow{2.6}{*}{$n$} & \multirow{2.6}{*}{ACR} & \multicolumn{9}{c}{Radius r}\\
		\cmidrule(lr){5-13}
		& & & & 0.0 & 0.25 & 0.50 & 0.75 & 1.00 & 1.25 & 1.50 & 1.75 & 2.00 \\
		
		\midrule
        \multirow{3}{*}{\cohen} & 1 & 100'000 & 0.535 & 65.8 & 54.2 & 42.2 & 32.4 & 22.0 & 14.8 & 10.8 & 6.6 & 0.0\\
        & 1 & 1'000'000 & 0.552 & 65.8 & 54.6 & 42.2 & 32.6 & 22.4 & 15.2 & 11.0 & 7.0 & \textbf{3.8}\\
        & 10 & 100'000 & \textbf{0.648} & \textbf{69.0} & \textbf{60.4} & \textbf{49.8} & \textbf{40.0} & \textbf{29.8} & \textbf{19.8} & \textbf{15.0} & \textbf{9.6} & 0.0\\
        \midrule
        \multirow{3}{*}{\consistency} & 1 & 100'000 & 0.708 & 63.2 & 54.8 & 48.8 & 42.0 & 36.0 & 29.8 & 22.4 & 16.4 & 0.0\\
        & 1 & 1'000'000 & 0.752 & 63.2 & 54.8 & 48.8 & 42.0 & 36.4 & 30.4 & 23.0 & 17.8 & \textbf{14.6}\\
        & 10 & 100'000 & \textbf{0.756} & \textbf{65.0} & \textbf{59.0} & \textbf{49.4} & \textbf{44.8} & \textbf{38.6} & \textbf{32.0} & \textbf{26.2} & \textbf{19.8} & 0.0\\		
		\bottomrule
	\end{tabular}
\end{threeparttable}
}
\end{table}

\begin{table}[t]
	\centering
	\small
	\centering
	\caption{Comparing and ensemble and an individual model at an equal number of total inferences on \IN: average certified radius (ACR) and certified accuracy at various radii for ensembles of $k$ models ($k = 1$ are single models) for a various sampling sizes $n$. Larger is better.}
	\label{tab:imagenet_sampling_size}
	\resizebox{0.97\columnwidth}{!}{
	\begin{threeparttable}
	\begin{tabular}{ccccccccccccc}
		\toprule
		\multirow{2.6}{*}{Training} & \multirow{2.6}{*}{$k$} & \multirow{2.6}{*}{$n$} & \multirow{2.6}{*}{ACR} & \multicolumn{9}{c}{Radius r}\\
		\cmidrule(lr){5-13}
		& & & & 0.0 & 0.50 & 1.00 & 1.50 & 2.00 & 2.50 & 3.00 & 3.50 & 4.00 \\
		
		\midrule
        \multirow{3}{*}{\consistency} & 1 & 100'000 & 1.022 & 43.2 & 39.8 & 35.0 & 29.4 & 24.4 & 22.2 & 16.6 & 13.4 & 0.0\\
        & 1 & 300'000 & 1.060 & 43.2 & 39.6 & 35.2 & 29.6 & 24.6 & 22.4 & 16.8 & 14.2 & \textbf{11.4}\\
        & 3 & 100'000 & \textbf{1.108} & \textbf{44.6} & \textbf{40.2} & \textbf{37.2} & \textbf{34.0} & \textbf{28.6} & \textbf{23.2} & \textbf{20.2} & \textbf{16.4} & 0.0\\		
		\bottomrule
	\end{tabular}
\end{threeparttable}
}
\end{table}
In \cref{tab:cifar10_sampling_size}, we compare the certified accuracy of an ensemble of $k=10$ \RNB obtained with $n=100'000$ samples on \cifar with that of an individual model obtained with $n=1'000'000$, i.e. at the same computational cost.

For both \cohen and \consistency training, we observe that at all radii which can mathematically be certified using $n=100'000$ samples the ensemble it significantly outperforms the individual model at equal computational cost.
Note that the individual models with $n=1'000'000$ samples have the same or just a marginally larger certified accuracy at all radii $\leq 1.75$ than the same individual model with $n=100'000$ samples. Only at radius $2.00$ where certification is mathematically impossible with $n=100'000$ samples does using additional samples increase the certified accuracy. Note however, that, in contrast to our ensembles, this does not influence the actual accuracy of the underlying model in any way, but simply allows to derive tighter confidence bounds.
\cref{tab:imagenet_sampling_size} shows similar results for \IN and ensembles of $k=3$ \RNB, implying that these observations hold for various datasets.

\subsubsection{Ensembles of Differently Trained Models}
\label{sec:different-training}
\begin{table}[t]
	\centering
	\small
	\centering
	\caption{Building ensembles with differently trained base models: average certified radius (ACR) and certified accuracy at various radii for ensembles and their constituting models. Larger is better.}
	\label{tab:cifar10_various_base_models}
	\resizebox{0.80\columnwidth}{!}{
	\begin{threeparttable}
	\begin{tabular}{cccccc}
		\toprule
		\multirow{2.6}{*}{Model} & \multirow{2.6}{*}{ACR} & \multicolumn{4}{c}{Radius r}\\
		\cmidrule(lr){3-6}
		& & 0.0 & 0.25 & 0.50 & 0.75 \\
        \midrule
        \smoothadv & 0.542 & 74.4 & 65.4 & 56.2 & 48.0\\
        \macer & 0.518 & 77.8 & 69.0 & 52.6 & 39.0\\
        \consistency & 0.546 & 75.2 & 66.0 & 57.0 & 46.6\\
        \midrule
        Ensemble (\smoothadv and \consistency) & \textbf{0.572} & 75.6 & 68.6 & \textbf{59.8} & \textbf{50.6}\\
        Ensemble (\smoothadv, \macer and \consistency) & 0.567 & \textbf{78.4} & \textbf{69.4} & 58.8 & 48.0\\
		\bottomrule
	\end{tabular}
\end{threeparttable}
}
\end{table}

In \cref{tab:cifar10_various_base_models} we show how ensembles built from differently trained base models (\smoothadv, \macer, and \consistency) perform.
We observe that the ensemble of all three models is strictly better than any individual model, suggesting that ensembles also work for models trained with different training methods.
However, the improvements are significantly smaller than the improvements of ensembling similar classifiers in the same setting (see also \cref{tab:cifar10_main} for comparison).
This is again due to a very heterogeneous  model performance at some radii, where the strongest models are weighed down by weaker ones.

\subsubsection{Ensembles of Denoisers}
\label{sec:denoised-smoothing}
\begin{wraptable}[8]{r}{0.34\textwidth}
	\vspace{-4mm}	
	\small
	\centering
	\caption{Denoised smoothing ensembles on \cifar at $\sigma_{\epsilon}=0.25$}
	\vspace{-2mm}
	\scalebox{0.82}{
	\begin{tabular}{ccccccc}
		\toprule
		\multirow{2.6}{*}{$k$} & \multirow{2.6}{*}{ACR} & \multicolumn{4}{c}{Radius r}\\
         \cmidrule(lr){3-6}
         & & 0.0 & 0.25 & 0.50 & 0.75 \\
        \midrule
        1 & 0.378 & \textbf{76.4} & 56.6 & 36.8 & 19.2\\
        4 & \textbf{0.445} & 75.2 & \textbf{62.2} & \textbf{45.8} & \textbf{29.0}\\      
		\bottomrule
	\end{tabular}
}
\end{wraptable}
It was shown \citep{SalmanSYKK20} that standard neural networks can be robustified with respect to Gaussian noise by training a suitable denoiser and evaluating the original model on the denoised sample. Here, we aim to overcome the drawback of any ensemble, the need to train multiple diverse models, by instead training different denoisers and building an ensemble by combining these denoisers with a single underlying model.
We show the general applicability of our ensembling approach by applying it to the setting of denoised smoothing \citep{SalmanSYKK20}. %
We consider 4 pre-trained denoisers and combine them with a single standard \RNB (see \cref{sec:train-rand-smooth} for details).
Comparing the strongest resulting model with an ensemble of all 4 at $\sigma_{\epsilon}=0.25$, we observe a $17\%$ ACR improvement.
Training strong denoisers with current methods is challenging.
However, based on these results we are confident that advances in denoiser training combined with ensembles can present an efficient way to obtain strong provable models.

\subsubsection{Additional Norms}
\label{sec:other_norms}

\begin{table}[t]
	\centering
	\small
	\centering
	\caption{ Effect of ensemble size $k$ on ACR and certified accuracy at different radii for $l_0$ norm on MNIST. Larger is better.}
	\label{tab:additional_norm_0}
	\begin{threeparttable}
	\begin{tabular}{cccccccccccc}
		\toprule
		\multirow{2.6}{*}{k} & \multirow{2.6}{*}{ACR} & \multicolumn{10}{c}{Radius r}\\
		\cmidrule(lr){3-12}
		& & 0 & 1 & 2 & 3 & 4 & 5 & 6 & 7 & 8 & 9\\
        \midrule
        1 & 3.231 & 97.8 & 90.0 & 73.7 & 49.9 & 48.5 & 23.6 & 17.3 & 7.9 & 1.6 & 1.6\\
        3 & 3.575 & \textbf{98.4} & \textbf{92.5} & 79.0 & 56.1 & 54.7 & 38.2 & 21.6 & 10.7 & \textbf{2.4} & \textbf{2.4}\\
        10 & \textbf{3.612} & \textbf{98.4} & \textbf{92.5} & \textbf{79.1} & \textbf{57.0} & \textbf{55.6} & \textbf{38.9} & \textbf{22.1} & \textbf{11.1} & \textbf{2.4} & \textbf{2.4}\\
		\bottomrule
	\end{tabular}
\end{threeparttable}
\end{table}

\begin{table}[t]
	\centering
	\small
	\centering
	\caption{ Effect of ensemble size k on ACR and certified accuracy at different radii for $l_0$ norm on \IN. Larger is better.}
	\label{tab:addiitonal_norm_0_imagenet}
	\begin{threeparttable}
	\begin{tabular}{cccccccccccc}
		\toprule
		\multirow{2.6}{*}{k} & \multirow{2.6}{*}{ACR} & \multicolumn{10}{c}{Radius r}\\
		\cmidrule(lr){3-12}
		& & 0 & 1 & 2 & 3 & 4 & 5 & 6 & 7 & 8 & 9\\
        \midrule
        1 & 2.340 & 53.8 & 48.8 & 42.0 & 30.8 & 25.2 & 24.6 & 23.6 & 22.2 & 16.8 & \textbf{0.0}\\
        3 & \textbf{2.986} & \textbf{57.4} & \textbf{53.6} & \textbf{50.4} & \textbf{39.0} & \textbf{33.8} & \textbf{33.0} & \textbf{32.8} & \textbf{30.0} & \textbf{26.0} & \textbf{0.0}\\
		\bottomrule
	\end{tabular}
\end{threeparttable}
\end{table}

\begin{table}[t]
	\centering
	\small
	\centering
	\caption{ Effect of ensemble size k on ACR and certified accuracy at different radii for $l_1$ norm on \cifar for models with \RNS architecture. Larger is better.}
	\label{tab:additional_norm_1}
	\begin{threeparttable}
	\begin{tabular}{cccccc}
		\toprule
		\multirow{2.6}{*}{k} & \multirow{2.6}{*}{ACR} & \multicolumn{4}{c}{Radius r}\\
		\cmidrule(lr){3-6}
		& & 0.0 & 0.25 & 0.50 & 0.75 \\
        \midrule
        1 & 0.571 & 71.2 & 65.4 & 60.2 & 51.6\\
        3 & 0.610 & 72.4 & 68.6 & 63.4 & 56.0\\
        10 & \textbf{0.622} & \textbf{72.4} & \textbf{69.6} & \textbf{64.8} & \textbf{57.2}\\
		\bottomrule
	\end{tabular}
\end{threeparttable}
\end{table}

Below, we demonstrate that our approach generalizes beyond the $\ell_2$-norm setting and generalizes well to improve the robust accuracy against $\ell_0$-, $\ell_1$- and $\ell_\infty$-norm bounded attacks. 

$\ell_\infty$-norm robustness is typically derived directly from $\ell_2$-norm bounds \citep{YangDHSR020,salman2019provably}, and hence our results for the $\ell_2$-norm are directly applicable.

In \cref{tab:additional_norm_0} and \cref{tab:addiitonal_norm_0_imagenet}, we show the effectiveness of our approach for $\ell_0$-norm bounded perturbations using the training and certification method from \citet{LeeYCJ19}.
For both, we use the default hyper-parameter $\alpha$ (which is not the same as the $\alpha$ in standard Randomized Smoothing from \citet{CohenRK19}), i.e. $\alpha=0.8$ for MNIST and $\alpha=0.2$ for \IN.
We observe that even an ensemble of just 3 models outperforms the individual models in every setting.

In \cref{tab:additional_norm_1}, we similarly show the effectiveness of our approach for $\ell_1$-norm bounded perturbations using training with uniform noise and the uniform noise based certification method from \citet{YangDHSR020} with $\sigma_{\epsilon}=\lambda=1.0$.
We observe again that even an ensemble of just 3 models outperforms the single model in every setting.

\subsection{Additional Computational Overhead Reduction Experiments}
\label{sec:additional_computational_overhead_reduction_experiments}

This section contains additional experiments for the computational overhead methods introduced in \cref{sec:adaptive}.
Whenever we use \certadp, we set $\beta = 0.001$.

\paragraph{Adaptive sampling and $K$-consensus}
\cref{tab:cifar_adapt_samp_cons} is an extensive version of \cref{tab:cifar_adapt_samp_cons_short}.
In \cref{tab:cifar10_adaptive_consensus_cohen}, we show the effect of adaptive sampling and \kcons for \cohen trained \RNB on \cifar, analogously to \cref{tab:cifar_adapt_samp_cons}.
Certifying an ensemble with \certadp and \kcons achieves approximately the same certified accuracy as \textsc{Certify} while the certification time is up to $67$ times faster.
In particular, for both training methods and for each radius, the certifying the ensemble of $k=10$ networks with \certadp and \kcons is faster than certifying an individual model with \textsc{Certify} while the certified accuracy is significantly improved in most cases.
In addition, we confirm our previous observation that the sample reduction is most prominent for small radii (for a given $\sigma_{\epsilon}$) while KCR increases with radius.
\begin{table}[t]
	\centering
	\small
	\centering
	\caption{Adaptive sampling ($\{n_j\} = \{100, 1'000, 10'000, 120'000\}$) and $5$-Consensus agg. on \cifar for $10$ \consistency \RNB. SampleRF and TimeRF are the reduction factors in comparison to standard sampling with $n=100'000$ (larger is better). ASR$_j$ is the \% of certification attempts returned in phase $j$. KCR is the \% of inputs for which only $K$ classifiers were evaluated.}%
	\label{tab:cifar_adapt_samp_cons}
	\scalebox{0.85}{
	\begin{tabular}{cccccccccc}
		\toprule
		Radius & $\sigma_{\epsilon}$ & $\text{acc}_{\text{cert}}$ [\%] & ASR$_1$ [\%] & ASR$_2$ [\%] & ASR$_3$ [\%] & ASR$_4$ [\%] & SampleRF & KCR [\%] & TimeRF \\
		\midrule
        0.25 & 0.25 & 70.4 & 84.8 & 10.2 & 4.2 & 0.8 & \textbf{55.24} & 39.3 & \textbf{67.16}\\
        0.50 & 0.25 & 60.6 & 20.8 & 69.8 & 6.4 & 3.0 & 18.09 & 57.3 & 25.07\\
        0.75 & 0.25 & 52.0 & 27.4 & 8.8 & 55.2 & 8.6 & 5.68 & 87.8 & 10.06\\
        1.00 & 0.50 & 38.6 & 41.2 & 47.0 & 7.8 & 4.0 & 14.79 & 78.3 & 24.01\\
        1.25 & 0.50 & 32.2 & 46.6 & 9.8 & 37.6 & 6.0 & 8.14 & 92.7 & 15.06\\
        1.50 & 0.50 & 26.2 & 50.0 & 11.2 & 29.6 & 9.2 & 6.41 & \textbf{97.5} & 12.31\\
        1.75 & 1.00 & 20.8 & 57.6 & 32.0 & 7.4 & 3.0 & 89.1 & 19.02 & 33.14\\
        2.00 & 1.00 & 17.6 & 61.8 & 27.4 & 8.0 & 2.8 & 91.0 & 19.93 & 35.26\\
        2.25 & 1.00 & 16.2 & 68.2 & 21.2 & 7.8 & 2.8 & 93.4 & 20.27 & 36.66\\
        2.50 & 1.00 & 14.6 & 69.6 & 7.8 & 20.6 & 2.0 & 91.4 & 19.38 & 34.45\\

		\bottomrule
	\end{tabular}

}
\vspace{-4mm}
\end{table}

\begin{table}[t]
	\centering
	\small
	\centering
	\caption{Adaptive sampling with $\{n_j\} = \{100, 1'000, 10'000, 120'000\}$ and $K$-consensus early stopping with $K=5$ on \cifar with an ensemble of $10$ \cohen trained \RNB. Sample and time gain are the reduction factor in comparison to standard sampling with $n=100'000$  (larger is better). ASR$_j$ is the percentage of samples returned in phase $j$. KCR is the percentage of samples for which only $K$ classifiers were evaluated.}%
	\label{tab:cifar10_adaptive_consensus_cohen}
	\scalebox{0.9}{
	\begin{tabular}{cccccccccccc}
	\toprule
	Radius & $\sigma_{\epsilon}$ & $\text{acc}_{\text{cert}}$ [\%] & ASR$_1$ & ASR$_2$ & ASR$_3$ & ASR$_4$ & SampleRF & KCR [\%] & TimeRF\\
	\midrule
	0.25 & 0.25 & 70.6 & 76.4 & 17.2 & 4.4 & 2.0 & 28.80 & 26.8 & 32.63\\
	0.50 & 0.25 & 55.4 & 30.4 & 59.2 & 7.8 & 2.6 & 19.80 & 63.5 & 28.73\\
	0.75 & 0.25 & 42.0 & 39.4 & 10.2 & 41.8 & 8.6 & 6.19 & 89.7 & 11.14\\
	1.00 & 0.50 & 30.2 & 53.4 & 31.6 & 10.8 & 4.2 & 13.89 & 77.5 & 22.60\\
	1.25 & 1.00 & 20.2 & 74.4 & 16.4 & 7.0 & 2.2 & 24.93 & 71.4 & 38.31\\
	1.50 & 1.00 & 16.6 & 66.8 & 26.0 & 5.4 & 1.8 & 29.34 & 75.7 & 45.80\\
	1.75 & 1.00 & 13.8 & 73.0 & 20.6 & 4.6 & 1.8 & 30.61 & 84.5 & 51.18\\
	2.00 & 1.00 & 11.2 & 78.4 & 14.6 & 5.4 & 1.6 & \textbf{32.97} & 88.5 & \textbf{56.80}\\
	2.25 & 1.00 & 9.4 & 82.4 & 8.0 & 6.8 & 2.8 & 21.32 & 92.7 & 38.60\\
	2.50 & 1.00 & 6.4 & 82.4 & 6.4 & 6.8 & 4.4 & 14.77 & \textbf{95.4} & 27.63\\	
	\bottomrule
\end{tabular}
}
\end{table}

\subsubsection{Additional Adaptive Sampling Experiments}
\label{sec:additional-adaptive-sampling-experiments}

This section contains other experiments with \certadp, highlighting various aspects of adaptive sampling for ensembles and also individual models.

\paragraph{Adaptive sampling for ensembles on \cifar}
For comparison with \cref{tab:cifar_adapt_samp_cons_short} (respectively \cref{tab:cifar_adapt_samp_cons} and \cref{tab:cifar10_adaptive_consensus_cohen}), where we show the combined effect of \kcons and adaptive sampling, we show the results for just applying adaptive sampling to the same setting in  \cref{tab:adaptive_ensemble}, using $\{n_j\} = \{100, 1'000, 10'000, 120'000\}$, $\sigma_{\epsilon} = 0.25$, and ensembles of 10 \consistency trained \RNB. 
We observe that the speed-ups for small radii are comparable to applying both \kcons and adaptive sampling, as only a few samples have to be evaluated. The additional benefit due to \kcons grows as the later certification stages are entered more often for larger radii, highlighting the complementary nature of the two methods.
Note that for radius $r = 0.75$, $100$ samples, and even $1'000$ are never sufficient for certification but can still yield early abstentions.
\begin{table}[t]
	\centering
	\small
	\centering
	\caption{Effect of adaptive sampling with $\{n_j\} = \{100, 1'000, 10'000, 120'000\}$ for ensembles of $10$ \cohen respectively \consistency trained \RNB on \cifar with $\sigma_{\epsilon} = 0.25$. ASA$_j$ and ASC$_j$ are the percentages of samples abstained from and certified in phase $j$, respectively}
	\label{tab:adaptive_ensemble}
	\scalebox{0.75}{
	\begin{tabular}{ccccccccccccc}
		\toprule
        Radius & Training & acc$_{\text{cert}}$ [\%] & ASA$_1$ & ASC$_1$ & ASA$_2$ & ASC$_2$ & ASA$_3$ & ASC$_3$ & ASA$_4$ & ASC$_4$ & SampleRF & TimeRF\\
        \midrule
        \multirow{2}{*}{0.25}
        & \cohen & 70.6 & 62.4 & 13.2 & 10.0 & 7.6 & 3.8 & 1.2 & 0.4 & 1.4 & 30.48 & 31.30\\
        & \consistency & 70.4 & 73.6 & 10.4 & 7.0 & 4.8 & 3.0 & 0.6 & 0.0 & 0.6 & \textbf{66.73} & \textbf{67.93}\\
        \midrule
        \multirow{2}{*}{0.75}
        & \cohen & 42.2 & 0.0 & 39.0 & 0.0 & 9.8 & 37.8 & 4.8 & 5.6 & 3.0 & 6.16 & 6.32\\
        & \consistency & 52.0 & 0.0 & 27.4 & 0.0 & 8.8 & 51.8 & 3.4 & 4.8 & 3.8 & 5.68 & 5.83\\
		\bottomrule
	\end{tabular}
}
\end{table}

\paragraph{Adaptive sampling for \IN ensembles}
In \cref{tab:IN_adaptive}, we demonstrate the applicability of our adaptive sampling approach to ensembles of \RNM on \IN.
In particular, we achieve sampling reduction factors of up to $42$ without incurring a significant accuracy penalty.
For a fixed $\sigma_{\epsilon}$, we empirically observe larger speed-ups at smaller radii.
This effect, however, depends on the underlying distribution of success probabilities and would be expected to invert for a model that has true success probabilities close to $p_A \approx 0.5$ for most samples. 
\begin{table}[t]
	\centering
	\small
	\centering
	\caption{Adaptive sampling with $\{n_j\} = \{100, 1'000, 10'000, 120'000\}$ on \IN with an ensemble of $3$ \consistency trained \RNM. SampleRF and TimeRF are the reduction factor in comparison certification with \textsc{Certify} with $n=100'000$. $ASR_j$ is the percentage of samples returned in phase $j$.}
	\label{tab:IN_adaptive}
	\scalebox{0.9}{
	\begin{tabular}{ccccccccc}
		\toprule
		Radius & $\sigma_{\epsilon}$ & $\text{acc}_{\text{cert}}$ [\%] & ASR$_1$& ASR$_2$ & ASR$_3$ & ASR$_4$ & SampleRF & TimeRF \\
        \midrule
		0.50 & 0.25 & 56.8 & 21.2 & 75.0 & 3.0 & 0.8 & \textbf{43.00} & \textbf{42.32}\\
		1.00 & 0.50 & 44.8 & 31.0 & 59.4 & 7.0 & 2.6 & 20.14 & 20.31\\
		1.50 & 0.50 & 38.4 & 40.2 & 7.6 & 46.2 & 6.0 & 7.57 & 7.74\\
		2.00 & 1.00 & 28.6 & 48.6 & 39.4 & 9.2 & 2.8 & 18.98 & 19.19\\
		2.50 & 1.00 & 23.2 & 55.8 & 8.0 & 31.8 & 4.4 & 10.49 & 10.70\\
		3.00 & 1.00 & 20.2 & 58.6 & 11.4 & 24.4 & 5.6 & 9.69 & 9.88\\
		3.50 & 1.00 & 16.4 & 65.4 & 7.4 & 3.2 & 24.0 & 3.12 & 3.20\\		
		\bottomrule
	\end{tabular}
}
\end{table}

\paragraph{Effect of sampling stage sizes}
Additionally, in \cref{tab:adaptive_single}, we investigate how the choice of the number of phases and the number of samples in each phase influence the sampling gain for various radii and models.
We observe that as few as $100$ samples can be sufficient to certify or abstain from up to $80\%$ of samples, making schedules with many phases attractive.
However, when the true success probability is close to the one required for certification, many samples are required to obtain sufficiently tight confidence bounds on the success probability to decide either way.
This leads to many samples being discarded for schedules with many phases.
Additionally, the higher per phase confidence required due to Bonferroni correction requires more samples to be drawn to be able to obtain the same maximum lower confidence bound to the success probability. 
This latter effect becomes especially pronounced for larger radii, where higher success probabilities are required.
\begin{table}[t]
	\centering
	\small
	\centering
	\caption{Effect of adaptive sampling for individual \RNB on \cifar at $\sigma_{\epsilon}=0.25$. ASA$_j$ and ASC$_j$ are the percentages of samples abstained from and certified in phase $j$, respectively. We compare multiple sampling count progressions $\{n_j\}$.}
	\label{tab:adaptive_single}
	\resizebox{0.97\columnwidth}{!}{
	\begin{tabular}{cccccccccccccccc}
		\toprule
      Radius & Training & $\{n_j\}$ & acc$_{\text{cert}}$ [\%] & ASA$_1$ & ASC$_1$ & ASA$_2$ & ASC$_2$  & ASA$_3$ & ASC$_3$ & ASA$_4$ & ASC$_4$  & ASA$_5$ & ASC$_5$ & SampleRF & TimeRF\\
      \midrule 
      \multirow{22.5}{*}{0.25}
      & \multirow{11}{*}{\cohen }
      & 100, 110'000 & 60.0 & 52.0 & 22.6 & 13.2 & 12.2 & 0.0 & 0.0 & 0.0 & 0.0 & 0.0 & 0.0 & 3.56 & 3.70\\
      & & 1'000, 110'000 & 60.0 & 62.0 & 30.2 & 3.2 & 4.6 & 0.0 & 0.0 & 0.0 & 0.0 & 0.0 & 0.0 & 10.34 & 10.45\\
      & & 10'000, 110'000 & 60.0 & 63.6 & 33.6 & 1.6 & 1.2 & 0.0 & 0.0 & 0.0 & 0.0 & 0.0 & 0.0 & 7.59 & 7.90\\
      & & 50'000, 110'000 & 60.2 & 64.8 & 34.0 & 0.4 & 0.8 & 0.0 & 0.0 & 0.0 & 0.0 & 0.0 & 0.0 & 1.95 & 1.98\\
      & & 100, 1'000, 116'000 & 60.0 & 51.8 & 21.4 & 10.2 & 9.2 & 3.0 & 4.4 & 0.0 & 0.0 & 0.0 & 0.0 & 11.06 & 11.18\\
      & & 100, 10'000, 116'000 & 60.2 & 53.2 & 21.0 & 10.6 & 12.4 & 1.4 & 1.4 & 0.0 & 0.0 & 0.0 & 0.0 & 16.61 & 16.77\\
      & & 1'000, 10'000, 116'000 & 60.0 & 61.4 & 30.2 & 1.8 & 3.2 & 2.0 & 1.4 & 0.0 & 0.0 & 0.0 & 0.0 & 17.01 & 17.21\\
      & & 30'000, 60'000 ,116'000 & 60.0 & 64.4 & 33.6 & 0.4 & 0.6 & 0.4 & 0.6 & 0.0 & 0.0 & 0.0 & 0.0 & 3.08 & 3.21\\
      & & 100, 1'000, 10'000, 120'000 & 60.0 & 53.6 & 21.8 & 7.8 & 8.2 & 2.0 & 3.6 & 1.8 & 1.2 & 0.0 & 0.0 & 20.40 & 21.22\\
      & & 30'000, 60'000, 90'000, 120'000 & 60.0 & 64.6 & 34.0 & 0.6 & 0.0 & 0.0 & 0.2 & 0.0 & 0.6 & 0.0 & 0.0 & 3.09 & 3.22\\
      & & 100, 800, 6'400, 24'000, 123'000 & 60.0 & 53.2 & 21.6 & 8.2 & 7.8 & 2.0 & 3.4 & 1.2 & 0.8 & 0.6 & 1.2 & \textbf{24.32} & \textbf{24.67}\\
		\cmidrule{2-2}
      & \multirow{11}{*}{\consistency }
      & 100, 110'000 & 66.0 & 68.6 & 16.2 & 8.0 & 7.2 & 0.0 & 0.0 & 0.0 & 0.0 & 0.0 & 0.0 & 5.92 & 6.05\\
      & & 1'000, 110'000 & 66.0 & 75.0 & 20.8 & 1.6 & 2.6 & 0.0 & 0.0 & 0.0 & 0.0 & 0.0 & 0.0 & 17.50 & 17.95\\
      & & 10'000 110'000 & 66.0 & 75.8 & 22.4 & 0.8 & 1.0 & 0.0 & 0.0 & 0.0 & 0.0 & 0.0 & 0.0 & 8.29 & 8.50\\
      & & 50'000, 110'000 & 66.0 & 76.2 & 22.6 & 0.4 & 0.8 & 0.0 & 0.0 & 0.0 & 0.0 & 0.0 & 0.0 & 1.95 & 1.99\\
      & & 100, 1'000, 116'000 & 66.0 & 68.0 & 14.8 & 7.0 & 6.0 & 1.6 & 2.6 & 0.0 & 0.0 & 0.0 & 0.0 & 19.09 & 19.57\\
      & & 100, 10'000, 116'000 & 66.0 & 67.8 & 15.4 & 8.2 & 6.8 & 0.6 & 1.2 & 0.0 & 0.0 & 0.0 & 0.0 & 25.23 & 25.85\\
      & & 1'000, 10'000, 116'000 & 66.0 & 74.8 & 20.6 & 1.2 & 1.6 & 0.6 & 1.2 & 0.0 & 0.0 & 0.0 & 0.0 & 27.44 & 28.02\\
      & & 30'000, 60'000, 116'000 & 66.0 & 76.0 & 22.4 & 0.2 & 0.4 & 0.4 & 0.6 & 0.0 & 0.0 & 0.0 & 0.0 & 3.11 & 3.19\\
      & & 100, 1'000, 10'000, 120'000 & 66.0 & 69.0 & 15.8 & 5.6 & 5.2 & 1.4 & 1.2 & 0.6 & 1.2 & 0.0 & 0.0 & 33.91 & 34.67\\
      & & 30'000, 60'000, 90'000, 120'000 & 66.0 & 76.0 & 22.4 & 0.4 & 0.4 & 0.2 & 0.2 & 0.0 & 0.4 & 0.0 & 0.0 & 3.10 & 3.19\\
      & & 100, 800, 6'400, 24'000, 123'000 & 66.0 & 68.0 & 14.8 & 6.6 & 6.0 & 1.2 & 1.4 & 0.6 & 0.0 & 0.2 & 1.2 & \textbf{35.32} & \textbf{36.18}\\
      \cmidrule{1-2}
      \multirow{22.5}{*}{0.75}
      & \multirow{11}{*}{\cohen}
      & 100, 110'000 & 30.6 & 0.0 & 51.2 & 30.8 & 18.0 & 0.0 & 0.0 & 0.0 & 0.0 & 0.0 & 0.0 & 1.86 & 1.88\\
      & & 1'000, 110'000 & 30.4 & 0.0 & 61.0 & 30.6 & 8.4 & 0.0 & 0.0 & 0.0 & 0.0 & 0.0 & 0.0 & 2.27 & 2.34\\
      & & 10'000, 110'000 & 30.4 & 24.6 & 65.4 & 6.0 & 4.0 & 0.0 & 0.0 & 0.0 & 0.0 & 0.0 & 0.0 & 4.74 & 4.89\\
      & & 50'000, 110'000 & 30.4 & 30.2 & 66.8 & 0.4 & 2.6 & 0.0 & 0.0 & 0.0 & 0.0 & 0.0 & 0.0 & 1.87 & 1.93\\
      & & 100, 1'000, 116'000 & 30.4 & 0.0 & 53.8 & 0.0 & 7.4 & 30.6 & 8.2 & 0.0 & 0.0 & 0.0 & 0.0 & 2.19 & 2.25\\
      & & 100, 10'000, 116'000 & 30.8 & 0.0 & 51.6 & 25.0 & 13.4 & 6.0 & 4.0 & 0.0 & 0.0 & 0.0 & 0.0 & 6.02 & 6.02\\
      & & 1'000, 10'000, 116'000 & 30.4 & 0.0 & 60.8 & 25.6 & 3.8 & 5.0 & 4.8 & 0.0 & 0.0 & 0.0 & 0.0 & 6.11 & 6.14\\
      & & 30'000, 60'000, 116'000 & 30.6 & 30.2 & 66.2 & 0.4 & 0.8 & 0.2 & 2.2 & 0.0 & 0.0 & 0.0 & 0.0 & 2.86 & 2.87\\
      & & 100, 1'000, 10'000, 120'000 & 30.2 & 0.0 & 48.8 & 0.0 & 11.6 & 25.6 & 3.8 & 4.8 & 5.4 & 0.0 & 0.0 & 5.92 & 6.10\\
      & & 30'000, 60'000, 90'000, 120'000 & 30.4 & 29.2 & 66.0 & 1.0 & 0.8 & 0.2 & 0.6 & 0.2 & 2.0 & 0.0 & 0.0 & 2.61 & 2.69\\
      & & 100, 800, 6'400, 24'000, 123'000 & 30.2 & 0.0 & 48.8 & 0.0 & 10.8 & 21.2 & 3.6 & 7.2 & 2.8 & 2.0 & 3.6 & \textbf{7.24} & \textbf{7.49}\\      
   	\cmidrule{2-2}
      & \multirow{11}{*}{\consistency }
      & 100, 110'000 & 46.0 & 0.0 & 34.6 & 48.8 & 16.6 & 0.0 & 0.0 & 0.0 & 0.0 & 0.0 & 0.0 & 1.39 & 1.42\\
      & & 1'000, 110'000 & 46.2 & 0.0 & 41.2 & 49.0 & 9.8 & 0.0 & 0.0 & 0.0 & 0.0 & 0.0 & 0.0 & 1.52 & 1.55\\
      & & 10'000, 110'000 & 46.2 & 43.8 & 45.8 & 5.2 & 5.2 & 0.0 & 0.0 & 0.0 & 0.0 & 0.0 & 0.0 & 4.65 & 4.70\\
      & & 50'000, 110'000 & 46.4 & 48.6 & 48.2 & 0.6 & 2.6 & 0.0 & 0.0 & 0.0 & 0.0 & 0.0 & 0.0 & 1.87 & 1.89\\
      & & 100, 1'000, 116'000 & 46.4 & 0.0 & 33.8 & 0.0 & 8.8 & 49.2 & 8.2 & 0.0 & 0.0 & 0.0 & 0.0 & 1.48 & 1.50\\
      & & 100, 10'000, 116'000 & 46.2 & 0.0 & 34.4 & 43.6 & 12.2 & 5.4 & 4.4 & 0.0 & 0.0 & 0.0 & 0.0 & 5.52 & 5.61\\
      & & 1'000, 10'000, 116'000 & 47.0 & 0.0 & 41.0 & 44.2 & 5.2 & 5.6 & 4.0 & 0.0 & 0.0 & 0.0 & 0.0 & 5.52 & 5.60\\
      & & 30'000, 60'000, 116'000 & 46.4 & 47.6 & 47.8 & 1.2 & 0.4 & 0.4 & 2.6 & 0.0 & 0.0 & 0.0 & 0.0 & 2.75 & 2.79\\
      & & 100, 1'000, 10'000, 120'000 & 46.4 & 0.0 & 32.4 & 0.0 & 8.2 & 44.2 & 5.2 & 5.0 & 5.0 & 0.0 & 0.0 & 5.32 & 5.40\\
      & & 30'000, 60'000, 90'000, 120'000 & 46.8 & 47.6 & 47.0 & 1.0 & 1.4 & 1.0 & 0.0 & 0.0 & 2.0 & 0.0 & 0.0 & 2.60 & 2.64\\
      & & 100, 800, 6'400, 24'000, 123'000 & 46.4 & 0.0 & 33.6 & 0.0 & 7.2 & 41.0 & 4.4 & 6.0 & 1.4 & 2.2 & 4.2 & \textbf{6.37} & \textbf{6.47}\\
		\bottomrule
	\end{tabular}
}
\end{table}

\paragraph{Ablation of $\boldsymbol{\alpha}$ and $\boldsymbol{\beta}$}
\begin{figure}[t]
	\centering
	\makebox[\textwidth][c]{
		\centering
		\begin{subfigure}[t]{0.33\textwidth}
			\centering
			\includegraphics[width=\textwidth]{./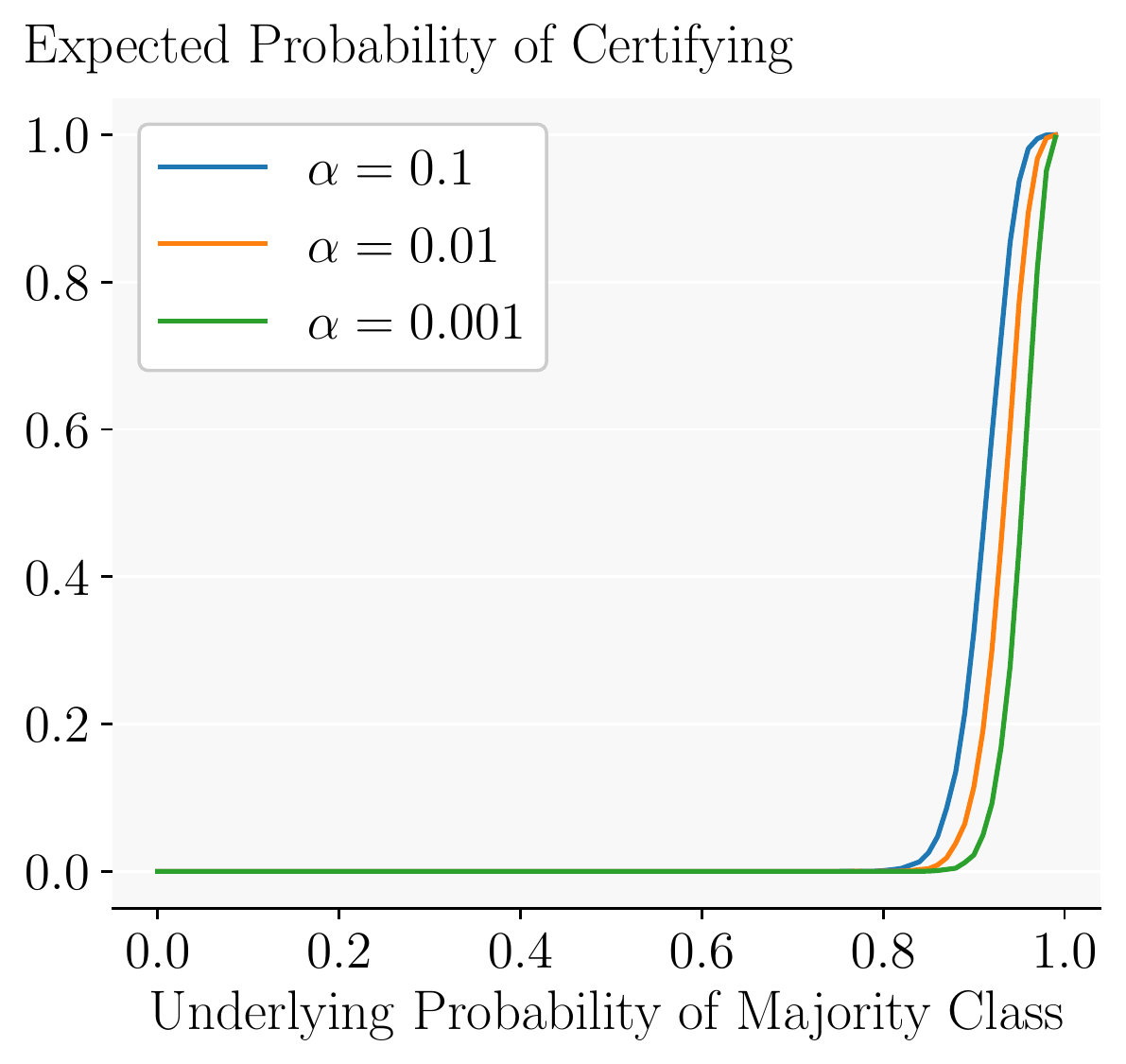}
			\vspace{-6mm}
			\caption{}
			\label{fig:adaptive_ablation_alpha}
		\end{subfigure}
		\hfill
		\begin{subfigure}[t]{0.33\textwidth}
			\centering
			\includegraphics[width=\textwidth]{./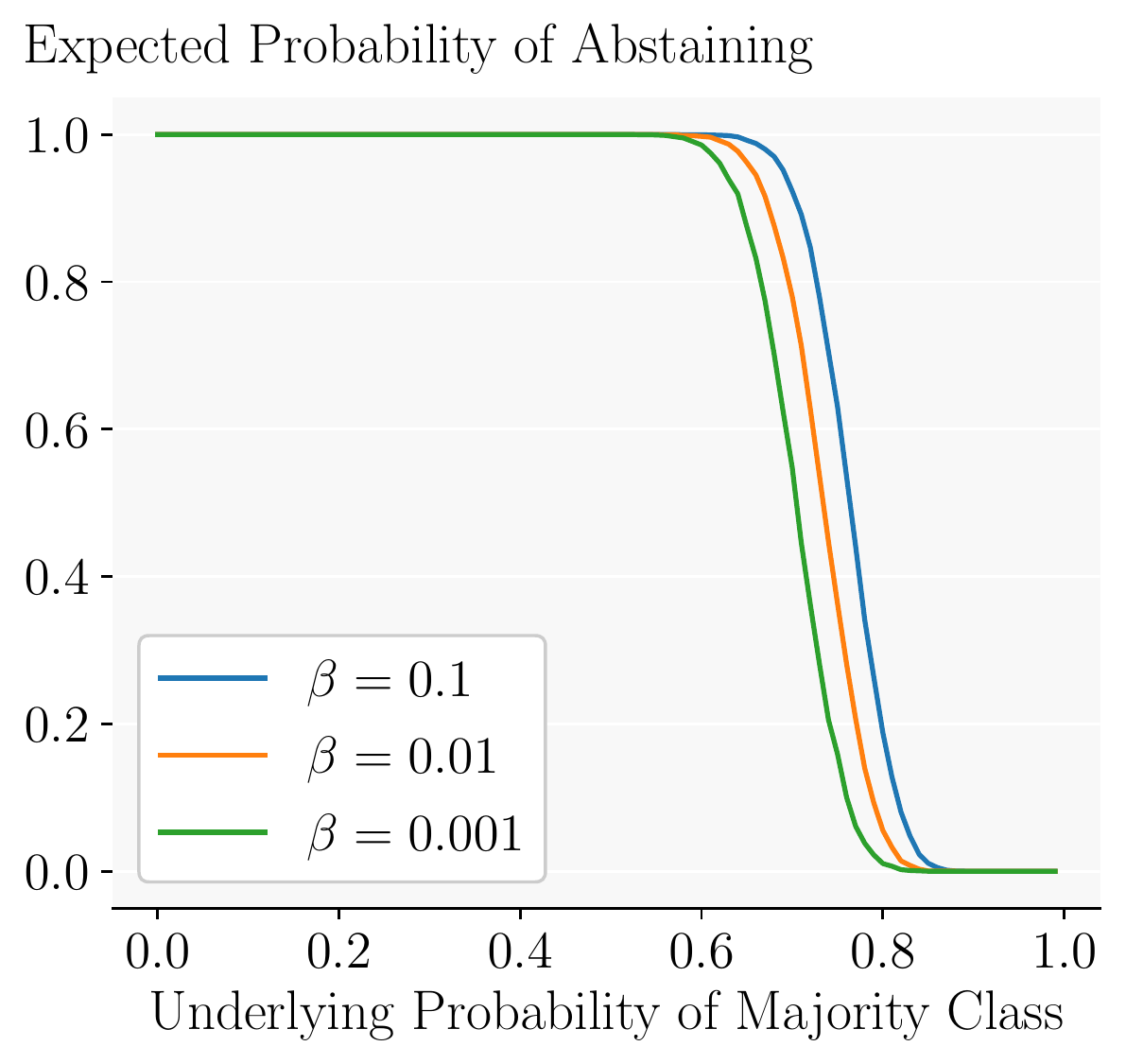}
			\vspace{-6mm}
			\caption{}
			\label{fig:adaptive_ablation_beta}
		\end{subfigure}
		\hfill
		\begin{subfigure}[t]{0.33\textwidth}
			\centering
			\includegraphics[width=\textwidth]{./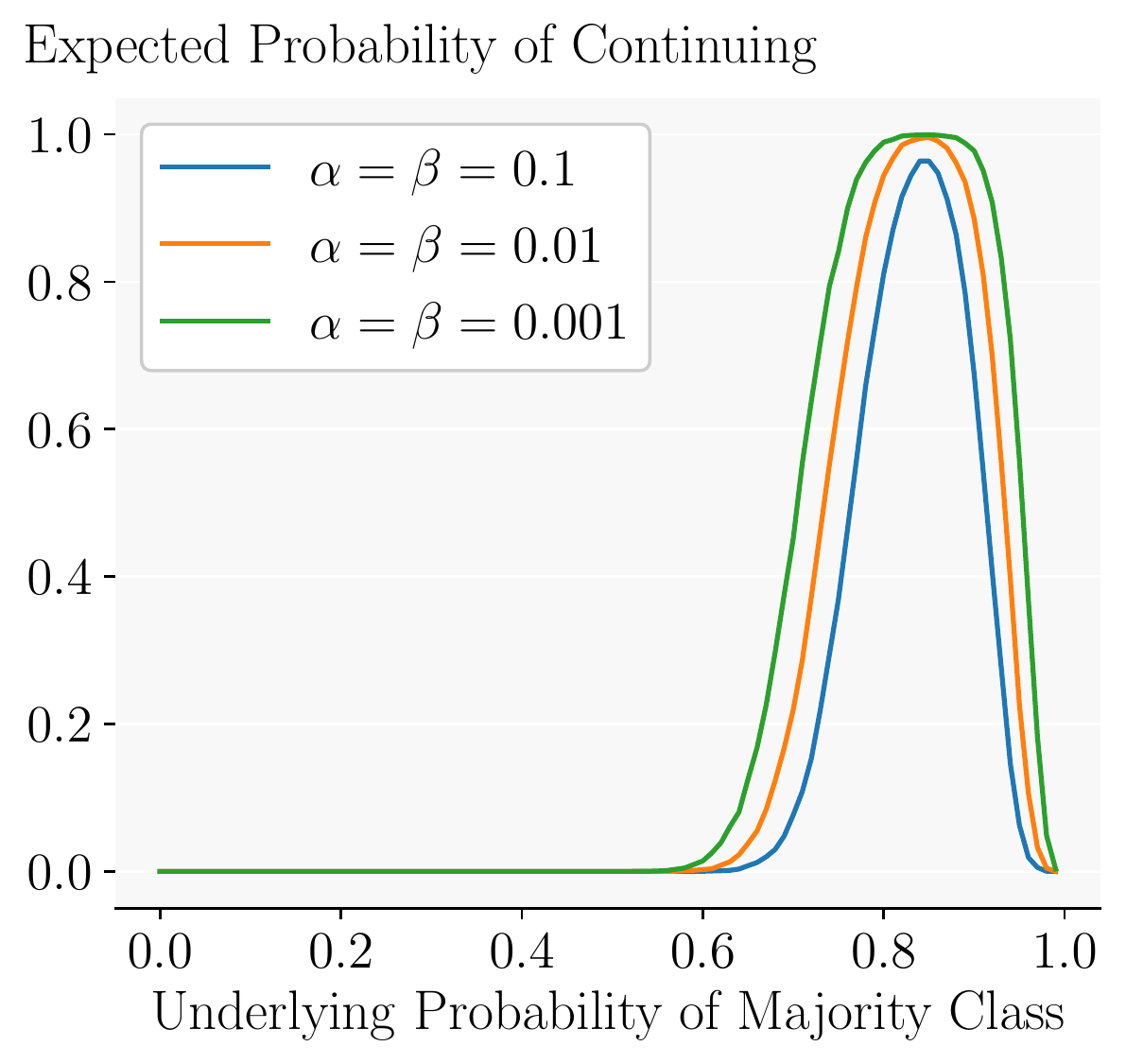}
			\vspace{-6mm}
			\caption{}
			\label{fig:adaptive_ablation_cont}
		\end{subfigure}
	}
	\vspace{-2mm}
	\caption{Expected probability of certifying (a), abstaining (b) or continuing to the next phase (c) of \certadp after the first phase, conditioned on the underlying probability of the majority class. We consider various $\alpha$ and $\beta$. We are in the setting with $r = \sigma_{\epsilon} = 0.25$ with stage sizes $n_j = \{100, 1'000, 10'000, 120'000\}$, i.e. the first phase whose simulated outcomes we observe here has $100$ samples.}
	\label{fig:adaptive_ablation}
	\vspace{-4.5mm}
\end{figure}

In \cref{fig:adaptive_ablation}, we illustrate the effect of different $\alpha$ and $\beta$ on the first phase of certification with \certadp where we consider just 100 perturbations.
Increasing $\alpha$ reduces the confidence $1-\alpha$ required for certification and hence makes early certification more likely for the same underlying true success probabilities (see \cref{fig:adaptive_ablation_alpha}).
Increasing $\beta$ reduces the confidence $1-\beta$ that certification will not be possible required for early termination and hence makes early termination more likely for the same underlying true success probabilities (see \cref{fig:adaptive_ablation_beta}).
Overall, we observe that evaluating even only $100$ perturbations allows us to, with high probability, abstain from or certify samples with a wide range of true success probabilities. We only have to continue with the costly certification process for samples in a narrow band of true success probabilities (see \cref{fig:adaptive_ablation_cont}), greatly reducing the expected certification cost.

\subsubsection{Additional K-Consensus Experiments}
\label{sec:additional-k-consensus-experiments}

In \cref{tab:k_cons_long}, we present a more detailed version of \cref{tab:k_cons_short}.
We emphasize that for $K=5$ and $K=10$ for \RNB and \RNS, respectively, we achieve the same ACR as with the full ensemble while reducing the certification time by a factor of $1.59$ and $2.01$, respectively.

\begin{table}[t]
	\centering
	\small
	\centering
	\caption{Effect of \kcons on \consistency trained ensembles of 10 \RNB and 50 \RNS on \cifar at $\sigma_{\epsilon}=0.25$.}
    \label{tab:k_cons_long}
	\scalebox{0.9}{
	\begin{tabular}{ccccccccc}
		\toprule
		\multirow{2.6}{*}{Architecture} & \multirow{2.6}{*}{$K$} & \multirow{2.6}{*}{ACR} & \multicolumn{4}{c}{Radius r} & \multirow{2.6}{*}{TimeRF} & \multirow{2.6}{*}{KCR [\%]} \\
		\cmidrule(lr){4-7}
		& & & 0.0 & 0.25 & 0.50 & 0.75 & & \\
        \midrule
        \multirow{4}{*}{\RNB} &2 & 0.576 & 77.2 & 70.2 & 60.0 & 50.4 & 3.25 & 85.8\\
        &3 & 0.581 & 77.0 & 70.0 & 60.6 & 51.6 & 2.29 & 79.7\\
        &5 & 0.583 & 76.8 & 70.4 & 60.4 & 51.6 & 1.59 & 74.2\\
        &10 & 0.583 & 76.8 & 70.4 & 60.4 & 51.6 & 1.00 & 0.0\\
		\cmidrule{1-1}
		\multirow{5}{*}{\RNS}     &   2 & 0.544 & 72.2 & 65.2 & 57.0 & 48.4 & 6.50 & 87.7\\
		& 3 & 0.549 & 72.6 & 65.0 & 57.4 & 50.0 & 4.41 & 82.4\\
 		& 5 & 0.550 & 72.8 & 65.0 & 57.0 & 50.2 & 2.99 & 76.4\\
  		& 10 & 0.551 & 73.0 & 64.8 & 57.0 & 50.2 & 2.01 & 69.8\\
   		& 50 & 0.551 & 73.0 & 64.8 & 57.0 & 50.2 & 1.00 & 0.0\\
		\bottomrule
	\end{tabular}
}
\end{table}

}{}

\message{^^JLASTPAGE \thepage^^J}

\end{document}